\theoremstyle{plain}
\newtheorem{theorem}{Theorem}[section]
\newtheorem{proposition}[theorem]{Proposition}
\newtheorem{lemma}[theorem]{Lemma}
\newtheorem{corollary}[theorem]{Corollary}
\theoremstyle{definition}
\newtheorem{definition}[theorem]{Definition}
\newtheorem{assumption}[theorem]{Assumption}
\theoremstyle{remark}
\newtheorem{remark}[theorem]{Remark}
\def\PP{\mathbb{P}}
\icmltitlerunning{Preference-based Reinforcement Learning}
\begin{document}

\twocolumn[
\icmltitle{Human-in-the-loop: Provably Efficient Preference-based Reinforcement Learning with General Function Approximation}



\icmlsetsymbol{equal}{*}

\begin{icmlauthorlist}
\icmlauthor{Xiaoyu Chen}{equal,af1}
\icmlauthor{Han Zhong}{equal,af2,af22}
\icmlauthor{Zhuoran Yang}{af3}
\icmlauthor{Zhaoran Wang}{af4}
\icmlauthor{Liwei Wang}{af1,af2}

\end{icmlauthorlist}

\icmlaffiliation{af1}{Key Laboratory of Machine Perception, MOE,
School of Artificial Intelligence, Peking University}
\icmlaffiliation{af2}{Center for Data Science, Peking University}
\icmlaffiliation{af22}{Peng Cheng Laboratory}
\icmlaffiliation{af3}{Department of Statistics and Data Science, Yale University}
\icmlaffiliation{af4}{Department of Industrial Engineering
and Management Sciences, Northwestern University}

\icmlcorrespondingauthor{Xiaoyu Chen}{cxy30@pku.edu.cn}
\icmlcorrespondingauthor{Han Zhong}{hanzhong@stu.pku.edu.cn}
\icmlcorrespondingauthor{Zhuoran Yang}{zhuoran.yang@yale.edu}
\icmlcorrespondingauthor{Zhaoran Wang}{zhaoranwang@gmail.com}
\icmlcorrespondingauthor{Liwei Wang}{wanglw@cis.pku.edu.cn}

\icmlkeywords{Preference-based Reinforcement Learning}

\vskip 0.3in
]



\printAffiliationsAndNotice{\icmlEqualContribution} 

\begin{abstract}
We study human-in-the-loop reinforcement learning (RL) with trajectory preferences, where instead of receiving a numeric reward at each step, the  agent only receives preferences over trajectory pairs from a human overseer. 
The goal of the agent is to learn the optimal policy which is most preferred by the human overseer. 
Despite the empirical successes, the theoretical understanding of preference-based RL (PbRL) is only limited to the tabular case.  
In this paper, we propose the first optimistic model-based algorithm for PbRL with general function approximation, which estimates the model using value-targeted regression and calculates the exploratory policies by solving an optimistic planning problem. Our algorithm achieves the regret of $\tilde{O} (\operatorname{poly}(d H) \sqrt{K} )$, where $d$ is the complexity measure of the transition and preference model depending on the Eluder dimension and log-covering numbers, $H$ is the planning horizon,  $K$ is the number of episodes, and $\tilde O(\cdot)$ omits logarithmic terms.
Our lower bound indicates that our algorithm is near-optimal when specialized to the linear setting. Furthermore, we extend the PbRL problem by formulating a novel problem called RL with $n$-wise comparisons, and provide the first sample-efficient algorithm for this new setting. To the best of our knowledge, this is the first theoretical result for PbRL with (general) function approximation.
\end{abstract}

\section{Introduction}
\label{sec: introduction}
Reinforcement learning (RL) is concerned with sequential decision-making problems in which the agent interacts with the environment to maximize its cumulative rewards. This framework has achieved tremendous successes in various fields such as Atari games \citep{mnih2013playing}, Go \citep{silver2017mastering}, and StarCraft~\citep{vinyals2019grandmaster}. While these empirical successes are encouraging, one limitation of the standard RL paradigm is that the learning algorithms and the policies crucially depend on the prior knowledge encoded in the definition of the reward function. In many real-world applications such as autonomous driving, healthcare, and robotics, reward functions might not be readily available or difficult to design, which leads to well-known challenges such as reward shaping~\citep{ng1999policy} and reward hacking~\citep{amodei2016concrete,berkenkamp2021bayesian}.

If we have enough demonstrations of the desired task, one possible solution to address the above problems is to extract a reward function using inverse reinforcement learning~\citep{ng2000algorithms,abbeel2004apprenticeship}. This reward function can be further used to train an agent with reinforcement learning algorithms. More directly, we can use imitation learning~\citep{ho2016generative,hussein2017imitation,osa2018algorithmic} to clone the demonstrated behavior. However, these approaches are not applicable to situations where the demonstration data from experts are expensive to obtain, or behaviors are difficult for humans to demonstrate.

Another popular alternative to handle the lack of reward functions is called Preference-based Reinforcement Learning (PbRL)~\citep{busa2014preference,wirth2017survey}. In PbRL, instead of observing the reward information on the encountered state-action pairs, the agent only receives 1 bit preference feedback over a trajectory pair from an expert or a human overseer. Such preference feedback is often more natural and straightforward to specify in many RL applications, especially those involving human evaluations. This learning paradigm has been widely applied to multiple areas, including robot training~\citep{jain2013learning,jain2015learning,christiano2017deep}, game playing~\citep{wirth2012first,wirth2014learning} and clinical trials~\citep{zhao2011reinforcement}.

Despite its promising application in various areas, the theoretical understanding of PbRL is limited to only the tabular RL setting. \citet{novoseller2020dueling} proposes the Double Posterior Sampling method using Bayesian linear regression with an asymptotic regret sublinear in $T$ (the number of time steps). \citet{xu2020preference} presents the first finite-time analysis for PbRL problems with near-optimal sample complexity bounds. \citet{pacchiano2021dueling} studies the regret minimization problem for PbRL with linearly-parameterized preference function. However, all the previous algorithms are restricted to the tabular setting, and their complexity bounds scale polynomial dependence on the cardinality of the state-action space. Therefore, their algorithms can fail in the more practical scenario where the state space is extremely large.


Recently, there have been tremendous results studying standard RL problem with general function approximation (e.g. ~\citet{du2019provably,yang2019sample,ayoub2020model,wang2020reinforcement,jin2020provably,zanette2020learning,wang2020reward,zhou2021provably, chen2021near, zhou2021nearly}). However, their algorithms cannot be directly applied to PbRL setting due to the following two reasons. Firstly, most of their algorithms estimate the value function by utilizing the Bellman update with general function approximation. However, we cannot estimate the value of certain policies since the reward values are hidden and unidentifiable up to shifts in rewards. Secondly, since the preference feedback in PbRL depends on the utility of the whole trajectories and can be even non-Markovian, the optimal policy for PbRL problems can be possibly history-dependent. This violates the fundamental requirement of the Markovian policy class in the standard RL setting.


In this work, we tackle the regret minimization problem for preference-based reinforcement learning with general function approximation. Specifically, we study the PbRL problem where both the unknown transition model and the unknown preference function are known to belong to given function spaces. The function spaces are general sets of functions, which may be either finitely parameterized or nonparametric. This setting is more general than the previous theoretical results for PbRL~\citep{novoseller2020dueling,xu2020preference,pacchiano2021dueling}. Our contributions are summarized as follows:
\begin{itemize}
    \item We propose a statistically efficient algorithm called Preference-based Optimistic Planning (PbOP) for PbRL with general function approximation. We prove that the regret of our algorithm is $\tilde{O}\left(\operatorname{poly}(d H) \sqrt{K}\right)$, where $d$ is the complexity measure of the transition and preference model depending on the Eluder dimension~\citep{russo2013eluder} and log-covering numbers, H is the planning horizon, and $K$ is the number of episodes. Additionally, we find that our algorithm can be almost directly applied to a setting called RL with once-per-episode feedback~\citep{efroni2020reinforcement,chatterji2021theory}, which is another reinforcement learning problem dealing with the lack of reward functions.
    \item We present a reduction from the setting of RL with once-per-episode feedback to PbRL. When specialized to the linear case, we prove a nearly-matching lower bound for PbRL based on this reduction. The lower bound indicates that our algorithm is near-optimal in the case of linear function approximation.
    \item We formulate a novel setting called RL with $n$-wise comparisons to cover situations where multiple trajectories are sampled and compared with each other in each episode. This setting is more general than the standard PbRL setting and covers many real situations including robotics and clinical trails. Based on our PbOP algorithm, we also propose an algorithm with near-optimal regret.
\end{itemize}
\section{Related Work}

\textbf{Preference-based RL}\ We refer readers to \citet{wirth2017survey} for an overview of Preference-based RL. Overall, there are three different types of preference feedback in the PbRL literature. Firstly, the preferences can be defined on the action space where the labeler tells which action is better for a given state. Secondly, state preference  determines the preferred state between state pairs, which indicates that there is an action in the preferred state that is better than all actions available in the other state. Lastly, trajectory preference compares between trajectory pairs and specifies that a trajectory should be preferred over the other ones. Trajectory preference is the most general form of preference-based feedback, which is also the main focus of this work. As discussed in the introduction, previous theoretical results studying PbRL with trajectory feedback mainly focus on the tabular RL setting with finite state and action space~\citep{novoseller2020dueling,xu2020preference,pacchiano2021dueling}. 
Besides PbRL, preference-based learning has also been well-explored in bandit setting 
under the notion of ``dueling bandits''~\citep{yue2012k,falahatgar2017maxing,falahatgar2017maximum,busa2018preference,xu2020zeroth,busa2018preference}, which can be regarded as a special case of PbRL with single state and horizon $H=1$.

\textbf{RL with Function Approximation}\ Recently, there are a large number of theoretical results about provably efficient exploration in the standard RL problem with function approximation, The most basic and frequently explored setting is RL with linear function approximation.  See e.g., \citet{du2019provably,yang2019sample,jin2020provably,cai2020provably,zanette2020learning,wang2020reward, zhou2021nearly,zhou2021provably} and references therein. Beyond linear setting, there are also results studying RL with general function approximation \citep{jiang2017contextual, wang2020reinforcement,kong2021online,foster2020instance,jin2021bellman,du2021bilinear}. Our work is mostly relevant to previous works studying provably efficient model-based RL with general function approximation. For example, \citet{osband2014model} makes explicit model-based assumption that the transition operator and the reward function lie in a given function class, and analyse the regret of Thompson sampling when applied to RL with general function approximation. \citet{ayoub2020model} proposes an algorithm for episodic model-based RL based on value-targeted regression. Recently, \citet{chen2021understanding} extends the model-based algorithm for episodic RL to the infinite-horizon setting.

\textbf{RL with Once-per-episode Feedback} \ RL with once-per-episode feedback is another reinforcement learning paradigm to deal with the lack of a reward function in various real-world scenarios, in which the agent only receives non-Markovian feedback based on the whole trajectory at the end of an episode. \citet{efroni2020reinforcement} firstly studies the setting with the assumption of inherent Markovian rewards. They propose a hybrid optimistic-Thompson Sampling approach with a $\sqrt{K}$ regret. \citet{chatterji2021theory} removes the Markovian rewards assumption and provide optimistic algorithms based on the well-known UCBVI algorithm~\citep{azar2017minimax}. Though aimed to tackle the similar problem, the setting of RL with once-per-episode feedback is relatively independently studied compared with the PbRL, and this is the first work that points out the connections between two different settings.

\section{Preliminaries} \label{sec: preliminaries}


Throughout this paper, we use the following notations. For any positive integer $n$, we use $[n]$ to denote $\{1, 2, \cdots, n\}$. We denote by $\Delta(\mathcal{A})$ the set of probability distributions on a set $\caA$.

\subsection{PbRL with Trajectory Preferences}
We study the episodic finite horizon Markov decision process (MDP), which is defined by a tuple $(\mathcal{S}, \mathcal{A}, H, \mathbb{P})$, where $\mathcal{S}$ and $\mathcal{A}$ are state and action spaces, respectively, $H$ is the horizon of the MDP, and $\mathbb{P}: \mathcal{S} \times \mathcal{A} \rightarrow \Delta(\mathcal{S})$ is the transition kernel. In the episodic setting, the agent interacts with the environment for $K$ episodes. Each episode consists of $H$ steps. For ease of presentation, we assume the initial state of each episode is a fixed state $s_1 \in \mathcal{S}$. We remark that this setting can be generalized to the setting that the initial state is sampled from a fixed distribution.  At the beginning of the $k$-th episode, the agent determines two policies $(\pi_{k, 1}, \pi_{k,2})$. After executing these two policies, the agent obtains two trajectories $\{\tau_{k, i} = (s_{k,1, i}, a_{k, 1, i}, s_{k, 2, i}, a_{k, 2, i}, \cdots, s_{k, H, i}, a_{k, H, i})\}_{i = 1, 2}$. In PbRL, unlike the standard RL where the agent can receive reward signals, the agent can only obtain the preference $o_k$ between two trajectories $(\tau_{k,1}, \tau_{k,2})$. Here $o_k$ is a Bernoulli random variable with $\Pr(o_k = 1) = \Pr(\tau_{k, 1} >\tau_{k, 2})$. For ease of presentation, we denote $\mathbb{T}(\tau_1,\tau_2) = \Pr(\tau_1 >\tau_2)$ for any two trajectories $\tau_1$ and $\tau_2$.

We define $\Pi$ to be the set containing all  history-dependent policies, and we use $\operatorname{Traj}$ to denote the set of $H$-step trajectories. 
For any transition $\mathbb{P}$ and policy $\pi$, we also use $\tau \sim (\mathbb{P},\pi)$ to denote that the trajectory $\tau$ is sampled using policy $\pi$ from the MDP with transition $\mathbb{P}$. 
With a slight abuse of notation, we define $\mathbb{T}(\pi_1,\pi_2) = \mathbb{E}_{\tau_{1} \sim (\mathbb{P}, \pi_{1}), \tau_2 \sim ( \mathbb{P}, \pi_2)}\mathbb{T}(\tau_1,\tau_{2})$. Throughout this paper, we make the following assumption.
\begin{assumption} \label{assumption: definition of pistar}
There exists a policy $\pi^*$, such that $\mathbb{T}(\pi^*,\pi_{0}) \geq \frac{1}{2}, \forall \pi_0 \in \Pi$.
\end{assumption}

This is an extension of the optimal-arm assumption in dueling bandits~\citep{busa2018preference}. It is also more general than Assumption 1 in \cite{xu2020preference}.

We study the regret minimization problem, where the regret is defined as: 
$Reg(K) = \sum_{k=1}^{K} \sum_{i=1}^2 \left(\mathbb{T}(\pi^*,\pi_{k,i}) -\frac{1}{2}\right)$.
By Assumption~\ref{assumption: definition of pistar}, we have the regret is non-negative, and our goal is to design algorithms with sub-linear regret guarantees.

\subsection{General Function Approximation}
In this subsection, we introduce  notions that can characterize the complexity of the function class. 

\textbf{Covering Number} \ When the function class has infinite elements, we usually use the covering number to capture the complexity.

\begin{definition}[Covering Number]
      The $\epsilon$-covering number of a set $\mathcal{F}$ under metric $d$, denoted as $\mathcal{N}(\mathcal{F}, \epsilon, d)$, is the minimum integer $m$ such that there exists a subset $\mathcal{F}' \subset \mathcal{F}$ with $|\mathcal{F}| = m$, and for any $f \in \mathcal{F}$, there exists some $f' \in \mathcal{F}'$ satisfying $d(f, f') \le \epsilon$.
\end{definition}

\textbf{Eluder Dimension} \
We use the concept of Eluder dimension introduced by \citet{russo2013eluder} to characterize the complexity of different function classes in RL.
\begin{definition}[$\alpha$-independent]
      Let $\mathcal{F}$ be a function class defined in $\mathcal{X}$, and $\{x_1, x_2, \cdots, x_n\} \in \mathcal{X}$. We say $x \in \mathcal{X}$ is $\alpha$-independent of $\{x_1, x_2, \cdots, x_n\}$ with respect to $\mathcal{F}$ if there exists $f_1, f_2 \in \mathcal{F}$ such that $\sqrt{\sum_{i = 1}^n(f_1(x_i) - f_2(x_i))^2} \le \alpha$, but $f_1(x) - f_2(x) \ge \alpha$.
\end{definition}

\begin{definition}[Eluder Dimension]
      Suppose $\mathcal{F}$ is a function class defined in $\mathcal{X}$, the $\alpha$-Eluder dimension is the longest sequence $\{x_1, x_2, \cdots, x_n\} \in \mathcal{X}$ such that there exists $\alpha' \ge \alpha$ where $x_i$ is $\alpha'$-independent of $\{x_1, \cdots, x_{i-1}\}$ for all $i \in [n]$.
\end{definition}

\textbf{Preference Function} \
We assume the function $\mathbb{T}(\tau_1,\tau_2)$ belongs to the function space $\caF_{\mathbb{T}}$, which is defined as
\begin{align}
    \caF_{\mathbb{T}} = \left\{f(\tau_1,\tau_2) \in [0,1], f(\tau_1, \tau_2) + f(\tau_2, \tau_1) = 1\right\}.
\end{align}
We assume the $\alpha$-Eluder dimension of the function class $\caF_{\mathbb{T}}$ is bounded by $d_{\mathbb{T}}$. Here $\alpha$ is a parameter which will be specified later.

Following the analysis by \citet{russo2013eluder}, we can show that the function apace $\caF_{\mathbb{T}}$ has bounded Eluder dimension in linear and generalized linear cases.
\begin{remark}[Linear Preference Models]
\label{remark: linear preference}
Consider the case of $d$-dimensional linear preference models $f(\tau_1,\tau_2) = \psi(\tau_1,\tau_2)^{\top} \theta$ where $\psi: \operatorname{Traj} \times \operatorname{Traj} \rightarrow \mathbb{R}^d$ is a known feature map satisfying $\|\psi(\tau_1,\tau_2)\|_2 \leq L$ and $\theta \in \mathbb{R}^d$ is an unknown parameter with $\|\theta\|_2 \leq S$. Then the $\alpha$-Eluder dimension of $\caF_{\mathbb{T}}$ is at most $O(d\log(LS/\alpha))$.
\end{remark}

\begin{remark}[Generalized Linear Preference Models]
\label{remark: generalized linear preference}
For the case of $d$-dimensional generalized linear models $f(\tau_1,\tau_2) = g(\langle\phi(\tau_1,\tau_2), \theta\rangle)$ where $g$ is an increasing Lipschitz continuous function, $\psi: \operatorname{Traj} \times \operatorname{Traj} \rightarrow \mathbb{R}^d$ is a known feature map satisfying $\|\psi(\tau_1,\tau_2)\|_2 \leq L$ and $\theta \in \mathbb{R}^d$ is an unknown parameter with $\|\theta\|_2 \leq S$. Set $\bar{h}=\sup _{\tilde{\theta}, \tau_1,\tau_2} g^{\prime}(\langle\phi(\tau_1,\tau_2), \tilde{\theta}\rangle)$, $\underline{h}=\inf _{\tilde{\theta}, \tau_1,\tau_2} g^{\prime}(\langle\phi(\tau_1,\tau_2), \tilde{\theta}\rangle)$ and  $r = \bar{h}/\underline{h}$. Then the $\alpha$-Eluder dimension of $\caF_{\mathbb{T}}$ is at most $O(dr^2\log(rLS\bar{h}/\alpha))$. Therefore, our results subsume the setting of logistic preference functions~\citep{pacchiano2021dueling} as a spacial case.
\end{remark}

\textbf{Model Complexity} \
Similar to \citet{ayoub2020model}, we use Eluder dimension to characterize the complexity of the model class. We denote $\mathcal{V} = \{f: \mathcal{S} \rightarrow [0, 1]\}$.
We assume the real transition $\mathbb{P}$ belongs to a transition set $\mathcal{P}$, and we define the function space $\mathcal{F}$ as the collections of functions $f : \caS \times \caA \times \caV \rightarrow \mathbb{R}$:
\begin{equation}
\begin{aligned}
    \caF_{\mathbb{P}} = &\Big\{f \mid  \exists \mathbb{P} \in \mathcal{P}, s.t. \, \forall (s, a, v) \in \caS \times \caA \times \caV,  \\
    & \qquad f(s, a, v)=\int \mathbb{P}\left(d s^{\prime} \mid s, a\right) v\left(s^{\prime}\right)  \Big\}. 
\end{aligned}
\end{equation}
We define $d_{\mathbb{P}} = dim_{\caE}(\caF_{\mathbb{P}},\alpha)$ to be the $\alpha$-Eluder dimension of $\caF_{\mathbb{P}}$.

\begin{remark}[Linear Mixture Models]
\label{remark: linear mixture model}
Such a model subsumes linear mixture models as a special case~\citep{ayoub2020model}. Specifically, we say an MDP is a linear mixture MDP if 
$$\mathcal{P} = \{ \psi(s, a, s')^\top \theta : \theta \in \Theta\},$$
where $\psi : \caS \times \caA \times \caS \rightarrow \mathbb{R}^d$ is a known feature map satisfying $\|\sum_{s'}\psi(s,a,s')V(s')\|_{2} \leq 1, \forall s \in \caS, a \in \caA, V \in \caV$, and $\theta \in \Theta$ satisfies $\|\theta\|_2 \leq B$ for a constant $B$. For linear mixture models, the $\alpha$-Eluder dimension of $\caF_{\mathbb{P}}$ is of order $O(d\log(B/\alpha))$.
\end{remark}



\subsection{RL with Once-per-episode Feedback} \label{sec:prelim:once:per:episode}
In this subsection, we introduce another related RL setting called RL with once-per-episode feedback.

In the $k$-th episode, the agent executes a policy $\pi_k$ and obtains a trajectory $\tau_k$ induced by $\pi_k$. At the end of the episode, the agent receives a feedback $y_k \in \{0, 1\}$, where $y_k = g^*(\tau_k)$ for some unknown function $g^*$. We assume $g^*$ belongs to the function space $\mathcal{F}_{\mathbb{G}}$, which is defined as $\mathcal{F}_{\mathbb{G}} = \{g: \operatorname{Traj} \rightarrow [0,1]\}. $
We assume the $\alpha$-Eluder dimension of the function class $\caF_{\mathbb{G}}$ is bounded by $d_{\mathbb{G}}$. Here $\alpha$ is a parameter which will be specified later.
With slight abuse of notations, we denote $g^*(\pi) = \mathbb{E}_{\tau \sim (\mathbb{P}, \pi)}[g^*(\tau)]$. For any policy $\pi$, we can define the preference value function as $V_1^{\pi}(s_1) = \mathbb{E}_{\tau \sim (\mathbb{P}, \pi)}[g^*(\tau)].$ Our goal is to minimize the regret $Reg(K) = \sum_{k = 1}^K V_1^{*}(s_1) - V_1^{\pi_k}(s_1)$,
where $V_1^{*}(s_1) = \max_{\pi} V_1^{\pi}(s_1)$ is the value of the optimal policy.
\begin{remark}
Similar to Remarks \ref{remark: linear preference} and \ref{remark: generalized linear preference}, our setting incorporates (generalized) linear case. Hence, our following results can be naturally applied to the scenario considered by~\citet{chatterji2021theory}.
\end{remark}

\section{Main Results for Preference-based RL}
In this section, we present the main results for preference-based RL. We first propose a novel algorithm called Preference-based Optimistic Planning (PbOP) and establish the regret upper bound for it. To show the sharpness of our result, we also prove an information-theoretic lower bound in the linear case.

\subsection{Algorithm}
The algorithm is formally defined in Algorithm~\ref{alg: PbRL}. Overall, we employ the standard least-squares regression to learn the transition dynamics and the preference function. In each episode, we first update the model estimation based on the history samples till episode $k-1$. We define the confidence sets and calculate the confidence bonuses for the transition and preference estimations, respectively. Based on the confidence sets and the bonus terms, we maintain a policy set in which all policies are near-optimal with minor sub-optimality gap with high probability. Finally, we execute the most exploratory policy pair in the policy set and observe the preference between the trajectories sampled using these two policies.

\begin{algorithm}
  \caption{PbOP: Preference-based Optimistic Planning}
  \label{alg: PbRL}
    \begin{algorithmic}[1]
        \STATE Set $\beta_{\mathbb{T}} = \beta_{\mathbb{P}} = 8 \log(2K \mathcal{N}\left(\mathcal{F}_{\mathbb{T}}, 1/K,\|\cdot\|_{\infty}\right)/ \delta)$
        \FOR{episode $k = 1,\cdots, K$}
              \STATE Calculate the estimation $\hat{\mathbb{T}}_{k}$ and $\hat{\mathbb{P}}_k$ using least-squares regression (Eqn.~\eqref{eqn: definition of hatT} and ~\eqref{eqn: definition of hatP}) 
              \STATE Construct the high-confidence set $\caB_{\mathbb{T},k}$ for the preference $\mathbb{T}$ (Eqn.~\eqref{eqn: confidence set for T})
              \STATE Construct the high-confidence set $\caB_{\mathbb{P},k}$ for transition $\mathbb{P}$ (Eqn.~\eqref{eqn: confidence set for P})
              \STATE Define the preference bonus $b_{\mathbb{T},k}(\tau_1,\tau_2) = \max_{f_1, f_2 \in \caB_{\mathbb{T},k}}\left|f_1(\tau_1,\tau_2) - f_2(\tau_1,\tau_2)\right|$
              \STATE Define the transition bonus $b_{\mathbb{P},k}(s, a, V) = \max_{\mathbb{P}_1,\mathbb{P}_2 \in \caB_{\mathbb{P},k}} (\mathbb{P}_1-\mathbb{P}_2)V(s,a)$
              \STATE Set $b_{\mathbb{P},k}(s, a) = \max_{V \in \mathcal{V}} b_{\mathbb{P},k}(s, a, V)$
              \STATE Construct the policy set $\caS_k$ as Eqn.~\eqref{eqn: definition of S_k}
              \STATE Compute policies $(\pi_{k,1},\pi_{k,2})$ as Eqn.~\eqref{eqn: compute policies}
              \STATE Execute the policy $\pi_{k,1}$ and $\pi_{k,2}$ for one episode, respectively, and then observe the trajectory $\tau_{k,1}$ and $\tau_{k,2}$
              \STATE Receive the preference $o_k$ between $\tau_{k,1}$ and $\tau_{k,2}$
      \ENDFOR
    \end{algorithmic}
  \end{algorithm}

\textbf{Confidence Sets and Bonuses} \ We first explain our construction of the confidence sets and bonus terms in Algorithm~\ref{alg: PbRL}. For the preference function, the estimation $\hat{\mathbb{T}}_k$ is defined as the minimizer of the following least-squares loss:
\begin{align}
\label{eqn: definition of hatT}
    \hat{\mathbb{T}}_k = \argmin_{\mathbb{T}^{\prime} \in \mathcal{F}_{\mathbb{T}}} \sum_{t=1}^{k-1} \left({\mathbb{T}}^{\prime}({\tau}_{t,1},{\tau}_{t,2})-o_t\right)^2.
\end{align}
We can guarantee that the real preference function $\mathbb{T}$ is contained in the following high-probability set for $\mathbb{T}$:
\begin{align}
\label{eqn: confidence set for T}
    \caB_{\mathbb{T},k} = \left\{ \mathbb{T}' \mid  \sum_{t=1}^{k-1}\left(\hat{\mathbb{T}}_{k} - \mathbb{T}'\right)^2({\tau}_{t,1}, {\tau}_{t,2}) \leq \beta_{\mathbb{T}}\right\}.
\end{align}
We define the exploration bonus $b_{\mathbb{T}, k}\left(\tau_{1}, \tau_{2}\right)$ to be the ``width'' of the confidence set $\caB_{\mathbb{T},k}$, i.e., $b_{\mathbb{T},k}(\tau_1,\tau_2) = \max_{f_1, f_2 \in \caB_{\mathbb{T},k}}\left|f_1(\tau_1,\tau_2) - f_2(\tau_1,\tau_2)\right|$. This bonus measures the uncertainty of a certain trajectory pair $(\tau_1,\tau_2)$ w.r.t. the confidence set $\caB_{\mathbb{T},k}$, which will be used to calculate the near-optimal policy set and the most exploratory policy pairs later.

When it comes to the confidence set and the bonus term for the transition estimation, the situation becomes slightly complicated. Recall that in the definition of the transition function $f_{\mathbb{P}} \in \caF_{\mathbb{P}}$, the input variables includes both the state-action pair $(s,a)$ and the next-step target function $V$. Given history samples $\{\tau_{t, i} = (s_{t,1, i}, a_{t, 1, i}, s_{t, 2, i}, a_{t, 2, i}, \cdots, s_{t, H, i}, a_{t, H, i})\}_{i = 1, 2,t \in [k-1]}$, we can possibly estimate the transition model by minimizing the least-squares loss with respect to certain value target $\{V_{t,h,i}\}_{i=1,2,h \in [H], t \in [k-1]}$:
\begin{align}
    \label{eqn: definition of hatP}
    \hat{\mathbb{P}}_{k} =\operatorname{argmin}_{\mathbb{P}^{\prime} \in \mathcal{P}} \sum_{i=1}^{2} \sum_{t=1}^{k-1} \sum_{h=1}^{H}& \big(\langle \mathbb{P}^{\prime}(\cdot \mid s_{t,h,i}, a_{t,h,i}), V_{k,h,i}\rangle \notag\\
    &-V_{k,h,i}(s_{k,h+1,i})\big)^{2}.
\end{align}

Now the remaining problem is how to define the target function $V_{t,h,i}$. In the problem of standard RL with general function approximation, \citet{ayoub2020model} uses the optimistic value estimation as the target function in each episode. However, in the PbRL setting, since the reward information is hidden, we cannot calculate the value estimation for each given state-action pairs. To tackle this problem, we define the bonus $b_{\mathbb{P},k}(s,a,V)$ for any $s \in \caS, a \in \caA$ and the target function $V \in \mathcal{V}$, and use $V = \argmax_{V \in \mathcal{V}} b_{\mathbb{P},k}(s,a,V)$ as the regression target for the state-action pair $(s,a)$. Similar ideas have also been applied to the problem of reward-free exploration for linear mixture MDPs~\citep{zhang2021reward,chen2021near}. 

To be more specific, we define $L_{k}(\mathbb{P}_1,\mathbb{P}_2)$ as
\begin{align}
    L_{k}\left(\mathbb{P}_1, {\mathbb{P}}_{2}\right)=\sum_{i=1}^2 &\sum_{t=1}^{k-1} \sum_{h=1}^{H}\big(\langle \mathbb{P}_1\left(\cdot \mid s_{t,h,i}, a_{t,h,i}\right) \notag\\
    & -{\mathbb{P}}_{2}\left(\cdot \mid s_{t,h,i}, a_{t,h,i}\right), V_{t,h,i}\rangle\big)^{2}.
\end{align}
We construct the high confidence set for transition $\mathbb{P}$:
\begin{align}
\label{eqn: confidence set for P}
    \caB_{\mathbb{P},k} = \left\{ \mathbb{P}' \mid L_k(\mathbb{P}', \hat{\mathbb{P}}_k) \leq \beta_{\mathbb{P}}\right\},
\end{align}
The exploration bonus $b_{\mathbb{P}, k}(s,a,V)$ for the transition estimation measures the uncertainty of the confidence set $\caB_{\mathbb{P},k}$:
\begin{align}
   b_{\mathbb{P},k}(s, a,V) = \max_{\mathbb{P}_1,\mathbb{P}_2 \in \caB_{\mathbb{P},k}} (\mathbb{P}_1-\mathbb{P}_2)V(s,a). 
\end{align}
Suppose $V_{max,k,s,a} = \argmax_{V \in \mathcal{V}} b_{\mathbb{P},k}(s, a,V)$, then we use $V_{max,t,s_{t,h,i},a_{t,h,i}}$ as the online target for the history sample $(s_{t,h,i},a_{t,h,i}, s_{t,h+1,i})$. With a slight abuse of notation, we use $b_{\mathbb{P},k}(s, a) = \max_{V \in \mathcal{V}} b_{\mathbb{P},k}(s, a,V)$ to denote the maximum uncertainty for a given state-action pair $(s,a)$.

\textbf{Near-optimal Policy Set} \ With the estimated model, algorithms for standard RL calculate the optimistic value function using Bellman backup to balance the exploration and exploitation. However, in PbRL, we cannot update the value function through Bellman update since the environment can be non-Markovian. Even worse, since we only get feedback about the preference between trajectory pairs, we cannot directly evaluate a single policy. Inspired from a recent work for PbRL in the tabular setting~\citep{pacchiano2021dueling}, we construct a near-optimal policy set using the preference information.

Define $b_{\mathbb{P},k}(\tau) = \sum_{(s,a) \in \tau} b_{\mathbb{P},k}(s,a)$. With the constructed confidence set and the bonus terms, then we construct the following set $\caS_k$:
\begin{align}\label{eqn: definition of S_k}
    \caS_k = \Big\{\pi \mid &\mathbb{E}_{\tau \sim (\hat{\mathbb{P}}_k,\pi), \tau_0 \sim (\hat{\mathbb{P}}_k,\pi_0)}\big(\hat{\mathbb{T}}_k(\tau,\tau_0) + b_{\mathbb{T},k}(\tau,\tau_0) \notag\\ 
    & + b_{\mathbb{P},k}(\tau) + b_{\mathbb{P},k}(\tau_0) \big) \geq \frac{1}{2}, \forall \pi_0 \in \Pi \Big\} .
\end{align}
Intuitively speaking, $\mathcal{S}_k$ consists of policies that there is no other policy significantly outperforms it. By executing policies in $\caS_k$ in episode $k$, We can guarantee that the regret suffered in episode $k$ to be less than the summation of the bonuses of the trajectories $(\tau_{k,1},\tau_{k,2})$, thus solve the exploitation problem in PbRL. With our concentration analysis, we can guarantee that the optimal policy $\pi^* \in \caS_k$ for any $k \in [K]$ with high probability.

\textbf{Exploratory Policies} \
Now we explain how to deal with the exploration problem in PbRL with general function approximation. Since we have already defined the uncertainty $b_{\mathbb{T},k}$ and $b_{\mathbb{P},k}$ for trajectory pairs, we can choose two policies in $\mathcal{S}_k$ that maximize the uncertainty, and thus encourage exploration:
\begin{equation} 
    \begin{aligned} \label{eqn: compute policies} 
        (\pi_{k,1},\pi_{k,2}) = &\argmax_{\pi_1,\pi_2 \in \caS_k} \mathbb{E}_{ \tau_1 \sim (\hat{\mathbb{P}}_k,\pi_1), \tau_2 \sim (\hat{\mathbb{P}}_k,\pi_2)} \\
    &\big(b_{\mathbb{T},k}(\tau_1,\tau_2) + b_{\mathbb{P},k}(\tau_1) + b_{\mathbb{P},k}(\tau_2)\big) .
    \end{aligned}
\end{equation}

\subsection{Regret Upper Bound}

\begin{theorem}
\label{theorem: dueling RL}
With probability at least $1-\delta$, the regret of Algorithm~\ref{alg: PbRL} is upper bounded by 
\begin{align*}
Reg(K) &\leq \tilde{O}( \sqrt{d_{\mathbb{P}}HK \log(\caN\left(\caF_{\mathbb{P}}, 1/K, \|\cdot\|_{\infty}\right)/\delta)} \\
& \qquad + \sqrt{d_{\mathbb{T}}K \log(\caN\left(\caF_{\mathbb{T}}, 1/K, \|\cdot\|_{\infty}\right)/\delta)}),
\end{align*}
where $d_{\mathbb{P}}$ and $d_{\mathbb{T}}$ is the $1/K$-Eluder dimension of $\caF_{\mathbb{P}}$ and $\caF_{\mathbb{T}}$, respectively.
\end{theorem}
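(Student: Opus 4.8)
The plan is to combine four ingredients: a least-squares concentration bound certifying that the true models lie in the confidence sets, an optimism argument keeping $\pi^*$ feasible, a per-episode decomposition of the regret into confidence widths, and a final Eluder-dimension pigeonhole bound on the accumulated widths; throughout, the transition bonuses $b_{\mathbb{P},k}$ serve a double purpose, inflating the estimated preference to preserve optimism and absorbing the distribution shift between trajectories drawn from $\hat{\mathbb{P}}_k$ and from the true $\mathbb{P}$. First I would show that on a good event $\mathcal{E}$ of probability at least $1-\delta$ one has $\mathbb{T}\in\caB_{\mathbb{T},k}$ and $\mathbb{P}\in\caB_{\mathbb{P},k}$ for all $k\in[K]$ simultaneously. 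The preference labels $o_t$ are Bernoulli with conditional mean $\mathbb{T}(\tau_{t,1},\tau_{t,2})$, so a standard self-normalized least-squares argument---discretizing $\caF_{\mathbb{T}}$ at scale $1/K$ and union bounding over a $1/K$-cover---gives $\sum_t(\hat{\mathbb{T}}_k-\mathbb{T})^2(\tau_{t,1},\tau_{t,2})\le\beta_{\mathbb{T}}$ for the stated $\beta_{\mathbb{T}}$, and the same argument with the value targets $V_{t,h,i}$ gives $L_k(\mathbb{P},\hat{\mathbb{P}}_k)\le\beta_{\mathbb{P}}$. Because each bonus is the width of its confidence set, on $\mathcal{E}$ we obtain the pointwise domination $|\hat{\mathbb{T}}_k-\mathbb{T}|\le b_{\mathbb{T},k}$ and its transition analogue.

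The structural lemma I would establish next is a simulation inequality: for any $g:\operatorname{Traj}\to[0,1]$ and any policy $\pi$, $|\mathbb{E}_{\tau\sim(\mathbb{P},\pi)}g(\tau)-\mathbb{E}_{\tau\sim(\hat{\mathbb{P}}_k,\pi)}g(\tau)|\le\mathbb{E}_{\tau\sim(\mathbb{P},\pi)}b_{\mathbb{P},k}(\tau)$. This follows from telescoping over the horizon: conditioned on the history up to step $h$, the expected continuation feedback is a function of the next state valued in $[0,1]$, hence an element of $\caV$, so the per-step model error is dominated by $b_{\mathbb{P},k}(s_h,a_h)=\max_{V\in\caV}\max_{\mathbb{P}_1,\mathbb{P}_2\in\caB_{\mathbb{P},k}}(\mathbb{P}_1-\mathbb{P}_2)V(s_h,a_h)$. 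Combining this with $\mathbb{T}\in\caB_{\mathbb{T},k}$ shows that the optimistic quantity defining $\caS_k$ upper bounds the true value $\mathbb{T}(\pi,\pi_0)$, so Assumption~\ref{assumption: definition of pistar} forces $\pi^*\in\caS_k$ on $\mathcal{E}$. For the regret I would use the symmetry $\mathbb{T}(\pi^*,\pi_{k,i})=1-\mathbb{T}(\pi_{k,i},\pi^*)$ together with the feasibility of $\pi_{k,i}$ in $\caS_k$ against the test policy $\pi_0=\pi^*$ to get $\mathbb{T}(\pi^*,\pi_{k,i})-\tfrac12\le\mathbb{E}[2b_{\mathbb{T},k}+c\,(b_{\mathbb{P},k}(\tau)+b_{\mathbb{P},k}(\tau_0))]$ under $(\mathbb{P},\pi_{k,i})\times(\mathbb{P},\pi^*)$. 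Since $\pi^*,\pi_{k,i}\in\caS_k$ and $(\pi_{k,1},\pi_{k,2})$ maximizes the exploration objective over $\caS_k\times\caS_k$, this is bounded by the exploration value of the executed pair, and a second use of the simulation inequality converts it to an expectation under the true $\mathbb{P}$ along $(\pi_{k,1},\pi_{k,2})$.

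Summing over $k$ and $i$, the regret is then controlled by $\sum_k\mathbb{E}[b_{\mathbb{T},k}(\tau_1,\tau_2)+b_{\mathbb{P},k}(\tau_1)+b_{\mathbb{P},k}(\tau_2)]$ along the executed policies. An Azuma--Hoeffding step replaces these expectations by the realized bonuses on the sampled trajectories; since the per-step transition bonus and the per-episode preference bonus are each bounded by $1$, the martingale deviation is an $\tilde{O}(\sqrt{HK})$ lower-order term. I would then invoke the pigeonhole lemma of \citet{russo2013eluder}, which bounds a confidence width summed against its own driving sequence by $\sum_{n=1}^N w_n\lesssim\sqrt{d\,\beta\,N}$: applied to the $K$ preference queries it gives $\sum_k b_{\mathbb{T},k}(\tau_{k,1},\tau_{k,2})\lesssim\sqrt{d_{\mathbb{T}}\beta_{\mathbb{T}}K}$, and applied to the $KH$ state-action pairs of the executed trajectories it gives $\sum_{k} b_{\mathbb{P},k}(\tau_{k,i})\lesssim\sqrt{d_{\mathbb{P}}\beta_{\mathbb{P}}KH}$. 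Substituting $\beta_{\mathbb{T}}$ and $\beta_{\mathbb{P}}$ yields the claimed bound.

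The delicate part is the simulation inequality, because the feedback is genuinely non-Markovian: there is no reward decomposition and the optimal policy may be history-dependent, so the usual Bellman-based simulation lemma does not apply. The resolution---that, conditioned on any history, the continuation feedback is a state function valued in $[0,1]$ and is therefore dominated by the worst-case-over-$\caV$ bonus---is exactly what makes the regression target $V_{\max,k,s,a}$ the correct choice and what keeps the horizon dependence to a single factor $\sqrt{H}$. A secondary point is ensuring that the per-episode decomposition only ever evaluates bonuses along policies in $\caS_k$, so that the maximality of $(\pi_{k,1},\pi_{k,2})$ can be invoked.
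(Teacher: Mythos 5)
Your proposal follows essentially the same route as the paper's own proof: least-squares concentration (with a $1/K$-cover union bound) to certify $\mathbb{T}\in\caB_{\mathbb{T},k}$ and $\mathbb{P}\in\caB_{\mathbb{P},k}$, the telescoping simulation lemma whose per-step error is dominated by the worst-case-over-$\caV$ bonus, optimism in the form $\pi^*\in\caS_k$, the regret decomposition via skew-symmetry and feasibility of $\pi_{k,i}$ against the test policy $\pi^*$ plus maximality of the executed pair over $\caS_k\times\caS_k$, then Azuma--Hoeffding and the Russo--Van Roy Eluder pigeonhole bound on the summed widths. The only differences are cosmetic: you state the simulation inequality with bonus expectations under the true $\mathbb{P}$ and explicitly re-convert the exploration objective from $\hat{\mathbb{P}}_k$-expectations to true-$\mathbb{P}$ expectations before applying Azuma, a conversion the paper performs implicitly in its Azuma step.
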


The regret bound in Theorem~\ref{theorem: dueling RL} has polynomial dependence on the Eluder dimension of the function class $\caF_{\mathbb{P}}$ and $\caF_{\mathbb{T}}$, and has no dependence on the cardinality of the state-action space. We defer the proof of Theorem~\ref{theorem: dueling RL} to Appendix~\ref{appendix: proof of dueling RL theorem}. When specialized to linear setting, the regret of Algorithm~\ref{alg: PbRL} can be bounded by the following corollary.
\begin{corollary}[Linear Mixture Model and Linear Preference Function]
\label{theorem: upper bound in linear setting}
For the setting of linear mixture models and linear preference functions defined in Remark~\ref{remark: linear preference} and \ref{remark: linear mixture model}, the regret of Algorithm~\ref{alg: PbRL} is upper bounded by
\begin{align*}
Reg(K) &\leq {O}( \tilde{d}_{\mathbb{P}}\sqrt{HK \log(BK)\log(BK/\delta)}  \\
& \qquad + \tilde{d}_{\mathbb{T}}\sqrt{K\log(LSK)\log(LSK/\delta)}),
\end{align*}
where $\tilde{d}_{\mathbb{P}}$ and $\tilde{d}_{\mathbb{T}}$ are the feature dimension of linear mixture models and linear preference functions, respectively.
\end{corollary}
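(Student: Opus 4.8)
The plan is to derive the corollary directly from Theorem~\ref{theorem: dueling RL} by evaluating the two complexity quantities that appear there—the $1/K$-Eluder dimension and the $1/K$-log-covering number—for the linear mixture transition class $\caF_{\mathbb{P}}$ and the linear preference class $\caF_{\mathbb{T}}$, and then substituting them into the general bound. The Eluder-dimension half is already supplied by the remarks: setting $\alpha = 1/K$ in Remark~\ref{remark: linear mixture model} gives $d_{\mathbb{P}} = O(\tilde{d}_{\mathbb{P}}\log(BK))$, and setting $\alpha = 1/K$ in Remark~\ref{remark: linear preference} gives $d_{\mathbb{T}} = O(\tilde{d}_{\mathbb{T}}\log(LSK))$, where $\tilde{d}_{\mathbb{P}}$ and $\tilde{d}_{\mathbb{T}}$ are the respective feature dimensions. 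Hence the only genuinely new computation is that of the covering numbers.

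For the covering numbers I would reduce each sup-norm cover of the function class to a Euclidean cover of the parameter ball. For $\caF_{\mathbb{T}}$, writing $f_\theta(\tau_1,\tau_2) = \psi(\tau_1,\tau_2)^\top\theta$, Cauchy--Schwarz gives $\|f_{\theta_1} - f_{\theta_2}\|_\infty \le L\|\theta_1 - \theta_2\|_2$, so any $(\epsilon/L)$-cover of $\{\theta : \|\theta\|_2 \le S\}$ induces an $\epsilon$-cover of $\caF_{\mathbb{T}}$; the standard volumetric bound $(1 + 2SL/\epsilon)^{\tilde{d}_{\mathbb{T}}}$ then yields $\log\caN(\caF_{\mathbb{T}}, 1/K, \|\cdot\|_\infty) = O(\tilde{d}_{\mathbb{T}}\log(LSK))$. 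For $\caF_{\mathbb{P}}$, the key observation is that $f_\theta(s,a,V) = \langle\sum_{s'}\psi(s,a,s')V(s'),\theta\rangle$, and the normalization $\|\sum_{s'}\psi(s,a,s')V(s')\|_2 \le 1$ from Remark~\ref{remark: linear mixture model} makes $\|f_{\theta_1} - f_{\theta_2}\|_\infty \le \|\theta_1 - \theta_2\|_2$; hence an $\epsilon$-cover of $\{\theta : \|\theta\|_2 \le B\}$ suffices, giving $\log\caN(\caF_{\mathbb{P}}, 1/K, \|\cdot\|_\infty) = O(\tilde{d}_{\mathbb{P}}\log(BK))$.

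The final step is substitution and bookkeeping of the logarithmic factors. Absorbing $\log(1/\delta)$ yields $\log(\caN(\caF_{\mathbb{P}}, 1/K, \|\cdot\|_\infty)/\delta) = O(\tilde{d}_{\mathbb{P}}\log(BK/\delta))$ and likewise $\log(\caN(\caF_{\mathbb{T}}, 1/K, \|\cdot\|_\infty)/\delta) = O(\tilde{d}_{\mathbb{T}}\log(LSK/\delta))$. Plugging $d_{\mathbb{P}}$ and this log-covering bound into the first term of Theorem~\ref{theorem: dueling RL} gives
\begin{align*}
\sqrt{d_{\mathbb{P}} HK \log(\caN(\caF_{\mathbb{P}}, 1/K, \|\cdot\|_\infty)/\delta)}
&= \sqrt{\tilde{d}_{\mathbb{P}}\log(BK)\cdot HK\cdot\tilde{d}_{\mathbb{P}}\log(BK/\delta)} \\
&= O\!\left(\tilde{d}_{\mathbb{P}}\sqrt{HK\log(BK)\log(BK/\delta)}\right),
\end{align*}
and the analogous manipulation for $\caF_{\mathbb{T}}$ produces $\tilde{d}_{\mathbb{T}}\sqrt{K\log(LSK)\log(LSK/\delta)}$; summing the two terms is exactly the claimed bound.

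The point to watch—rather than a real obstacle—is that both the Eluder dimension and the log-covering number each contribute one factor of $\tilde{d}$, so it is the product of these two factors inside the square root that collapses to a single power of $\tilde{d}$ outside the root; keeping the two sources of $\tilde{d}$ distinct (one from $d_\bullet$, one from $\log\caN$) is what makes the feature dimension appear linearly rather than as $\sqrt{\tilde{d}}$. One should also confirm that the choice $\alpha = 1/K$ used when invoking the Eluder-dimension remarks is consistent with the $1/K$-Eluder dimension appearing in Theorem~\ref{theorem: dueling RL}, which it is by construction.
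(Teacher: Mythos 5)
Your proposal is correct and is precisely the derivation the paper leaves implicit for this corollary: instantiate Theorem~\ref{theorem: dueling RL} with the $\alpha = 1/K$ Eluder-dimension bounds from Remarks~\ref{remark: linear preference} and \ref{remark: linear mixture model}, bound the sup-norm covering numbers of both parametric classes by volumetric covers of the parameter balls (using $\|\psi(\tau_1,\tau_2)\|_2 \le L$ and $\|\sum_{s'}\psi(s,a,s')V(s')\|_2 \le 1$ as Lipschitz constants), and multiply inside the square root so the two factors of $\tilde d$ combine into a single linear factor outside. Your closing observation about the consistency of $\alpha = 1/K$ and the source of the linear (rather than square-root) dependence on the feature dimension is exactly the right bookkeeping.
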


RL with once-per-episode feedback (cf. Section \ref{sec:prelim:once:per:episode}) is another paradigm introduced to tackle the problem of the lack of reward functions. Though the setting is different, we find that our Algorithm~\ref{alg: PbRL} can be almost directly applied to this setting with near-optimal regret. Since this is not the main focus of this work, we refer the interested readers to Appendix~\ref{appendix: once-per-episode feedback} for detailed algorithm and proof.
\begin{theorem}
\label{theorem: trajectory feedback, main text}
With probability at least $1-\delta$, the regret of Algorithm~\ref{alg: RL with Trajectory Feedback} for RL with once-per-episode feedback is upper bounded by
\begin{align*}
    Reg(K) \leq \tilde{O}&( \sqrt{d_{\mathbb{P}}HK \log(\caN\left(\caF_{\mathbb{P}}, 1/K, \|\cdot\|_{\infty}\right)/\delta)} \\
    &+ \sqrt{d_{\mathbb{G}}K \log(\caN\left(\caF_{\mathbb{G}}, 1/K, \|\cdot\|_{\infty}\right)/\delta)}),
\end{align*}
where $d_{\mathbb{P}}$ and $d_{\mathbb{G}}$ is the $1/K$-Eluder dimension of $\caF_{\mathbb{P}}$ and $\caF_{\mathbb{G}}$, respectively.
\end{theorem}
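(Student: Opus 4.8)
The plan is to mirror the analysis of Theorem~\ref{theorem: dueling RL}, exploiting the fact that the once-per-episode setting is genuinely simpler: the label $y_k = g^*(\tau_k)$ is a function of a \emph{single} trajectory, and there is a well-defined scalar value $V_1^\pi(s_1) = g^*(\pi)$ to be maximized, so we need only one policy per episode and can dispense with Assumption~\ref{assumption: definition of pistar} entirely. Algorithm~\ref{alg: RL with Trajectory Feedback} is the obvious single-policy specialization of PbOP: at episode $k$ it forms the least-squares estimate $\hat{g}_k = \argmin_{g' \in \caF_{\mathbb{G}}}\sum_{t=1}^{k-1}(g'(\tau_t)-y_t)^2$, the confidence set $\caB_{\mathbb{G},k} = \{g' : \sum_{t=1}^{k-1}(\hat{g}_k - g')^2(\tau_t) \le \beta_{\mathbb{G}}\}$ with width bonus $b_{\mathbb{G},k}(\tau) = \max_{g_1,g_2 \in \caB_{\mathbb{G},k}}|g_1(\tau)-g_2(\tau)|$, the value-targeted transition estimate $\hat{\mathbb{P}}_k$ and its bonus $b_{\mathbb{P},k}$ exactly as in \eqref{eqn: definition of hatP}--\eqref{eqn: confidence set for P}, and then plays the optimistic policy $\pi_k = \argmax_\pi \mathbb{E}_{\tau \sim (\hat{\mathbb{P}}_k,\pi)}[\hat{g}_k(\tau) + b_{\mathbb{G},k}(\tau) + b_{\mathbb{P},k}(\tau)]$.

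\textbf{Concentration and optimism.} First I would invoke the standard least-squares concentration lemma (the same one used for $\caF_{\mathbb{T}}$ and $\caF_{\mathbb{P}}$ in the proof of Theorem~\ref{theorem: dueling RL}) to show that, with probability at least $1-\delta$, $g^* \in \caB_{\mathbb{G},k}$ and $\mathbb{P} \in \caB_{\mathbb{P},k}$ for all $k \in [K]$, where $\beta_{\mathbb{G}} = O(\log(K\caN(\caF_{\mathbb{G}},1/K,\|\cdot\|_\infty)/\delta))$ and analogously for $\beta_{\mathbb{P}}$. Conditioned on this event the bonuses dominate the estimation errors: $|\hat{g}_k(\tau) - g^*(\tau)| \le b_{\mathbb{G},k}(\tau)$ for every $\tau$, and the gap $\mathbb{E}_{\tau \sim (\hat{\mathbb{P}}_k,\pi)}[g^*(\tau)] - \mathbb{E}_{\tau \sim (\mathbb{P},\pi)}[g^*(\tau)]$ is bounded in absolute value by a sum of transition bonuses through a step-by-step telescoping (simulation-lemma) argument. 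The key point is that although $g^*$ is non-Markovian, its value-to-go from any step is still a map $\caS \to [0,1]$ lying in $\caV$, so the max-over-$V$ bonus $b_{\mathbb{P},k}(s,a) = \max_{V \in \caV} b_{\mathbb{P},k}(s,a,V)$ dominates the per-step error. Applying both estimates at $\pi=\pi^*$ gives $\mathbb{E}_{\tau \sim (\hat{\mathbb{P}}_k,\pi^*)}[\hat{g}_k + b_{\mathbb{G},k} + b_{\mathbb{P},k}] \ge \mathbb{E}_{\tau\sim(\mathbb{P},\pi^*)}[g^*(\tau)] = V_1^*(s_1)$, and combining this with the maximality of $\pi_k$ yields optimism: $\hat{V}_k := \mathbb{E}_{\tau \sim (\hat{\mathbb{P}}_k,\pi_k)}[\hat{g}_k(\tau) + b_{\mathbb{G},k}(\tau) + b_{\mathbb{P},k}(\tau)] \ge V_1^*(s_1)$.

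\textbf{Regret decomposition and bonus summation.} By optimism, $Reg(K) \le \sum_{k=1}^K (\hat{V}_k - V_1^{\pi_k}(s_1))$. Reusing the same two estimates at $\pi=\pi_k$ gives $\hat{V}_k - V_1^{\pi_k}(s_1) \le 2\,\mathbb{E}_{\tau \sim (\hat{\mathbb{P}}_k,\pi_k)}[b_{\mathbb{G},k}(\tau) + b_{\mathbb{P},k}(\tau)]$. An Azuma--Hoeffding argument, together with the standard step of swapping the expectation under $\hat{\mathbb{P}}_k$ for one under the true $\mathbb{P}$ up to lower-order bonus terms, replaces the expected bonuses by the bonuses realized on the observed trajectory $\tau_k$ at the cost of an additive $\tilde{O}(\sqrt{K})$. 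It then remains to bound $\sum_k b_{\mathbb{G},k}(\tau_k)$ and $\sum_k\sum_h b_{\mathbb{P},k}(s_{k,h},a_{k,h})$. For the former, the Russo--Van Roy Eluder-dimension pigeonhole lemma gives $\sum_k b_{\mathbb{G},k}(\tau_k) \le \tilde{O}(\sqrt{d_{\mathbb{G}} K \beta_{\mathbb{G}}})$; for the latter, the same lemma applied to the $HK$ state-action-target triples gives $\sum_{k,h} b_{\mathbb{P},k}(s_{k,h},a_{k,h}) \le \tilde{O}(\sqrt{d_{\mathbb{P}} HK \beta_{\mathbb{P}}})$. Substituting $\beta_{\mathbb{G}}$ and $\beta_{\mathbb{P}}$ produces exactly the claimed bound.

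\textbf{Main obstacle.} The only genuinely delicate step---shared verbatim with the proof of Theorem~\ref{theorem: dueling RL}---is controlling the transition term. Because the reward is hidden, the regression target $V$ in $b_{\mathbb{P},k}(s,a,V)$ is chosen online as the width-maximizing $V_{\max,k,s,a}\in\caV$, and I must verify that the Eluder pigeonhole still applies when the inputs are the adaptively selected triples $(s_{t,h,i},a_{t,h,i},V_{t,h,i})$ and when the non-Markovian value-to-go of $g^*$ is the target in the telescoping. I expect this to be the crux: it requires treating $(s,a,V)$ jointly as the Eluder input for $\caF_{\mathbb{P}}$ and checking that the online choice of $V$ makes the realized widths dominate the per-step simulation errors, exactly as in the reward-free exploration analyses of \citet{zhang2021reward,chen2021near}. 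The feedback part, by contrast, is immediate: since $g^*$ acts on a single trajectory, no pairing, symmetry, or optimal-arm condition enters, which is precisely why PbOP transfers to this setting almost unchanged.
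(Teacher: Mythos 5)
Your proof machinery is sound and would deliver the stated bound, but be aware that the algorithm you analyzed is not the one the theorem refers to. You reconstructed Algorithm~\ref{alg: RL with Trajectory Feedback} as a pure UCB rule, $\pi_k = \argmax_\pi \mathbb{E}_{\tau \sim (\hat{\mathbb{P}}_k,\pi)}[\hat{g}_k(\tau) + b_{\mathbb{G},k}(\tau) + b_{\mathbb{P},k}(\tau)]$, and your decomposition correspondingly runs through value optimism: $\hat{V}_k \ge V_1^*(s_1)$, then per-episode regret $\le 2\,\mathbb{E}[b_{\mathbb{G},k} + b_{\mathbb{P},k}]$. The paper's actual algorithm instead retains the full PbOP structure: it builds the near-optimal policy set $\caS_k = \{\pi \mid \mathbb{E}_{\tau \sim (\hat{\mathbb{P}}_k,\pi),\, \tau_0 \sim (\hat{\mathbb{P}}_k,\pi_0)}(\hat{g}_k(\tau) - \hat{g}_k(\tau_0) + b_{\mathbb{G},k}(\tau) + b_{\mathbb{G},k}(\tau_0) + b_{\mathbb{P},k}(\tau) + b_{\mathbb{P},k}(\tau_0)) \ge 0,\ \forall \pi_0\}$ and then plays the \emph{pure-exploration} policy $\pi_k = \argmax_{\pi \in \caS_k} \mathbb{E}_{\tau \sim (\hat{\mathbb{P}}_k,\pi)}(b_{\mathbb{G},k}(\tau) + b_{\mathbb{P},k}(\tau))$, with no value term in the objective. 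Its regret decomposition accordingly uses two facts in place of optimism: membership $\pi_k \in \caS_k$ (tested against $\pi_0 = \pi^*$) bounds the estimated gap $\mathbb{E}[\hat{g}_k(\tau^*) - \hat{g}_k(\tau_k)]$ by a sum of bonuses at both $\pi_k$ and $\pi^*$, and then the maximality of $\pi_k$'s bonus expectation together with $\pi^* \in \caS_k$ (the analogue of Lemma~\ref{lemma: pi* in caS}) converts the $\pi^*$-bonuses into $\pi_k$-bonuses, yielding the factor-$4$ bound $4\,\mathbb{E}_{\tau \sim (\hat{\mathbb{P}}_k,\pi_k)}(b_{\mathbb{G},k}(\tau) + b_{\mathbb{P},k}(\tau))$. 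After that both proofs coincide: Azuma, then the Russo--Van Roy Eluder pigeonhole (Lemma~\ref{lemma: auxiliary lemma for regret}) applied to $\caF_{\mathbb{G}}$ on trajectories and to $\caF_{\mathbb{P}}$ on the adaptively chosen $(s,a,V)$ triples --- your ``main obstacle'' is handled there exactly as you anticipated. What each approach buys: yours is simpler and more natural here (single-policy values exist, so unconstrained optimism works and no policy set is needed), whereas the paper's choice demonstrates its thesis that PbOP transfers to the once-per-episode setting essentially unchanged; and since your UCB variant is a legitimate algorithm attaining the same rate, your argument is correct, just not a proof about the paper's Algorithm~\ref{alg: RL with Trajectory Feedback}. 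One further point in your favor: you explicitly insert the step swapping $\mathbb{E}_{\tau \sim (\hat{\mathbb{P}}_k,\pi_k)}$ for $\mathbb{E}_{\tau \sim (\mathbb{P},\pi_k)}$ before applying Azuma (the realized $\tau_k$ is drawn from the true model), a step the paper's write-up silently elides.
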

To the best of our knowledge, this is the first provably efficient algorithm for the problem of RL with once-per-episode feedback with general function approximation, which covers the result for RL with once-per-episode feedback in the tabular case~\citep{chatterji2021theory}. 


\subsection{Information-Theoretic Lower Bound}

In this subsection, we establish the lower bound for PbRL in the linear setting, which is derived using the reduction from the problem of RL with once-per-episode feedback.

Firstly, we show the reduction from the problem of RL with once-per-episode feedback setting to the  PbRL setting. Specifically, suppose we have an algorithm $\mathcal{ALG}$ for PbRL problems, we design a reduction protocol to solve the RL with once-per-episode feedback problem using Algorithm $\mathcal{ALG}$.

\begin{algorithm}
  \caption{Reduction Protocol}
  \label{alg: reduction protocol}
    \begin{algorithmic}[1]
        \FOR{episode $k = 1,\cdots,  K/2 $}
              \STATE Invoke Algorithm $\mathcal{ALG}$ and obtain the policy $\pi_{k,1}$ and $\pi_{k,2}$.
              \STATE Execute the policy $\pi_{k,1}$, and then observe the trajectory $\tau_{k,1}$ and the corresponding feedback $y_{k,1}$.
              \STATE Execute the policy $\pi_{k,2}$, and then observe the trajectory $\tau_{k,2}$ and the corresponding feedback $y_{k,2}$.
              \STATE Define the preference $o_k = \mathbbm{1}(y_{k,1} > y_{k,2})$ if $y_{k,1} \neq y_{k,2}$, and $o_k = \begin{cases} 1 & w.p. \  1/2 \\ 0 & w.p. \  1/2 \end{cases}$ otherwise.
              \STATE Send the information $(\tau_{k,1}, \tau_{k,2})$ and $o_k$ to Algorithm $\mathcal{ALG}$.
      \ENDFOR
    \end{algorithmic}
  \end{algorithm}

The reduction protocol is described in Algorithm~\ref{alg: reduction protocol}. In each episode, the agent invokes Algorithm $\mathcal{ALG}$ to obtain the policy $\pi_{k,1}$ and $\pi_{k,2}$ based on the history data. The agent then executes these two policies, observes the trajectory $\tau_{k,1}, \tau_{k,2}$, and receives the corresponding feedback $y_{k,1},y_{k,2}$. We define the preference $o_k = \mathbbm{1}(y_{k,1} > y_{k,2})$ if $y_{k,1} \neq y_{k,2}$, and $o_k = \begin{cases} 1 & w.p. \ 1/2 \\ 0 & w.p. \ 1/2 \end{cases}$ otherwise. Note that in our design, we have $\Pr \{o_{k} = 1\} = \frac{\Pr(y_{k,1} = 1) - \Pr(y_{k,2} = 1) + 1}{2}$. Finally, we send the information $(\tau_{k,1}, \tau_{k,2})$ and $o_k$ obtained in this episode to Algorithm $\mathcal{ALG}$. We have the following proposition based on the reduction protocol.

\begin{proposition} \label{theorem: reduction}
    Suppose there exists a PbRL algorithm $\mathcal{ALG}$ with regret $poly(K,H,d_{\mathbb{P}}, d_{\mathbb{T}}, \mathcal{N}_{\mathbb{P}} ,  \mathcal{N}_{\mathbb{T}}, 1/\delta)$. For RL with once-per-episode feedback, there exists an algorithm with regret  $poly(K,H,d_{\mathbb{P}}, d_{\caF}, \mathcal{N}_{\mathbb{P}} ,  \mathcal{N}_{\caF}, 1/\delta)$, where $d_{\caF}$ and $\mathcal{N}_{\caF}$ is the Eluder dimension and covering number of the function space 
    \begin{align*}
    \caF &= \{ f(\tau_1,\tau_2): \operatorname{Traj} \times \operatorname{Traj} \rightarrow [0,1], \\
    & \text{s.t. } f(\tau_1,\tau_2) = \frac{g(\tau_{1}) - g(\tau_2) + 1}{2}, g(\tau) \in \mathcal{F}_{\mathbb{G}} \}.
    \end{align*}
\end{proposition}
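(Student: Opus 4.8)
The plan is to show that the reduction protocol (Algorithm~\ref{alg: reduction protocol}) turns the once-per-episode feedback problem into a \emph{bona fide} PbRL instance whose transition kernel is unchanged and whose preference function lies in $\caF$, so that the regret guarantee of $\mathcal{ALG}$ applies verbatim to this simulated instance and can then be translated back. Concretely, I would (i) identify the preference function induced by the constructed label $o_k$, (ii) verify that it lies in $\caF$ and that Assumption~\ref{assumption: definition of pistar} holds for it, so that $\mathcal{ALG}$'s bound applies with complexity measures $d_{\mathbb{P}}, d_{\caF}, \mathcal{N}_{\mathbb{P}}, \mathcal{N}_{\caF}$, and (iii) relate the PbRL regret of $\mathcal{ALG}$ on this instance to the once-per-episode regret of the reduction.

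For step (i), I would condition on the two trajectories and write $p_i = g^*(\tau_{k,i}) = \Pr(y_{k,i} = 1)$. Since $y_{k,1}, y_{k,2}$ are independent Bernoulli, the event $\{o_k = 1\}$ occurs either when $y_{k,1} > y_{k,2}$ (probability $p_1(1-p_2)$) or through the fair coin on the tie events $y_{k,1} = y_{k,2}$ (probability $\tfrac{1}{2}(p_1 p_2 + (1-p_1)(1-p_2))$). A short computation gives $\Pr(o_k = 1 \mid \tau_{k,1}, \tau_{k,2}) = \tfrac{1}{2}(p_1 - p_2) + \tfrac{1}{2} = \tfrac{g^*(\tau_{k,1}) - g^*(\tau_{k,2}) + 1}{2}$. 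Thus, setting $\mathbb{T}^*(\tau_1,\tau_2) = \tfrac{g^*(\tau_1) - g^*(\tau_2) + 1}{2}$, the triple $(\tau_{k,1}, \tau_{k,2}, o_k)$ sent to $\mathcal{ALG}$ has exactly the law of a PbRL sample: the trajectories are generated by executing $\pi_{k,1}, \pi_{k,2}$ under the same transition $\mathbb{P}$, and $o_k \mid \tau_{k,1}, \tau_{k,2} \sim \mathrm{Bernoulli}(\mathbb{T}^*(\tau_{k,1}, \tau_{k,2}))$. The tie-breaking randomization is precisely what linearizes the label probability in $g^*(\tau_1) - g^*(\tau_2)$; this identity is the one genuinely technical point of the argument, and it guarantees that the interaction is distributionally indistinguishable from $\mathcal{ALG}$ running against a real PbRL environment with preference $\mathbb{T}^*$ and transition $\mathbb{P}$.

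For step (ii), $\mathbb{T}^* \in \caF$ is immediate: since $g^* \in \caF_{\mathbb{G}}$ takes values in $[0,1]$, we have $g^*(\tau_1) - g^*(\tau_2) \in [-1,1]$, so $\mathbb{T}^*(\tau_1,\tau_2) \in [0,1]$ and $\mathbb{T}^*(\tau_1,\tau_2) + \mathbb{T}^*(\tau_2,\tau_1) = 1$. Taking expectations over trajectories yields $\mathbb{T}^*(\pi_1,\pi_2) = \tfrac{g^*(\pi_1) - g^*(\pi_2) + 1}{2}$, where $g^*(\pi) = V_1^\pi(s_1)$. Hence for the once-per-episode optimal policy $\pi^* = \arg\max_{\pi} V_1^\pi(s_1)$ we get $\mathbb{T}^*(\pi^*, \pi_0) = \tfrac{g^*(\pi^*) - g^*(\pi_0) + 1}{2} \ge \tfrac{1}{2}$ for all $\pi_0$, so Assumption~\ref{assumption: definition of pistar} holds for the induced instance with this $\pi^*$. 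Because the induced preference class is exactly $\caF$, invoking $\mathcal{ALG}$ over the $K/2$ reduction episodes yields a PbRL regret of order $\operatorname{poly}(K, H, d_{\mathbb{P}}, d_{\caF}, \mathcal{N}_{\mathbb{P}}, \mathcal{N}_{\caF}, 1/\delta)$.

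Finally, for step (iii), I would observe that both regret notions decompose into the same per-execution gaps. The PbRL regret of $\mathcal{ALG}$ equals $\sum_{k=1}^{K/2}\sum_{i=1}^{2}\big(\mathbb{T}^*(\pi^*, \pi_{k,i}) - \tfrac{1}{2}\big) = \tfrac{1}{2}\sum_{k,i}\big(g^*(\pi^*) - g^*(\pi_{k,i})\big)$, whereas the once-per-episode regret of the reduction, which executes the $K$ policies $\{\pi_{k,i}\}$, equals $\sum_{k=1}^{K/2}\sum_{i=1}^{2}\big(V_1^*(s_1) - V_1^{\pi_{k,i}}(s_1)\big) = \sum_{k,i}\big(g^*(\pi^*) - g^*(\pi_{k,i})\big)$. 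Thus the once-per-episode regret is exactly twice the PbRL regret of $\mathcal{ALG}$, which is again $\operatorname{poly}(K, H, d_{\mathbb{P}}, d_{\caF}, \mathcal{N}_{\mathbb{P}}, \mathcal{N}_{\caF}, 1/\delta)$, completing the argument. The main obstacle is the probability identity in step (i); the remaining steps are bookkeeping together with the check that the reduction preserves the transition dynamics and replaces the preference class by $\caF$.
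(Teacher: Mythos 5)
Your proposal is correct and follows exactly the paper's intended argument: the paper itself only sketches this proof via the reduction protocol (Algorithm~\ref{alg: reduction protocol}) and the stated identity $\Pr\{o_k = 1\} = \frac{\Pr(y_{k,1}=1) - \Pr(y_{k,2}=1) + 1}{2}$, and your three steps (deriving that identity from the tie-breaking coin, verifying membership in $\caF$ and Assumption~\ref{assumption: definition of pistar}, and the factor-of-two regret translation) are precisely the details the paper leaves implicit.
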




We state the lower bound for RL with once-per-episode feedback in Theorem~\ref{theorem:lb:once:per:episode}, and defer the proof to Appendix~\ref{appendix: lower bound once-per-episode feedback}.

\begin{theorem}[Lower Bound for RL with Once-per-episode Feedback] \label{theorem:lb:once:per:episode}
    For any algorithm for RL with once-per-episode feedback problem, there exists a $\tilde{d}_{\mathbb{P}}$-dimensional linear mixture MDP with a $\tilde{d}_{\mathbb{G}}$-dimensional linear preference function such that the regret incurred by this algorithms is at least $\Omega(\tilde{d}_{{\PP}}\sqrt{K} + \tilde{d}_{\mathbb{G}}\sqrt{K})$.
\end{theorem}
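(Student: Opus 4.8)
The plan is to prove the two terms of the lower bound separately and then combine them, exploiting that $\tilde{d}_{\PP}\sqrt{K} + \tilde{d}_{\mathbb{G}}\sqrt{K} = \Theta(\max\{\tilde{d}_{\PP}\sqrt{K}, \tilde{d}_{\mathbb{G}}\sqrt{K}\})$. It therefore suffices to exhibit, for any given algorithm, either a \emph{transition-hard} instance forcing $\Omega(\tilde{d}_{\PP}\sqrt{K})$ regret or a \emph{feedback-hard} instance forcing $\Omega(\tilde{d}_{\mathbb{G}}\sqrt{K})$ regret, and then take the harder of the two families, padding the inactive feature coordinates with zeros so that both families are simultaneously $\tilde{d}_{\PP}$-dimensional in the transition and $\tilde{d}_{\mathbb{G}}$-dimensional in the feedback. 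Since $\max\{a,b\}\ge (a+b)/2$, the worst-case instance attains the claimed sum up to a constant.

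For the feedback term $\Omega(\tilde{d}_{\mathbb{G}}\sqrt{K})$, I would first collapse the dynamics to a trivial deterministic and fully known transition with $\tilde{d}_{\PP}=O(1)$, so that each policy deterministically produces one trajectory $\tau$ from a fixed finite set and choosing a policy is equivalent to pulling an arm. I then set $g^*(\tau)=\tfrac12+\langle \phi(\tau),\theta^*\rangle$ with a feature map $\phi:\operatorname{Traj}\to\mathbb{R}^{\tilde{d}_{\mathbb{G}}}$ as in the linear case of Remark~\ref{remark: linear preference}, and $\theta^*$ ranging over a scaled hypercube $\{\pm\Delta\}^{\tilde{d}_{\mathbb{G}}}$ with $\Delta=\Theta(\sqrt{\tilde{d}_{\mathbb{G}}/K})$ chosen small enough to keep $g^*\in[0,1]$. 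The once-per-episode feedback $y_k\sim\mathrm{Bernoulli}(g^*(\tau_k))$ is then exactly Bernoulli reward feedback for a $\tilde{d}_{\mathbb{G}}$-dimensional stochastic linear bandit, and the regret $\sum_k (V_1^*(s_1)-V_1^{\pi_k}(s_1))=\sum_k\langle\phi^*-\phi(\tau_k),\theta^*\rangle$ is precisely the linear-bandit regret. Applying the standard minimax lower bound for linear bandits (via Assouad's lemma together with a per-coordinate KL/Pinsker argument bounding the information each Bernoulli observation carries) yields $\Omega(\tilde{d}_{\mathbb{G}}\sqrt{K})$.

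For the transition term $\Omega(\tilde{d}_{\PP}\sqrt{K})$, I would instead fix $g^*$ to a simple trajectory functional needing only $\tilde{d}_{\mathbb{G}}=O(1)$ features --- e.g.\ the indicator that the trajectory ends in a designated ``good'' state --- so that $V_1^\pi(s_1)$ is just the probability of reaching that state, and embed a hard $\tilde{d}_{\PP}$-dimensional linear mixture MDP family $\{\mathbb{P}_\theta\}_{\theta}$ in the spirit of the construction of \citet{zhou2021nearly}, scaled so that the reach probability (hence the per-episode feedback) lies in $[0,1]$. Because the learner now observes only a single Bernoulli bit per episode rather than full reward signals, each episode reveals at most one bit of information about $\theta$, which can only strengthen the information-theoretic argument; a standard Assouad/Fano reduction over the parameter hypercube then gives $\Omega(\tilde{d}_{\PP}\sqrt{K})$.

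The main obstacle is the transition part: one must verify that a linear mixture MDP lower bound designed for the full-reward setting still goes through when the only observable is a single terminal Bernoulli bit, and that the hard instance can be realized with the feedback map constrained to $\mathcal{F}_{\mathbb{G}}$ and all values kept in $[0,1]$ (so the Bernoulli feedback is well defined) while the exploration difficulty in the $\tilde{d}_{\PP}$-dimensional transition parameter is preserved. The information-theoretic bookkeeping --- tracking the KL divergence between the observation laws induced by neighboring parameter vectors \emph{through the trajectory distribution}, rather than through directly observed rewards --- is where the care is needed; the feedback term, by contrast, is an essentially routine reduction to the classical linear-bandit lower bound.
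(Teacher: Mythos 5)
Your overall decomposition (two separate hard families combined via $\max\{a,b\}\ge (a+b)/2$) and your treatment of the feedback term are essentially the paper's: the paper likewise obtains $\Omega(\tilde{d}_{\mathbb{G}}\sqrt{K})$ by ``regarding the trajectory as an arm'' and invoking the classical linear bandit lower bound, which is exactly your collapsed-dynamics reduction spelled out in more detail.

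The gap is in the transition term. The paper does not re-derive any information-theoretic argument there: it simply chooses $g^*(\tau)=\sum_{h=1}^H r(s_h,a_h)$, so that the once-per-episode feedback is a function of (hence simulatable from) the per-step rewards observed in standard RL, and the regret notions coincide. Any once-per-episode algorithm therefore induces a standard-RL algorithm with identical regret on these instances, and the $\Omega(\tilde{d}_{\mathbb{P}}\sqrt{K})$ lower bound of \citet{zhou2021nearly} transfers verbatim, since a lower bound against a more-informed learner is automatically a lower bound against a less-informed one. Your plan instead replaces $g^*$ by a terminal-state indicator and proposes to redo an Assouad/Fano argument ``in the spirit of'' \citet{zhou2021nearly}, explicitly flagging the KL bookkeeping as an unresolved obstacle; as written, the transition term is a plan rather than a proof. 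Moreover, your intuition that ``each episode reveals at most one bit of information about $\theta$'' is not right as stated: the learner observes the entire trajectory, and it is the observed state transitions that carry the information about $\theta$, exactly as in the full-reward setting (in such hard instances the rewards are deterministic, so the two settings' observation laws essentially coincide, and the feedback bit adds nothing). Your route could be completed once this is noted, but the paper's choice of $g^*$ makes the whole issue disappear in one line, and that reduction is the idea your proposal is missing.
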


The following lower bound for PbRL is implied by Proposition~\ref{theorem: reduction} and Theorem~\ref{theorem:lb:once:per:episode}. This lower bound matches the upper bound in Corollary~\ref{theorem: upper bound in linear setting} w.r.t. the feature dimensions $\tilde{d}_{\mathbb{P}}, \tilde{d}_{\mathbb{T}}$ and the number of episode $K$ except for logarithmic factors, which indicates that our algorithm is near-optimal when specialized to the linear setting.
\begin{corollary}[Lower Bound for PbRL] \label{theorem:lb:dueling:rl}
    For any algorithm for PbRL, there exists a $\tilde{d}_{\mathbb{P}}$-dimensional linear mixture MDP with a $\tilde{d}_{\mathbb{T}}$-dimensional linear preference function such that the regret incurred by this algorithm is at least $\Omega(\tilde{d}_{\mathbb{P}} \sqrt{K} + \tilde{d}_{\mathbb{T}}\sqrt{K})$.
\end{corollary}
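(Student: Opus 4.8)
The plan is to transfer the once-per-episode lower bound (Theorem~\ref{theorem:lb:once:per:episode}) to PbRL through the regret-preserving reduction of Proposition~\ref{theorem: reduction}: given \emph{any} PbRL algorithm, the reduction protocol (Algorithm~\ref{alg: reduction protocol}) turns it into an algorithm for RL with once-per-episode feedback whose regret differs only by a constant factor, so the established lower bound for the latter setting forces a matching lower bound for the former. Since Corollary~\ref{theorem:lb:dueling:rl} is asserted to be ``implied by'' these two results, the work is entirely in making the transfer precise.

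First I would fix an arbitrary PbRL algorithm $\mathcal{ALG}$ and feed it into Algorithm~\ref{alg: reduction protocol} to obtain an algorithm $\mathcal{B}$ for RL with once-per-episode feedback. The central identity to pin down is the factor-$2$ regret correspondence. By construction $\Pr\{o_k = 1\} = \frac{\Pr(y_{k,1}=1) - \Pr(y_{k,2}=1) + 1}{2}$, and since $\Pr(y_{k,i}=1) = \mathbb{E}_{\tau \sim (\mathbb{P},\pi_{k,i})}[g^*(\tau)] = V_1^{\pi_{k,i}}(s_1)$, the induced PbRL preference obeys $\mathbb{T}(\pi_{k,1},\pi_{k,2}) = \frac{V_1^{\pi_{k,1}}(s_1) - V_1^{\pi_{k,2}}(s_1) + 1}{2}$. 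Hence $\mathbb{T}(\pi^*,\pi_{k,i}) - \frac{1}{2} = \frac{1}{2}(V_1^*(s_1) - V_1^{\pi_{k,i}}(s_1))$, and summing over the two executions in each of the $K/2$ PbRL episodes yields $Reg_{\mathrm{OPE}}(K) = 2\, Reg_{\mathrm{PbRL}}(K/2)$.

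Next I would check that the hard instance supplied by Theorem~\ref{theorem:lb:once:per:episode} maps, under the reduction, to an admissible PbRL instance of comparable complexity. If the once-per-episode feedback function is $\tilde{d}_{\mathbb{G}}$-dimensional linear, $g^*(\tau) = \phi(\tau)^\top \theta$, then $\mathbb{T}(\tau_1,\tau_2) = \frac{(\phi(\tau_1)-\phi(\tau_2))^\top\theta + 1}{2}$ is linear in the lifted feature $\psi(\tau_1,\tau_2) = \frac{1}{2}\big(\phi(\tau_1) - \phi(\tau_2),\, 1\big)$ with parameter $(\theta,1)$, so it sits in an $(\tilde{d}_{\mathbb{G}}+1)$-dimensional linear class and $\tilde{d}_{\mathbb{T}} = \Theta(\tilde{d}_{\mathbb{G}})$. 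One verifies directly that this $\mathbb{T}$ satisfies the antisymmetry constraint $\mathbb{T}(\tau_1,\tau_2) + \mathbb{T}(\tau_2,\tau_1) = 1$ defining $\caF_{\mathbb{T}}$, that the $g^*$-maximizing policy is a valid $\pi^*$ under Assumption~\ref{assumption: definition of pistar}, and that the transition is left as the same $\tilde{d}_{\mathbb{P}}$-dimensional linear mixture MDP.

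Finally I would assemble the bound. Theorem~\ref{theorem:lb:once:per:episode} guarantees $Reg_{\mathrm{OPE}}(K) \geq \Omega(\tilde{d}_{\mathbb{P}}\sqrt{K} + \tilde{d}_{\mathbb{G}}\sqrt{K})$ for $\mathcal{B}$ on this instance; combined with $Reg_{\mathrm{PbRL}}(K/2) = \frac{1}{2}Reg_{\mathrm{OPE}}(K)$, $\tilde{d}_{\mathbb{G}} = \Theta(\tilde{d}_{\mathbb{T}})$, and $K/2 = \Theta(K)$, this gives $Reg_{\mathrm{PbRL}}(K) \geq \Omega(\tilde{d}_{\mathbb{P}}\sqrt{K} + \tilde{d}_{\mathbb{T}}\sqrt{K})$ after relabeling the number of episodes. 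As $\mathcal{ALG}$ was arbitrary, the corollary follows. The main obstacle is the bookkeeping in the middle two steps — establishing the exact factor-$2$ regret identity and confirming that the reduction carries linear once-per-episode instances to \emph{admissible} linear PbRL instances (respecting both antisymmetry and Assumption~\ref{assumption: definition of pistar}) without inflating the effective dimension; once these are verified, the lower-bound transfer is immediate.
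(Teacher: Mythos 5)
Your proposal is correct and follows exactly the route the paper intends: the paper states that Corollary~\ref{theorem:lb:dueling:rl} is implied by Proposition~\ref{theorem: reduction} and Theorem~\ref{theorem:lb:once:per:episode}, and your argument simply makes that implication explicit. The factor-$2$ regret identity $Reg_{\mathrm{OPE}}(K) = 2\,Reg_{\mathrm{PbRL}}(K/2)$, the verification that $\mathbb{T}(\tau_1,\tau_2) = \frac{g^*(\tau_1)-g^*(\tau_2)+1}{2}$ is an antisymmetric, $(\tilde{d}_{\mathbb{G}}+1)$-dimensional linear preference satisfying Assumption~\ref{assumption: definition of pistar}, and the contrapositive transfer of the hard instance are precisely the bookkeeping the paper leaves implicit.
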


\section{RL with $n$-wise Comparisons}
In the previous section, we propose a sample-efficient algorithm with near-optimal regret for the problem of PbRL with trajectory feedback. However, this setting cannot cover some other RL situations with preference feedback. For example, in robotics, sampling new trajectories can be expensive and time-consuming compared with labeling preferences among trajectories. The human overseer may sample multiple trajectories in a distributed manner and compare all these trajectories with each other. In clinical trails, different medical treatments can be evaluated simultaneously, and the feedback is a pairwise comparison or a ranking among them. In this section, we propose a new setting called RL with $n$-wise comparisons, which is an extension of PbRL with trajectory feedback. We describe the setup and learning objective, followed by the algorithm and theoretical guarantees.

\subsection{Setup and Learning Objective}
Compared with the problem of PbRL, the main difference is that the agent needs to execute $n$ policies in each episode. Specifically, in the $k$-th episode, the agent executes $n$ policies $\{\pi_{k, i}\}_{i \in [n]}$ , and obtains $n$ trajectories $\{\tau_{k, i}\}_{i \in [n]}$. The agent receives the feedback of $n(n-1)/2$ pairwise comparisons $\{o_{k, i, j}\}_{1 \le i < j \le n}$, where $o_{k, i, j}$ is a Bernoulli random variable such that $\Pr (o_{k, i, j} = 1) = \Pr (\tau_{k, i} > \tau_{k, j})$.
Recall that we use the notations $\mathbb{T}(\tau_1, \tau_2) = \Pr(\tau_1 > \tau_2)$ and $\mathbb{T}(\pi_1,\pi_2) = \mathbb{E}_{\tau_{1} \sim (\mathbb{P}, \pi_{1}), \tau_2 \sim ( \mathbb{P}, \pi_2)}\mathbb{T}(\tau_1,\tau_{2})$. 
For such a problem, our goal is to minimize the regret, which is defined as 
$Reg(K) = \sum_{k = 1}^K\sum_{i = 1}^n \left(\mathbb{T}(\pi^*, \pi_{k, i}) - \frac{1}{2} \right)$,
where $\pi^*$ is defined in Assumption~\ref{assumption: definition of pistar}. When $n = 2$, this regret reduces to the regret in the standard PbRL setting.

\subsection{Algorithm}

The algorithm, which is formally described in Algorithm~\ref{alg:pairwise}, shares the similar framework with Algorithm~\ref{alg: PbRL}. The main difference is on the construction of confidence sets and bonus terms. Similar to the PbRL setting, we use least-squares regression to estimate the preference function $\mathbb{T}$:

\begin{align}
\label{eqn: definition of hatT 2}
    \hat{\mathbb{T}}_k = \argmin_{\mathbb{T}^{\prime} \in \caF_{T}} \sum_{t=1}^{k-1}\sum_{i = 1}^n\sum_{j = i+1}^n \left({\mathbb{T}}^{\prime}({\tau}_{t,i},{\tau}_{t,j})- o_{t, i, j}\right)^2.
\end{align}
Notably, we have $n(n-1)/2$ samples instead of one sample in each episode.We construct the confidence set centered at $\hat{\mathbb{T}}_k$ by 
\begin{align}
\label{eqn: confidence set for T 2}
    \caB_{\mathbb{T},k} = \left\{ \tilde{\mathbb{T}} \mid  \sum_{t=1}^{k-1}\sum_{i = 1}^n\sum_{j = i+1}^n \left(\hat{\mathbb{T}}_{k} - \tilde{\mathbb{T}}\right)^2({\tau}_{t,i}, {\tau}_{t,j}) \leq \beta_{\mathbb{T}}\right\}.
\end{align}
Given the confidence set $\caB_{\mathbb{T},k}$, we also use the function $b_{\mathbb{T},k}(\tau_1,\tau_2) = \max_{f_1, f_2 \in \caB_{\mathbb{T},k}}\left|f_1(\tau_1,\tau_2) - f_2(\tau_1,\tau_2)\right|$ to measure its uncertainty.

\begin{algorithm}[t]
  \caption{PbOP+: Pairwise Preference-based Optimistic Planning}
    \begin{algorithmic}[1] \label{alg:pairwise}
        \STATE Set $\beta_{\mathbb{T}} =  8 \log(2K \mathcal{N}\left(\mathcal{F}_{\mathbb{T}}, 1/(Kn^2),\|\cdot\|_{\infty}\right)/ \delta)$ and $\beta_{\mathbb{P}} = 8 \log(2K \mathcal{N}\left(\mathcal{F}_{\mathbb{P}}, 1/(Kn),\|\cdot\|_{\infty}\right)/ \delta)$
        \FOR{episode $k = 1,\cdots, K$}
              \STATE Calculate the estimation $\hat{\mathbb{T}}_{k}$ and $\hat{\mathbb{P}}_k$ using least-squares regression (Eqn.~\eqref{eqn: definition of hatT 2} and \eqref{eqn: definition of hatP 2}) 
              \STATE Construct the high-confidence set $\caB_{\mathbb{T},k}$ for the preference $\mathbb{T}$ (Eqn.~\eqref{eqn: confidence set for T 2})
              \STATE Construct the high-confidence set $\caB_{\mathbb{P},k}$ for transition $\mathbb{P}$ (Eqn.~\eqref{eqn: confidence set for P 2})
              \STATE Define the bonus term $b_{\mathbb{T},k}(\tau_1,\tau_2) = \max_{f_1, f_2 \in \caB_{\mathbb{T},k}}\left|f_1(\tau_1,\tau_2) - f_2(\tau_1,\tau_2)\right|$
              \STATE Define the bonus term $b_{\mathbb{P},k}(s, a) = \max_{\mathbb{P}_1,\mathbb{P}_2 \in \caB_{\mathbb{P},k}} \max_{V \in \caV} (\mathbb{P}_1-\mathbb{P}_2)V(s,a)$
              \STATE Set $b_{\mathbb{P},k}(\tau) = \sum_{(s,a) \in \tau} b_{\mathbb{P},k}(s, a)$
              \STATE Compute the policy set $\caS_k$ as Eqn.~\eqref{eqn: definition of S_k 2}
              \STATE Compute policy $(\pi_{k,1},\pi_{k,2}, \cdots, \pi_{k, n})$ as Eqn.~\eqref{eqn: compute policies 2}
              \STATE Execute the policies $(\pi_{k,1}, \pi_{k, 2}, \cdots, \pi_{k,n})$ for one episode, respectively, and then observe the trajectories $(\tau_{k,1}, \tau_{k,2}, \cdots, \pi_{k, n})$
              \STATE Receive the preference $o_{k, i, j}$ between $\tau_{k,i}$ and $\tau_{k,j}$ for all $(i, j) \in \{(i, j) | 1 \le i < j \le n\}$
      \ENDFOR
    \end{algorithmic}
  \end{algorithm}

For estimating the transition dynamics, we utilize historical trajectories $\{\tau_{t, i}\}_{(t, i) \in [k - 1] \times [n]}$ to perform the least-squares regression: 
\begin{align}
    \label{eqn: definition of hatP 2}
    \hat{\mathbb{P}}_{k} =\operatorname{argmin}_{\mathbb{P}^{\prime} \in \mathcal{P}} \sum_{i=1}^{n} \sum_{t=1}^{k-1} &\sum_{h=1}^{H} \big(\langle \mathbb{P}^{\prime}(\cdot \mid s_{t,h,i}, a_{t,h,i}), V_{k,h,i}\rangle \quad\notag\\
    &\quad -V_{k,h,i}(s_{k,h+1,i})\big)^{2},
\end{align}
where $V_{k, h, i}$ is the target function defined as follows. Specifically, we construct the high confidence set for transition $\mathbb{P}$, which is defined as
\begin{align}
\label{eqn: confidence set for P 2}
    \caB_{\mathbb{P},k} = \left\{ \mathbb{P}' \mid L_k(\mathbb{P}', \hat{\mathbb{P}}_k) \leq \beta_{\mathbb{P}}\right\},
\end{align}
where $L_k(\cdot, \cdot)$ is defined by
\begin{align}
    L_{k}\left(\mathbb{P}_1, {\mathbb{P}}_{2}\right)=\sum_{i=1}^n &\sum_{t=1}^{k-1} \sum_{h=1}^{H}\big(\langle \mathbb{P}_1\left(\cdot \mid s_{t,h,i}, a_{t,h,i}\right) \notag\\
    & -{\mathbb{P}}_{2}\left(\cdot \mid s_{t,h,i}, a_{t,h,i}\right), V_{t,h,i}\rangle\big)^{2}.
\end{align}
Given any $V \in \mathcal{V}$, we choose the associated bonus $b_{\mathbb{P}, k}(s, a, V)$ as
$b_{\mathbb{P}, k}(s, a, V) = \max_{\mathbb{P}_1, \mathbb{P}_2}(\mathbb{P}_1 - \mathbb{P}_2)V(s, a).$
Such a bonus function measures the uncertainty of the confidence set $\mathcal{B}_{\mathbb{P},k}$. We also choose the target value function $V_{t, h, i}$ as the function that can maximize the uncertainty. Formally, let $V_{\max, k, s, a} = \argmax_{V \in \mathcal{V}} b_{\mathbb{P}, k}(s, a, V)$, then we use $V_{\max, t, s_{t, h, i}, a_{t, h, i}}$ as the online target for the historical sample $(s_{t, h, i}, a_{t, h, i}, s_{t, h+1, i})$. Also we denote $b_{\mathbb{P}, k}(s, a) = \max_{V \in \mathcal{V}}b_{\mathbb{P}, k}(s, a, V)$ and $b_{\mathbb{P}, k}(\tau) = \sum_{(s, a) \in \tau}b_{\mathbb{P}, k}(s, a)$.

Given the estimated transition $\hat{\mathbb{P}}_k$, bonus for transition $b_{\mathbb{P}, k}$ and bonus for preference function $b_{\mathbb{T}, k}$, we can construct the near-optimal set $\mathcal{S}_k$ like Eqn.~\eqref{eqn: definition of S_k}:
\begin{align}\label{eqn: definition of S_k 2}
    \caS_k = \Big\{\pi \mid &\mathbb{E}_{\tau \sim (\hat{\mathbb{P}}_k,\pi), \tau_0 \sim (\hat{\mathbb{P}}_k,\pi_0)}\big(\hat{\mathbb{T}}_k(\tau,\tau_0) + b_{\mathbb{T},k}(\tau,\tau_0) \notag\\ 
    & + b_{\mathbb{P},k}(\tau) + b_{\mathbb{P},k}(\tau_0) \big) \geq \frac{1}{2}, \forall \pi_0 \in \Pi \Big\} .
\end{align}
Finally, we choose the exploratory polices $(\pi_{k,1},\pi_{k,2}, \cdots, \pi_{k, n})$ that can maximize the pairwise uncertainty. We luckily find that the summation of bonuses exactly characterize the uncertainty of the policy tuple:
\begin{align} \label{eqn: compute policies 2}
    &(\pi_{k,1},\pi_{k,2}, \cdots, \pi_{k, n}) = \argmax_{\pi_1,\pi_2, \cdots, \pi_n \in \caS_k} \sum_{i = 1}^n \sum_{j = i+1}^{n}  \\
    & \mathbb{E}_{ \tau_i \sim (\hat{\mathbb{P}}_k,\pi_i), \tau_j \sim (\hat{\mathbb{P}}_k,\pi_j)} \left(b_{\mathbb{T},k}(\tau_i,\tau_j) + b_{\mathbb{P},k}(\tau_i) + b_{\mathbb{P},k}(\tau_j)\right). \notag 
\end{align}

\subsection{Theoretical Guarantees}

In the following theorem, we establish the regret upper bound for Algorithm~\ref{alg:pairwise}. The proof of this theorem is deferred to Appendix~\ref{appendix:pairwise:comparison}.

\begin{theorem}
\label{theorem: pairwise}
With probability at least $1-\delta$, the regret of Algorithm~\ref{alg:pairwise} is upper bounded by 
\begin{align*}
Reg(K) &\leq \tilde{O}( \sqrt{d_{\mathbb{P}}HnK \log(\caN\left(\caF_{\mathbb{P}}, 1/(Kn), \|\cdot\|_{\infty}\right)/\delta)} \\
& \qquad + \sqrt{d_{\mathbb{T}}K \log(\caN\left(\caF_{\mathbb{T}}, 1/(Kn^2), \|\cdot\|_{\infty}\right)/\delta)}),
\end{align*}
where $d_{\mathbb{P}}$ and $d_{\mathbb{T}}$ is the $1/K$-Eluder dimension of $\caF_{\mathbb{P}}$ and $\caF_{\mathbb{T}}$, respectively. 
\end{theorem}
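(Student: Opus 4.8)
The plan is to follow the same optimism-based template as the proof of Theorem~\ref{theorem: dueling RL}, adapting each step to the $n$-wise structure; the only genuinely new ingredient is an averaging argument that converts bonuses evaluated against the unexecuted comparator $\pi^*$ into bonuses of the executed policies. First I would establish the validity of the confidence sets. A standard least-squares concentration argument (Bernoulli responses for the preference, bounded value targets for the transition), combined with a union bound over a $1/(Kn^2)$-cover of $\caF_{\mathbb{T}}$ and a $1/(Kn)$-cover of $\caF_{\mathbb{P}}$, shows that with probability at least $1-\delta$ the true models satisfy $\mathbb{T}\in\caB_{\mathbb{T},k}$ and $\mathbb{P}\in\caB_{\mathbb{P},k}$ simultaneously for all $k\in[K]$; the choices of $\beta_{\mathbb{T}}$ and $\beta_{\mathbb{P}}$ in Algorithm~\ref{alg:pairwise} are exactly what is needed because each episode now contributes $n(n-1)/2$ preference samples and $n$ trajectories. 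On this event the widths dominate the estimation errors: $b_{\mathbb{T},k}$ upper bounds $|\hat{\mathbb{T}}_k-\mathbb{T}|$ pointwise, and, via a one-step simulation (telescoping) argument, $\mathbb{E}_{\tau\sim(\hat{\mathbb{P}}_k,\pi),\tau'\sim(\hat{\mathbb{P}}_k,\pi')}[b_{\mathbb{T},k}(\tau,\tau')+b_{\mathbb{P},k}(\tau)+b_{\mathbb{P},k}(\tau')]$ sandwiches the gap between the true $\mathbb{T}(\pi,\pi')$ and its plug-in estimate under $\hat{\mathbb{P}}_k$. This yields optimistic and pessimistic surrogates for $\mathbb{T}(\pi,\pi')$ whose difference is twice this expected bonus, and, using the optimal-arm Assumption~\ref{assumption: definition of pistar} together with the defining inequality of $\caS_k$ in Eqn.~\eqref{eqn: definition of S_k 2}, it gives $\pi^*\in\caS_k$ for all $k$.

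Next I would bound the per-episode regret. Since each executed $\pi_{k,i}\in\caS_k$, instantiating the $\caS_k$ constraint at $\pi_0=\pi^*$ and invoking the skew-symmetry $f(\tau,\tau')+f(\tau',\tau)=1$ of $\caF_{\mathbb{T}}$ gives, for every $i$,
\[
\mathbb{T}(\pi^*,\pi_{k,i})-\tfrac12\le 2\,\mathbb{E}\big[b_{\mathbb{T},k}(\tau^*,\tau_{k,i})+b_{\mathbb{P},k}(\tau^*)+b_{\mathbb{P},k}(\tau_{k,i})\big].
\]
The difficulty is that the right-hand side contains widths evaluated at the comparator $\pi^*$, whose trajectories are never executed and hence never enter the confidence sets, so the Eluder machinery cannot be applied to them directly.

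Here is the crux. Writing $\Phi_k$ for the exploration objective in Eqn.~\eqref{eqn: compute policies 2}, I would exploit that $(\pi_{k,1},\dots,\pi_{k,n})$ maximizes $\Phi_k$ over $\caS_k^{\,n}$ and that $\pi^*\in\caS_k$. Substituting $\pi^*$ into the $\ell$-th coordinate gives $\Phi_k(\dots,\pi^*,\dots)\le\Phi_k(\pi_{k,1},\dots,\pi_{k,n})$; summing these $n$ inequalities over $\ell\in[n]$ and using $\sum_{i<j}(b_{\mathbb{P},k}(\tau_i)+b_{\mathbb{P},k}(\tau_j))=(n-1)\sum_i b_{\mathbb{P},k}(\tau_i)$ yields
\[
\sum_{i}\mathbb{E}[b_{\mathbb{T},k}(\tau^*,\tau_{k,i})]+n\,\mathbb{E}[b_{\mathbb{P},k}(\tau^*)]\le \frac{2}{n-1}\sum_{i<j}\mathbb{E}[b_{\mathbb{T},k}(\tau_{k,i},\tau_{k,j})]+\sum_i\mathbb{E}[b_{\mathbb{P},k}(\tau_{k,i})].
\]
Summing the per-episode bound over $i$ and substituting this inequality bounds the episode-$k$ regret by $O\big(\tfrac{1}{n-1}\sum_{i<j}\mathbb{E}[b_{\mathbb{T},k}(\tau_{k,i},\tau_{k,j})]+\sum_i\mathbb{E}[b_{\mathbb{P},k}(\tau_{k,i})]\big)$, now expressed entirely through executed policies.

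Finally I would sum over episodes and apply the Eluder-dimension potential bound (as in \citet{russo2013eluder,ayoub2020model}), after a routine Azuma step converting the $\hat{\mathbb{P}}_k$-expectations into sums over the realized state-action pairs and trajectory pairs. The transition term aggregates $\sum_{k,i}b_{\mathbb{P},k}(\tau_{k,i})$ over the $KnH$ observed state-action pairs, giving $\tilde O(\sqrt{d_{\mathbb{P}}\beta_{\mathbb{P}}\,HnK})$. For the preference term the $K\binom{n}{2}$ observed pairs give $\sum_k\sum_{i<j}\mathbb{E}[b_{\mathbb{T},k}]\le \tilde O(\sqrt{d_{\mathbb{T}}\beta_{\mathbb{T}}\,Kn(n-1)/2})$; crucially, the prefactor $1/(n-1)$ from the averaging step cancels the $\sqrt{n(n-1)}$ growth, leaving $\tilde O(\sqrt{d_{\mathbb{T}}\beta_{\mathbb{T}}K})$ with no dependence on $n$. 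Plugging in the values of $\beta_{\mathbb{T}},\beta_{\mathbb{P}}$ gives the claimed bound. I expect the main obstacle to be precisely the averaging/maximality argument of the third paragraph: it is what simultaneously removes the unexecuted comparator $\pi^*$ from the bonuses and produces the exact $1/(n-1)$ factor that makes the preference regret independent of $n$.
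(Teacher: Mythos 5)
Your proposal is correct and follows essentially the same route as the paper's proof: the same concentration and simulation lemmas, the same $\caS_k$-membership decomposition with skew-symmetry giving the factor-$2$ per-episode bound, the same coordinate-substitution/maximality argument yielding the $\tfrac{1}{n-1}$ averaging factor that eliminates the unexecuted comparator $\pi^*$, and the same Azuma-plus-Eluder aggregation in which $\tfrac{1}{n-1}\cdot\sqrt{Kn^2}$ collapses to $\sqrt{K}$ for the preference term while the transition term retains the $\sqrt{n}$ factor. Your write-up is, if anything, more explicit than the paper's about where the skew-symmetry of $\caF_{\mathbb{T}}$ and the cancellation of $n$-dependence enter.
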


By replacing $K$ with $K/n$ in Theorem~\ref{theorem: pairwise}, we obtain a  bound of $\tilde{O}( \sqrt{d_{\mathbb{P}}HK \log(\caN\left(\caF_{\mathbb{P}}, 1/(Kn), \|\cdot\|_{\infty}\right)/\delta)} + \sqrt{d_{\mathbb{T}}K/n \log(\caN\left(\caF_{\mathbb{T}}, 1/(Kn^2), \|\cdot\|_{\infty}\right)/\delta)})$. This improves the regret bound in Theorem~\ref{theorem: dueling RL} by a factor of $\sqrt{n}$ on the second term, which is the benefit of additional information from $n$-wise comparisons.
\section{Conclusion}
\label{sec: conlusion}
This paper studies the regret minimization problem of PbRL with trajectory feedback and general function approximation. Based on the value-targeted regression and optimistic planning methods, we propose a novel RL algorithm called PbOP with regret $\tilde{O}(\operatorname{poly}(dH)\sqrt{K})$. Our lower bound indicates that our regret upper bound is tight w.r.t. the feature dimension and the number of episodes when specialized to the linear setting.  Furthermore, we formulate a novel setting called RL with $n$-wise comparisons and provide the first sample efficient algorithm in this setting.

A few problems still remain open. Firstly, there is still a gap of $\sqrt{H}$ between Corollaries~\ref{theorem: upper bound in linear setting} and \ref{theorem:lb:dueling:rl}. We conjecture that our upper bound is not tight, which can possibly be improved with more refined concentration analysis based on Bernstein bounds. Secondly, our algorithm is computationally inefficient due to non-Markovian feedback. It is tempting to design both statistically and computationally efficient algorithms in a relaxed PbRL setting. Finally, our setting of RL with $n$-wise comparisons does not cover the case where the feedback among $n$ trajectories is a $n$-wise ranking~\citep{negahban2018learning}, which is also an interesting problem to be explored in future research.

\section*{Acknowledgement}
Liwei Wang was supported by National Key R\&D Program of China (2018YFB1402600), Exploratory Research Project of Zhejiang Lab (No. 2022RC0AN02), BJNSF (L172037), Project 2020BD006 supported by PKUBaidu Fund, the major key project of PCL (PCL2021A12).

\bibliography{references}
\bibliographystyle{icml2022}

\newpage
\appendix
\onecolumn
\section{Proof of Theorem~\ref{theorem: dueling RL}}
\label{appendix: proof of dueling RL theorem}

\begin{lemma}
\label{lemma: high confidence bound for hatT}
Fix $\delta \in (0,1)$, with probability at least $1-\delta$, for all $k \in [K]$,
\begin{align}
    \sum_{t=1}^{k-1}\left(\hat{\mathbb{T}}_{k} - \mathbb{T}\right)^2(\tau_{t,1}, \tau_{t,2}) \leq  \beta_{\mathbb{T}}.
\end{align}
\end{lemma}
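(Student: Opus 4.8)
The plan is to establish a uniform (over all $k \in [K]$) concentration bound for the least-squares estimator $\hat{\mathbb{T}}_k$ relative to the true preference function $\mathbb{T}$, where the error is measured empirically on the observed trajectory pairs $\{(\tau_{t,1},\tau_{t,2})\}_{t=1}^{k-1}$. The key structural fact I would exploit is that $o_t$ is a Bernoulli random variable whose conditional mean, given the history and the sampled trajectories $(\tau_{t,1},\tau_{t,2})$, equals exactly $\mathbb{T}(\tau_{t,1},\tau_{t,2})$. Thus the regression problem \eqref{eqn: definition of hatT} is a well-specified least-squares problem with bounded responses in $[0,1]$, and the true regressor $\mathbb{T}$ lies in the class $\mathcal{F}_{\mathbb{T}}$ by assumption.

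First I would set up the standard martingale decomposition. Define the per-step excess squared loss difference between any candidate $\mathbb{T}' \in \mathcal{F}_{\mathbb{T}}$ and the truth $\mathbb{T}$, namely $(\mathbb{T}'(\tau_{t,1},\tau_{t,2})-o_t)^2 - (\mathbb{T}(\tau_{t,1},\tau_{t,2})-o_t)^2$. Taking conditional expectation with respect to the filtration generated by the first $t-1$ episodes together with the sampled trajectory pair at step $t$, the cross term involving $o_t - \mathbb{T}(\tau_{t,1},\tau_{t,2})$ vanishes, leaving the conditional mean equal to $(\mathbb{T}'-\mathbb{T})^2(\tau_{t,1},\tau_{t,2})$. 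The deviation of the cumulative loss difference from this conditional-mean sum forms a martingale difference sequence with bounded increments (since all quantities lie in $[0,1]$), so I would apply a Freedman- or Azuma-type concentration inequality to control it.

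The main obstacle, and the step requiring the most care, is converting a pointwise (per-fixed-$\mathbb{T}'$) concentration bound into a bound that holds uniformly over the infinite class $\mathcal{F}_{\mathbb{T}}$ and simultaneously over all episodes $k$. This is precisely where the covering number $\mathcal{N}(\mathcal{F}_{\mathbb{T}}, 1/K, \|\cdot\|_\infty)$ enters: I would fix a minimal $(1/K)$-cover of $\mathcal{F}_{\mathbb{T}}$, apply the tail bound with a union bound over the $K$ episodes and over all cover elements, and then argue that the discretization error introduced by replacing any $\mathbb{T}'$ with its nearest cover element contributes only lower-order terms (because each of the at most $K$ summands changes by at most $O(1/K)$ in $\ell_\infty$, giving an $O(1)$ total slack). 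Tracking these constants is what produces the choice $\beta_{\mathbb{T}} = 8\log(2K\mathcal{N}(\mathcal{F}_{\mathbb{T}}, 1/K, \|\cdot\|_\infty)/\delta)$ in Algorithm~\ref{alg: PbRL}.

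Finally I would specialize the uniform bound to the empirical minimizer $\hat{\mathbb{T}}_k$. By optimality of $\hat{\mathbb{T}}_k$, its cumulative squared loss is no larger than that of $\mathbb{T}$, so the cumulative loss difference is non-positive; combining this with the concentration inequality that lower-bounds the loss difference by $\sum_{t=1}^{k-1}(\hat{\mathbb{T}}_k - \mathbb{T})^2(\tau_{t,1},\tau_{t,2})$ minus the confidence slack yields $\sum_{t=1}^{k-1}(\hat{\mathbb{T}}_k - \mathbb{T})^2(\tau_{t,1},\tau_{t,2}) \le \beta_{\mathbb{T}}$ for all $k$ with probability at least $1-\delta$, which is exactly the claim. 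I expect this to follow the template of Proposition~5 or the analogous generalization bound in \citet{russo2013eluder} and \citet{ayoub2020model}, adapted to the $\{0,1\}$-valued preference feedback.
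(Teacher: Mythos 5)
Your overall architecture --- well-specified least-squares regression with $\mathbb{E}[o_t \mid \mathbb{F}_{t-1}] = \mathbb{T}(\tau_{t,1},\tau_{t,2})$, a martingale decomposition of the excess loss, a union bound over a $(1/K)$-cover and over episodes, an $O(1)$ discretization slack, and finally optimality of the empirical minimizer --- is exactly the standard argument, and it is in substance what the paper relies on: the paper's own proof is a one-line invocation of Lemma~\ref{lemma: auxiliary Lemma for confidence set} (Theorem 5 of \citet{ayoub2020model}) with $X_t = (\tau_{t,1},\tau_{t,2})$, $Y_t = o_t$, $f_* = \mathbb{T}$, $C = 1$, $\alpha = 1/K$. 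You are re-proving that cited lemma rather than citing it, which is legitimate.

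However, there is one concrete gap in your concentration step. You justify the martingale bound solely by ``bounded increments (since all quantities lie in $[0,1]$)'' and treat Freedman- and Azuma-type inequalities as interchangeable; they are not interchangeable here. With only bounded increments (Azuma, or Freedman with the crude per-step variance bound $O(1)$), the deviation of the martingale
\begin{align*}
M_k = \sum_{t=1}^{k-1} 2\bigl(f - \mathbb{T}\bigr)(\tau_{t,1},\tau_{t,2})\bigl(\mathbb{T}(\tau_{t,1},\tau_{t,2}) - o_t\bigr)
\end{align*}
is of order $\sqrt{K \log(1/\delta)}$, so after the union bound and the optimality argument you would only obtain $\sum_{t=1}^{k-1}(\hat{\mathbb{T}}_k - \mathbb{T})^2(\tau_{t,1},\tau_{t,2}) \lesssim \sqrt{K \log(K \mathcal{N}(\mathcal{F}_{\mathbb{T}},1/K,\|\cdot\|_{\infty})/\delta)}$ --- not the claimed purely logarithmic $\beta_{\mathbb{T}}$, and too weak to feed into the Eluder-dimension argument of Lemma~\ref{lemma: summation of preference bonus}. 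The missing idea is self-normalization: the conditional variance of the $t$-th increment of $M_k$ is at most $4(f-\mathbb{T})^2(\tau_{t,1},\tau_{t,2}) \operatorname{Var}(o_t \mid \mathbb{F}_{t-1}) \le (f-\mathbb{T})^2(\tau_{t,1},\tau_{t,2})$, i.e., the predictable quadratic variation is bounded by the very sum you are trying to control. Applying Freedman's inequality (or the self-normalized exponential inequality underlying Theorem 5 of \citet{ayoub2020model}) with this variance bound gives $|M_k| \lesssim \sqrt{\sum_{t<k}(f-\mathbb{T})^2(\tau_{t,1},\tau_{t,2}) \cdot \log(\cdot)} + \log(\cdot)$, and AM--GM absorbs the square-root term into $\tfrac{1}{2}\sum_{t<k}(f-\mathbb{T})^2(\tau_{t,1},\tau_{t,2}) + O(\log(\cdot))$; only then does the non-positivity of the empirical loss difference for $\hat{\mathbb{T}}_k$ yield a confidence width of order $\log(K\mathcal{N}/\delta)$. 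With that single repair, your proof is correct and coincides with the proof of the lemma the paper cites.
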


\begin{proof}
This lemma can be proved by the direct application of Lemma~\ref{lemma: auxiliary Lemma for confidence set}.
\end{proof}

By Lemma~\ref{lemma: high confidence bound for hatT}, we know that the true preference $\mathbb{T}(\tau_1,\tau_2) \in \caB_{\mathbb{T},k}$ with high probability.

\begin{lemma}
\label{lemma: high confidence bound for hatP}
Fix $\delta \in (0,1)$, with probability at least $1-\delta$, for all $k \in [K]$,
\begin{align}
    L_{k}\left(\mathbb{P}, \hat{\mathbb{P}}_{k}\right)=\sum_{i=1}^2\sum_{t=1}^{k-1} \sum_{h=1}^{H}\left(\left\langle \mathbb{P}\left(\cdot \mid s_{t,h,i}, a_{t,h,i}\right)-\hat{\mathbb{P}}_{k}\left(\cdot \mid s_{t,h,i}, a_{t,h,i}\right), V_{t,h,i}\right\rangle\right)^{2} \leq \beta_{\mathbb{P}}.
\end{align}
\end{lemma}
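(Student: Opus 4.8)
The plan is to recognize the transition estimate $\hat{\mathbb{P}}_k$ as the solution of a value-targeted least-squares regression and then invoke the generic deviation bound of Lemma~\ref{lemma: auxiliary Lemma for confidence set}, exactly as was done for the preference model in Lemma~\ref{lemma: high confidence bound for hatT}. Concretely, I would treat each triple $(t,h,i)$ with $t \in [k-1]$, $h \in [H]$, $i \in \{1,2\}$ as a single regression sample whose context is $x_{t,h,i} = (s_{t,h,i}, a_{t,h,i}, V_{t,h,i})$, whose response is $y_{t,h,i} = V_{t,h,i}(s_{t,h+1,i})$, and whose target regression function is $f_{\mathbb{P}} \in \caF_{\mathbb{P}}$ defined by $f_{\mathbb{P}}(s,a,V) = \langle \mathbb{P}(\cdot \mid s,a), V\rangle$. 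Under this identification $L_k(\mathbb{P}, \hat{\mathbb{P}}_k) = \sum_{t,h,i}\bigl(f_{\mathbb{P}}(x_{t,h,i}) - f_{\hat{\mathbb{P}}_k}(x_{t,h,i})\bigr)^2$ is precisely the in-sample squared prediction gap between the true regressor and the least-squares minimizer over $\caF_{\mathbb{P}}$, which is the quantity that Lemma~\ref{lemma: auxiliary Lemma for confidence set} controls.

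The key step is to verify the noise structure required by the auxiliary lemma. I would fix a total order on the index set $\{(t,h,i)\}$ (episodes, then trajectories, then steps) and the corresponding filtration in which, at the time sample $(t,h,i)$ is generated, the context $x_{t,h,i}$ is already revealed but the next state $s_{t,h+1,i}$ is not. Since the online target $V_{t,h,i} = V_{\max, t, s_{t,h,i}, a_{t,h,i}}$ is computed from the confidence set $\caB_{\mathbb{P}, t}$ (hence from data strictly before episode $t$) together with the observed pair $(s_{t,h,i}, a_{t,h,i})$, the target is measurable with respect to $\mathcal{F}_{t,h,i}$, so the context is predictable. Consequently the noise $\eta_{t,h,i} = y_{t,h,i} - f_{\mathbb{P}}(x_{t,h,i})$ satisfies $\mathbb{E}[\eta_{t,h,i} \mid \mathcal{F}_{t,h,i}] = 0$ by the Markov property of $\mathbb{P}$, and because every $V \in \mathcal{V}$ takes values in $[0,1]$, both $y_{t,h,i}$ and its conditional mean lie in $[0,1]$; hence $\eta_{t,h,i}$ is bounded and sub-Gaussian with a constant-order parameter, which is exactly the hypothesis of the generic least-squares bound.

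I would then apply Lemma~\ref{lemma: auxiliary Lemma for confidence set} to the function class $\caF_{\mathbb{P}}$, whose complexity enters only through the $1/K$-covering number $\mathcal{N}(\caF_{\mathbb{P}}, 1/K, \|\cdot\|_{\infty})$, with at most $2(k-1)H \le 2KH$ samples collected up to episode $k-1$. Taking a union bound over $k \in [K]$, which is the source of the factor $2K$ appearing inside $\beta_{\mathbb{P}}$, yields with probability at least $1-\delta$, simultaneously for all $k \in [K]$, the stated bound $L_k(\mathbb{P}, \hat{\mathbb{P}}_k) \le \beta_{\mathbb{P}}$.

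I expect the only genuine subtlety, and hence the main obstacle, to be the measurability bookkeeping in the second step: one must order the filtration so that the adaptively chosen target $V_{t,h,i}$ (obtained via the optimistic maximization over $\mathcal{V}$ against $\caB_{\mathbb{P},t}$) is frozen before the next state is sampled, thereby preserving the martingale-difference property of the noise despite the data dependence of the targets. Once this is established, the discretization through the covering number and the union bound over episodes are routine and mirror the proof of Lemma~\ref{lemma: high confidence bound for hatT}.
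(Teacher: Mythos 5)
Your proposal is correct and takes essentially the same route as the paper: the paper's own proof is a one-line invocation of Lemma~\ref{lemma: auxiliary Lemma for confidence set} (Theorem 5 of Ayoub et al.), and your write-up supplies precisely the identification that application requires --- samples $x_{t,h,i}=(s_{t,h,i},a_{t,h,i},V_{t,h,i})$ with responses $V_{t,h,i}(s_{t,h+1,i})$, the function class $\caF_{\mathbb{P}}$, the predictable (frozen-before-transition) online targets giving bounded, conditionally mean-zero sub-Gaussian noise, and the covering-number/uniform-in-$k$ step. The only cosmetic difference is that you obtain uniformity over $k\in[K]$ by a union bound, whereas the auxiliary lemma already holds simultaneously for all $t\geq 1$; both are valid and yield the stated $\beta_{\mathbb{P}}$.
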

\begin{proof}
This lemma can be proved by the direct application of Lemma~\ref{lemma: auxiliary Lemma for confidence set}.
\end{proof}

By Lemma~\ref{lemma: high confidence bound for hatP}, we know that the true transition kernel $\mathbb{P}(s'|s,a) \in \caB_{\mathbb{P},k}$ with high probability. 

We denote the high-probability event in Lemmas~\ref{lemma: high confidence bound for hatT} and \ref{lemma: high confidence bound for hatP} as $\mathcal{E}$.

\begin{lemma}
\label{lemma: error due to transition estimation}
Under event $\caE$, for any two policies $\pi_1, \pi_2$ and scalar function $f: \operatorname{Traj} \times \operatorname{Traj} \rightarrow [0,1]$, 
\begin{align}
    \mathbb{E}_{\tau_1 \sim (\mathbb{P}, \pi_1),\tau_2 \sim (\mathbb{P}, \pi_2)}[f(\tau_1,\tau_2)] - \mathbb{E}_{\tau_1 \sim (\hat{\mathbb{P}}_{k}, \pi_1),\tau_2 \sim (\hat{\mathbb{P}}_{k}, \pi_2)}[f(\tau_1,\tau_2)] \leq \mathbb{E}_{\tau_1 \sim (\hat{\mathbb{P}}_{k}, \pi_1),\tau_2 \sim (\hat{\mathbb{P}}_{k}, \pi_2)}[b_{\mathbb{P},k}(\tau_1) + b_{\mathbb{P},k}(\tau_2)]
\end{align}
\end{lemma}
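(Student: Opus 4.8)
The plan is to express the difference of expectations as a telescoping sum over the $H$ steps of the horizon, bounding the per-step error between the true transition $\mathbb{P}$ and the estimate $\hat{\mathbb{P}}_k$ by the transition bonus. First I would introduce, for each $h \in [H]$, a hybrid trajectory distribution that follows the true transition $\mathbb{P}$ for the first $h-1$ steps and the estimated transition $\hat{\mathbb{P}}_k$ thereafter, so that the two endpoints of the telescope are exactly the fully-true and fully-estimated distributions appearing in the statement. Since the function $f$ takes values in $[0,1]$ and depends on the pair $(\tau_1,\tau_2)$, I would apply this telescoping argument to both $\pi_1$ and $\pi_2$ simultaneously (or one at a time, treating the other policy's trajectory as fixed and integrating it out), which is why the right-hand side carries both $b_{\mathbb{P},k}(\tau_1)$ and $b_{\mathbb{P},k}(\tau_2)$.

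The key step is the single-step bound. At step $h$, the difference between rolling forward one step under $\mathbb{P}$ versus under $\hat{\mathbb{P}}_k$, with the remaining value captured by some function $V \in \mathcal{V}$ (namely the conditional expectation of $f$ over the rest of the trajectory, which lies in $[0,1]$ and hence in $\mathcal{V}$), is exactly $(\mathbb{P}-\hat{\mathbb{P}}_k)V(s_h,a_h)$. Under the event $\mathcal{E}$ we have $\mathbb{P} \in \caB_{\mathbb{P},k}$ by Lemma~\ref{lemma: high confidence bound for hatP}, and $\hat{\mathbb{P}}_k \in \caB_{\mathbb{P},k}$ trivially, so this single-step difference is bounded by $\max_{\mathbb{P}_1,\mathbb{P}_2 \in \caB_{\mathbb{P},k}}(\mathbb{P}_1-\mathbb{P}_2)V(s_h,a_h) = b_{\mathbb{P},k}(s_h,a_h,V) \le b_{\mathbb{P},k}(s_h,a_h)$ by the definition of the transition bonus. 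Summing these per-step bonuses along the trajectory yields $b_{\mathbb{P},k}(\tau) = \sum_{(s,a)\in\tau} b_{\mathbb{P},k}(s,a)$.

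The main obstacle I anticipate is bookkeeping: carefully defining the hybrid measures and verifying that the intermediate value functions arising from partially integrating $f$ indeed belong to $\mathcal{V} = \{f:\mathcal{S}\to[0,1]\}$, so that the bonus $b_{\mathbb{P},k}(s,a) = \max_{V\in\mathcal{V}} b_{\mathbb{P},k}(s,a,V)$ legitimately dominates each single-step term. Since $f \in [0,1]$, each such conditional expectation is a valid element of $\mathcal{V}$, so the maximization over $\mathcal{V}$ in the bonus definition absorbs the step-$h$ error regardless of which value function appears. A subtle point is that the telescoping must be carried out under the true expectation at early steps and the estimated expectation at later steps; I would handle this by noting that once the single-step error at step $h$ is extracted as a bonus evaluated at $(s_h,a_h)$ drawn from the estimated dynamics, the remaining steps contribute a $\hat{\mathbb{P}}_k$-expectation, which is why the final bound is stated entirely as an expectation under $\hat{\mathbb{P}}_k$. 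Assembling the two policies' contributions gives the claimed inequality.
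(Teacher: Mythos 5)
Your overall strategy --- telescoping over step-$h$ hybrid distributions, writing the one-step error as $(\mathbb{P}-\hat{\mathbb{P}}_k)V(s_h,a_h)$ for a conditional-expectation value function $V \in \mathcal{V}$, and dominating it by $b_{\mathbb{P},k}(s_h,a_h)$ because both $\mathbb{P}$ and $\hat{\mathbb{P}}_k$ lie in $\mathcal{B}_{\mathbb{P},k}$ under $\mathcal{E}$ --- is exactly the paper's argument. However, your hybrid is oriented the wrong way, and this is not cosmetic. You define the step-$h$ hybrid to follow the \emph{true} transition $\mathbb{P}$ for the first $h-1$ steps and $\hat{\mathbb{P}}_k$ thereafter. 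Under that definition, the two neighboring distributions in your telescope share a prefix generated by $\mathbb{P}$, so the switch-point pair $(s_h,a_h)$ has the \emph{true} marginal $d_h(\mathbb{P},\pi)$; your later claim that $(s_h,a_h)$ is ``drawn from the estimated dynamics'' is false for the hybrid you defined. Carried out literally, your telescope proves
\begin{align*}
    \mathbb{E}_{\tau_1 \sim (\mathbb{P}, \pi_1),\tau_2 \sim (\mathbb{P}, \pi_2)}[f(\tau_1,\tau_2)] - \mathbb{E}_{\tau_1 \sim (\hat{\mathbb{P}}_{k}, \pi_1),\tau_2 \sim (\hat{\mathbb{P}}_{k}, \pi_2)}[f(\tau_1,\tau_2)] \leq \mathbb{E}_{\tau_1 \sim ({\mathbb{P}}, \pi_1),\tau_2 \sim ({\mathbb{P}}, \pi_2)}[b_{\mathbb{P},k}(\tau_1) + b_{\mathbb{P},k}(\tau_2)],
\end{align*}
with the bonus expectation under the \emph{true} transition, which is a different (and for this paper, unusable) statement: the set $\mathcal{S}_k$ in Eqn.~\eqref{eqn: definition of S_k} and the exploratory policies in Eqn.~\eqref{eqn: compute policies} are defined through expectations under $\hat{\mathbb{P}}_k$, since those are the only quantities the learner can evaluate, so Lemma~\ref{lemma: pi* in caS} and the regret decomposition need the right-hand side under $\hat{\mathbb{P}}_k$. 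Converting a $\mathbb{P}$-expectation of the bonus into a $\hat{\mathbb{P}}_k$-expectation would itself require another change-of-measure argument of precisely the type being proved, so the orientation cannot be waved away.

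The fix is to reverse the hybrid, exactly as the paper does: let $(\hat{\mathbb{P}}_k,\pi)_h$ follow $\hat{\mathbb{P}}_k$ for the first $h$ steps and $\mathbb{P}$ thereafter, so that $(\hat{\mathbb{P}}_k,\pi)_0$ is the fully-true distribution and $(\hat{\mathbb{P}}_k,\pi)_H$ the fully-estimated one. Then consecutive terms of the telescope share an $\hat{\mathbb{P}}_k$-generated prefix, the switch point has marginal $d_h(\hat{\mathbb{P}}_k,\pi)$, and summing over $h\in[H]$ and over both trajectories yields $\mathbb{E}_{\tau_1 \sim (\hat{\mathbb{P}}_{k}, \pi_1),\tau_2 \sim (\hat{\mathbb{P}}_{k}, \pi_2)}[b_{\mathbb{P},k}(\tau_1) + b_{\mathbb{P},k}(\tau_2)]$ as required. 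With that single change, the rest of your argument --- the conditional expectations of $f$ lying in $\mathcal{V}$ since $f\in[0,1]$, the domination by $b_{\mathbb{P},k}(s,a)=\max_{V\in\mathcal{V}}b_{\mathbb{P},k}(s,a,V)$, and handling the two policies jointly --- goes through verbatim and coincides with the paper's proof.
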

\begin{proof}
We use $\tau \sim (\hat{\mathbb{P}}_k, \pi)_h$ to denote that the first $h$ steps in the trajectory $\tau$ is sampled using policy $\pi$ from the MDP with transition $\hat{\mathbb{P}}_k$, and the state-action pairs from step $h+1$ up until the last step is sampled using policy $\pi$ from the MDP with the true transition ${\mathbb{P}}$. Therefore,
\begin{align}
    & \mathbb{E}_{\tau_1 \sim (\mathbb{P}, \pi_1),\tau_2 \sim (\mathbb{P}, \pi_2)}[f(\tau_1,\tau_2)] - \mathbb{E}_{\tau_1 \sim (\hat{\mathbb{P}}_{k}, \pi_1),\tau_2 \sim (\hat{\mathbb{P}}_{k}, \pi_2)}[f(\tau_1,\tau_2)]  \\
    = &\mathbb{E}_{\tau_1 \sim (\hat{\mathbb{P}}_k, \pi_1)_0,\tau_2 \sim (\hat{\mathbb{P}}_k, \pi_2)_0}[f(\tau_1,\tau_2)] - \mathbb{E}_{\tau_1 \sim (\hat{\mathbb{P}}_{k}, \pi_1)_{H},\tau_2 \sim (\hat{\mathbb{P}}_{k}, \pi_2)_H}[f(\tau_1,\tau_2)] \\
    = & \sum_{h=1}^{H}\mathbb{E}_{\tau_1 \sim (\hat{\mathbb{P}}_k, \pi_1)_{h-1},\tau_2 \sim (\hat{\mathbb{P}}_k, \pi_2)_{h-1}}[f(\tau_1,\tau_2)] - \mathbb{E}_{\tau_1 \sim (\hat{\mathbb{P}}_{k}, \pi_1)_{h},\tau_2 \sim (\hat{\mathbb{P}}_{k}, \pi_2)_h}[f(\tau_1,\tau_2)]
\end{align}

For a given $h$, suppose $d_h({\mathbb{P}}, \pi)$ denotes the state-action distribution in step $h$ when the agent interacts with the MDP with transition $\mathbb{P}$ using policy $\pi$, we have 
\begin{align}
    &\mathbb{E}_{\tau_1 \sim (\hat{\mathbb{P}}_k, \pi_1)_{h-1},\tau_2 \sim (\hat{\mathbb{P}}_k, \pi_2)_{h-1}}[f(\tau_1,\tau_2)] - \mathbb{E}_{\tau_1 \sim (\hat{\mathbb{P}}_{k}, \pi_1)_{h},\tau_2 \sim (\hat{\mathbb{P}}_{k}, \pi_2)_h}[f(\tau_1,\tau_2)] \\
    \leq & \mathbb{E}_{(s_{h,1},a_{h,1}) \sim d_h({\hat{\mathbb{P}}}_k, \pi_1), (s_{h,2},a_{h,2}) \sim d_h({\hat{\mathbb{P}}}_k, \pi_2)} \left[ \max_{V} \left(\left(\mathbb{P} - \hat{\mathbb{P}}_k\right)V(s_{h,1},a_{h,1})\right) +  \max_{V} \left(\left(\mathbb{P} - \hat{\mathbb{P}}_k\right)V(s_{h,2},a_{h,2})\right) \right] \\
    \leq & \mathbb{E}_{(s_{h,1},a_{h,1}) \sim d_h({\hat{\mathbb{P}}}_k, \pi_1), (s_{h,2},a_{h,2}) \sim d_h({\hat{\mathbb{P}}}_k, \pi_2)} \left[ b_{\mathbb{P},k}(s_{h,1}, a_{h,1}) +  b_{\mathbb{P},k}(s_{h,2}, a_{h,2})\right],
\end{align}
where the last inequality is due to $\mathbb{P} \in \caB_{\mathbb{P}, k}$ under event $\caE$ by Lemma~\ref{lemma: high confidence bound for hatP}. Summing over all $h \in [H]$, we can prove that 
\begin{align}
     \mathbb{E}_{\tau_1 \sim (\mathbb{P}, \pi_1),\tau_2 \sim (\mathbb{P}, \pi_2)}[f(\tau_1,\tau_2)] - \mathbb{E}_{\tau_1 \sim (\hat{\mathbb{P}}_{k}, \pi_1),\tau_2 \sim (\hat{\mathbb{P}}_{k}, \pi_2)}[f(\tau_1,\tau_2)] \leq \mathbb{E}_{\tau_1 \sim (\hat{\mathbb{P}}_{k}, \pi_1),\tau_2 \sim (\hat{\mathbb{P}}_{k}, \pi_2)}[b_{\mathbb{P},k}(\tau_1) + b_{\mathbb{P},k}(\tau_2)].
\end{align}
\end{proof}

\begin{lemma}
\label{lemma: pi* in caS}
    Under event $\mathcal{E}$, we have $\pi^* \in \caS_k$.
\end{lemma}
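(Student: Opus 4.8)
The plan is to verify the membership condition in \eqref{eqn: definition of S_k} directly: for every $\pi_0 \in \Pi$ I would show that the optimistic quantity defining $\caS_k$, evaluated at $\pi = \pi^*$, is at least $1/2$. The starting point is Assumption~\ref{assumption: definition of pistar}, which gives $\mathbb{T}(\pi^*,\pi_0) = \mathbb{E}_{\tau \sim (\mathbb{P},\pi^*),\tau_0 \sim (\mathbb{P},\pi_0)}[\mathbb{T}(\tau,\tau_0)] \ge 1/2$. It then suffices to show that the optimistic quantity upper bounds this true preference value, which I would do by accounting separately for the preference-estimation error and the transition-estimation error, both of which are engineered to be absorbed by the corresponding bonuses.

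First I would establish pointwise optimism of the preference estimate. Under event $\caE$, Lemma~\ref{lemma: high confidence bound for hatT} gives $\mathbb{T} \in \caB_{\mathbb{T},k}$, while $\hat{\mathbb{T}}_k \in \caB_{\mathbb{T},k}$ holds trivially because it makes the defining sum in \eqref{eqn: confidence set for T} vanish. Since $b_{\mathbb{T},k}(\tau,\tau_0) = \max_{f_1,f_2 \in \caB_{\mathbb{T},k}}|f_1(\tau,\tau_0)-f_2(\tau,\tau_0)|$, both functions lying in $\caB_{\mathbb{T},k}$ yields the pointwise bound $\mathbb{T}(\tau,\tau_0) \le \hat{\mathbb{T}}_k(\tau,\tau_0) + b_{\mathbb{T},k}(\tau,\tau_0)$ for every trajectory pair. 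Taking expectations under $\tau \sim (\hat{\mathbb{P}}_k,\pi^*)$ and $\tau_0 \sim (\hat{\mathbb{P}}_k,\pi_0)$ preserves this inequality.

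Next I would absorb the transition mismatch by invoking Lemma~\ref{lemma: error due to transition estimation} with $\pi_1 = \pi^*$, $\pi_2 = \pi_0$, and the bounded scalar function $f = \mathbb{T}$, giving
\[
\mathbb{E}_{\tau \sim (\mathbb{P},\pi^*),\tau_0 \sim (\mathbb{P},\pi_0)}[\mathbb{T}(\tau,\tau_0)] \le \mathbb{E}_{\tau \sim (\hat{\mathbb{P}}_k,\pi^*),\tau_0 \sim (\hat{\mathbb{P}}_k,\pi_0)}[\mathbb{T}(\tau,\tau_0)] + \mathbb{E}_{\tau \sim (\hat{\mathbb{P}}_k,\pi^*),\tau_0 \sim (\hat{\mathbb{P}}_k,\pi_0)}[b_{\mathbb{P},k}(\tau)+b_{\mathbb{P},k}(\tau_0)].
\]
Chaining this with the pointwise preference bound from the previous step and with Assumption~\ref{assumption: definition of pistar}, I obtain that the bracketed term in \eqref{eqn: definition of S_k} evaluated at $\pi^*$ is at least $1/2$ for every $\pi_0 \in \Pi$, which is exactly the condition for $\pi^* \in \caS_k$.

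The argument is a careful bookkeeping of two optimism estimates rather than a hard computation; the only points requiring care are to check that both $\mathbb{T}$ and $\hat{\mathbb{T}}_k$ lie in $\caB_{\mathbb{T},k}$ (so that their gap is dominated pointwise by $b_{\mathbb{T},k}$) and to apply Lemma~\ref{lemma: error due to transition estimation} in the correct direction, bounding the true-transition expectation from above by the estimated-transition expectation plus the transition bonus. Because both over-estimations inflate the optimistic quantity, they combine additively in our favor, and no cancellation or delicate constant-tracking is needed.
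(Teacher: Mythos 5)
Your proposal is correct and takes essentially the same route as the paper's proof: both chain Assumption~\ref{assumption: definition of pistar} with Lemma~\ref{lemma: error due to transition estimation} (applied to $f=\mathbb{T}$, absorbing the transition mismatch into $b_{\mathbb{P},k}$) and with the bound $\mathbb{T} \le \hat{\mathbb{T}}_k + b_{\mathbb{T},k}$ coming from $\mathbb{T}, \hat{\mathbb{T}}_k \in \caB_{\mathbb{T},k}$. If anything, you are slightly more careful than the paper, which uses but never states explicitly that $\hat{\mathbb{T}}_k \in \caB_{\mathbb{T},k}$ holds trivially because the defining sum in \eqref{eqn: confidence set for T} vanishes at $\hat{\mathbb{T}}_k$.
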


\begin{proof}
By Assumption~\ref{assumption: definition of pistar}, we know that 
\begin{align}
\label{inq: pi star definition}
   \mathbb{E}_{\tau_{0} \sim (\mathbb{P},\pi_{0}), \tau^* \sim  (\mathbb{P},\pi^*)} \mathbb{T}\left(\tau^{*}, \tau_{0}\right) \geq \frac{1}{2}, \forall \pi_{0}. 
\end{align}
We decompose the LHS of the above inequality into the following three terms:
\begin{align}
    \mathbb{E}_{\tau_{0} \sim (\mathbb{P},\pi_{0}), \tau^* \sim  ({\mathbb{P}},\pi^*)} \mathbb{T}\left(\tau^{*}, \tau_{0}\right) 
    =&  \mathbb{E}_{\tau_{0} \sim (\mathbb{P},\pi_{0}), \tau^* \sim  (\mathbb{P},\pi^*)} {\mathbb{T}}\left(\tau^{*}, \tau_{0}\right) - \mathbb{E}_{\tau_{0} \sim (\hat{\mathbb{P}}_k,\pi_{0}), \tau^* \sim  (\hat{\mathbb{P}}_k,\pi^*)} {\mathbb{T}}\left(\tau^{*}, \tau_{0}\right) \\
    & + \mathbb{E}_{\tau_{0} \sim (\hat{\mathbb{P}}_k,\pi_{0}), \tau^* \sim  (\hat{\mathbb{P}}_k,\pi^*)} \mathbb{T}\left(\tau^{*}, \tau_{0}\right) -  \mathbb{E}_{\tau_{0} \sim (\hat{\mathbb{P}}_k,\pi_{0}), \tau^* \sim  (\hat{\mathbb{P}}_k,\pi^*)} \hat{\mathbb{T}}_k\left(\tau^{*}, \tau_{0}\right)\\
    & + \mathbb{E}_{\tau_{0} \sim (\hat{\mathbb{P}}_k,\pi_{0}), \tau^* \sim  (\hat{\mathbb{P}}_k,\pi^*)} \hat{\mathbb{T}}_k\left(\tau^{*}, \tau_{0}\right).
\end{align}
By Lemma~\ref{lemma: error due to transition estimation}, we can upper bound the first term in the following way:
\begin{align}
    \mathbb{E}_{\tau_{0} \sim (\mathbb{P},\pi_{0}), \tau^* \sim  (\mathbb{P},\pi^*)} {\mathbb{T}}\left(\tau^{*}, \tau_{0}\right) - \mathbb{E}_{\tau_{0} \sim (\hat{\mathbb{P}}_k,\pi_{0}), \tau^* \sim  (\hat{\mathbb{P}}_k,\pi^*)} {\mathbb{T}}\left(\tau^{*}, \tau_{0}\right) \leq & \mathbb{E}_{\tau_1 \sim (\hat{\mathbb{P}}_{k}, \pi_1),\tau_2 \sim (\hat{\mathbb{P}}_{k}, \pi_2)}[b_{\mathbb{P},k}(\tau_1) + b_{\mathbb{P},k}(\tau_2)] .
\end{align}

By Lemma~\ref{lemma: high confidence bound for hatT}, we know that $\mathbb{T}(\tau_1,\tau_2) \in \caB_{\mathbb{T}, k}$ under event $\caE$. Therefore, 
\begin{align}
    &\mathbb{E}_{\tau_{0} \sim (\hat{\mathbb{P}}_k,\pi_{0}), \tau^* \sim  (\hat{\mathbb{P}}_k,\pi^*)} \mathbb{T}\left(\tau^{*}, \tau_{0}\right) -  \mathbb{E}_{\tau_{0} \sim (\hat{\mathbb{P}}_k,\pi_{0}), \tau^* \sim  (\hat{\mathbb{P}}_k,\pi^*)} \hat{\mathbb{T}}_k\left(\tau^{*}, \tau_{0}\right) \\
    \leq & \mathbb{E}_{\tau_{0} \sim (\hat{\mathbb{P}}_k,\pi_{0}), \tau^* \sim  (\hat{\mathbb{P}}_k,\pi^*)} \max _{f_{1}, f_{2} \in \mathcal{B}_{T, k}}\left|f_{1}\left(\tau_{1}, \tau_{2}\right)-f_{2}\left(\tau^*, \tau_{0}\right)\right| \\
    = & \mathbb{E}_{\tau_{0} \sim (\hat{\mathbb{P}}_k,\pi_{0}), \tau^* \sim  (\hat{\mathbb{P}}_k,\pi^*)} b_{\mathbb{T},k}(\tau^*, \tau_{0}).
\end{align}
Together with Inq.~\eqref{inq: pi star definition}, we have 
\begin{align}
    \mathbb{E}_{\tau_{0} \sim (\hat{\mathbb{P}}_k,\pi_{0}), \tau^* \sim  (\hat{\mathbb{P}}_k,\pi^*)} \left(\hat{\mathbb{T}}_k\left(\tau^{*}, \tau_{0}\right) + b_{\mathbb{T},k}(\tau^*, \tau_{0}) +  b_{\mathbb{P},k}(\tau^*) + b_{\mathbb{P},k}(\tau_0)\right) \geq \frac{1}{2}, \forall \pi_{0},
\end{align}
which indicates that $\pi^* \in \caS_k$.
\end{proof}

\begin{lemma}
\label{lemma: summation of preference bonus}
Under event $\caE$, it holds that 
\begin{align}
    \sum_{k=1}^{K} b_{\mathbb{T},k} (\tau_{k,1}, \tau_{k,2}) \leq  O(\sqrt{dK \log(K \mathcal{N}\left(\mathcal{F}_{\mathbb{T}}, 1 / K,\|\cdot\|_{\infty}\right) / \delta)}).
\end{align}
\end{lemma}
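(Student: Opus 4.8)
The plan is to treat $x_k := (\tau_{k,1},\tau_{k,2})$ as the point queried in episode $k$ and to bound the cumulative confidence width $\sum_k b_{\mathbb{T},k}(x_k)$ via the standard Eluder-dimension pigeonhole argument of \citet{russo2013eluder}. First I would record two elementary facts about the confidence set. Since $\caB_{\mathbb{T},k}$ is the $\ell_2$-ball (in the empirical norm over past queries) of squared radius $\beta_{\mathbb{T}}$ centered at $\hat{\mathbb{T}}_k$, the triangle inequality gives, for any $f_1,f_2 \in \caB_{\mathbb{T},k}$,
\begin{align}
\sum_{t=1}^{k-1}(f_1-f_2)^2(x_t) \le 2\sum_{t=1}^{k-1}(f_1-\hat{\mathbb{T}}_k)^2(x_t) + 2\sum_{t=1}^{k-1}(\hat{\mathbb{T}}_k-f_2)^2(x_t) \le 4\beta_{\mathbb{T}}.
\end{align}
Moreover, since every $f \in \caF_{\mathbb{T}}$ maps into $[0,1]$, we have $b_{\mathbb{T},k}(x_k)\le 1$ for all $k$. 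Both facts hold on the high-probability event $\caE$ of Lemma~\ref{lemma: high confidence bound for hatT}, which guarantees $\mathbb{T}\in\caB_{\mathbb{T},k}$ and hence that the confidence sets are nonempty and centered at a genuine near-fit.

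Next comes the heart of the argument: the combinatorial counting of large widths. Fix a threshold $\epsilon>0$ and consider the episodes $k$ on which $b_{\mathbb{T},k}(x_k)>\epsilon$. Whenever this happens there exist witnesses $f_1,f_2\in\caB_{\mathbb{T},k}$ with $|f_1(x_k)-f_2(x_k)|>\epsilon$, so by the display above $x_k$ can be $\epsilon$-dependent on at most $4\beta_{\mathbb{T}}/\epsilon^2$ disjoint subsequences of the preceding large-width queries; combined with the definition of the $\epsilon$-Eluder dimension this forces the number of episodes with $b_{\mathbb{T},k}(x_k)>\epsilon$ to be at most $(4\beta_{\mathbb{T}}/\epsilon^2+1)\dim_{\caE}(\caF_{\mathbb{T}},\epsilon)$. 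The hard part will be this step: it is the classical but delicate pigeonhole lemma that converts a per-step $\ell_2$ bound into a uniform count, and it must be instantiated at the scale $\epsilon = 1/K$, where $\dim_{\caE}(\caF_{\mathbb{T}},1/K)=d_{\mathbb{T}}$.

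Finally I would integrate this tail bound over $\epsilon$. Sorting the widths in decreasing order and applying the counting bound (equivalently, writing $\sum_k w_k = \int_0^1 \#\{k: w_k>\epsilon\}\,d\epsilon$ and truncating the integral at $\epsilon=1/K$) yields
\begin{align}
\sum_{k=1}^{K} b_{\mathbb{T},k}(x_k) \le 1 + O\!\left(\sqrt{d_{\mathbb{T}}\,\beta_{\mathbb{T}}\,K}\right),
\end{align}
where the additive $1$ absorbs all widths below $1/K$, of which there are at most $K$, each contributing at most $1/K$. Substituting $\beta_{\mathbb{T}} = 8\log(2K\mathcal{N}(\caF_{\mathbb{T}},1/K,\|\cdot\|_{\infty})/\delta)$ and $d_{\mathbb{T}}=\dim_{\caE}(\caF_{\mathbb{T}},1/K)$ then delivers the claimed bound $O(\sqrt{d_{\mathbb{T}}K\log(K\mathcal{N}(\caF_{\mathbb{T}},1/K,\|\cdot\|_{\infty})/\delta)})$.
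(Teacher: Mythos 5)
Your proposal is correct and takes essentially the same route as the paper: the paper's proof verifies exactly the two preconditions you establish (the on-event empirical diameter bound $2\sqrt{\beta_{\mathbb{T}}}$, equivalently your $\sum_{t<k}(f_1-f_2)^2(x_t)\le 4\beta_{\mathbb{T}}$, together with the $[0,1]$ range of $\caF_{\mathbb{T}}$) and then invokes Lemma~\ref{lemma: auxiliary lemma for regret} (Lemma 5 of Russo and Van Roy, 2014) as a black box at scale $\alpha=1/K$. The only difference is that you re-derive the content of that cited lemma inline via the pigeonhole-and-integration argument rather than citing it; there is no gap.
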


\begin{proof}
This lemma follows from the direct application of Lemma~\ref{lemma: auxiliary lemma for regret}. Note that under event $\caE$, we have $\max_{1\leq k \leq K} \operatorname{diam}(\caB_{\mathbb{T},k}|_{(\tau_1,\tau_2)_{1:k}}) \leq 2\sqrt{\beta_{\mathbb{T}}}$ by Lemma~\ref{lemma: high confidence bound for hatT}, and $f(\tau_1,\tau_2) \in [0,1], \forall f \in \caF_{\mathbb{T}}$. Therefore, 
\begin{align}
    \sum_{k=1}^{K} b_{\mathbb{T},k} (\tau_{k,1}, \tau_{k,2}) = & \sum_{k=1}^{K} \operatorname{diam} \left(\caB_{\mathbb{T}, k}|_{(\tau_{k,1}, \tau_{k,2})} \right) \leq O(\sqrt{dK \log(K \mathcal{N}\left(\mathcal{F}_{\mathbb{T}}, 1 / K,\|\cdot\|_{\infty}\right) / \delta)}).
\end{align}
\end{proof}

\begin{lemma}
\label{lemma: summation of transition bonus}
Under event $\caE$, 
\begin{align}
    \sum_{k=1}^{K} \left(b_{\mathbb{P},k} (\tau_{k,1}) + b_{\mathbb{P},k} (\tau_{k,2})\right) \leq  O(\sqrt{d_{\mathbb{P}}HK \log(K \mathcal{N}\left(\mathcal{F}_{\mathbb{T}}, 1 / K,\|\cdot\|_{\infty}\right) / \delta)}).
\end{align}
\end{lemma}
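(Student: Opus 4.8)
The plan is to mirror the proof of Lemma~\ref{lemma: summation of preference bonus}, reducing the claim to the auxiliary Eluder-dimension bound in Lemma~\ref{lemma: auxiliary lemma for regret}, but now applied to the transition class $\caF_{\mathbb{P}}$ over a sequence of $2HK$ points rather than $K$ points. First I would unfold the trajectory bonuses into per-step bonuses,
\begin{align*}
\sum_{k=1}^K \big(b_{\mathbb{P},k}(\tau_{k,1}) + b_{\mathbb{P},k}(\tau_{k,2})\big) = \sum_{k=1}^K \sum_{i=1}^2 \sum_{h=1}^H b_{\mathbb{P},k}(s_{k,h,i}, a_{k,h,i}),
\end{align*}
and observe that each summand equals the width of the confidence set $\caB_{\mathbb{P},k}$ evaluated at the point $z_{k,h,i} := (s_{k,h,i}, a_{k,h,i}, V_{\max,k,s_{k,h,i},a_{k,h,i}})$, since $b_{\mathbb{P},k}(s,a) = \max_{V}\max_{\mathbb{P}_1,\mathbb{P}_2 \in \caB_{\mathbb{P},k}}(\mathbb{P}_1 - \mathbb{P}_2)V(s,a)$ is exactly the diameter of $\caB_{\mathbb{P},k}$ in the coordinate indexed by $z_{k,h,i}$.

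The crucial structural point is that the online targets $V_{t,h,i} = V_{\max,t,s_{t,h,i},a_{t,h,i}}$ stored for the least-squares loss $L_k$ are the same triples $z_{t,h,i}$ at which the bonuses are measured. Hence the $2HK$ points $\{z_{k,h,i}\}$ form a single sequence in the domain $\caS \times \caA \times \caV$ of $\caF_{\mathbb{P}}$: each $z_{k,h,i}$ is simultaneously the location where we evaluate the width at episode $k$ and a data point entering the confidence set $\caB_{\mathbb{P},k'}$ for every later episode $k' > k$, which is precisely the configuration required by the Eluder-dimension width-sum bound. To feed it into Lemma~\ref{lemma: auxiliary lemma for regret}, I would supply the two standard inputs: under event $\caE$, Lemma~\ref{lemma: high confidence bound for hatP} gives $\mathbb{P} \in \caB_{\mathbb{P},k}$, so by the triangle inequality $L_k(\mathbb{P}_1,\mathbb{P}_2) \le 4\beta_{\mathbb{P}}$ for all $\mathbb{P}_1,\mathbb{P}_2 \in \caB_{\mathbb{P},k}$, bounding the diameter over the stored points by $2\sqrt{\beta_{\mathbb{P}}}$; and the range bound $|\langle \mathbb{P}'(\cdot\mid s,a), V\rangle| \le 1$ for $V \in \caV$, so every width is at most $1$. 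Applying the auxiliary lemma to this sequence of $N = 2HK$ points with Eluder dimension $d_{\mathbb{P}}$ then yields $\sum_{k,h,i} b_{\mathbb{P},k}(s_{k,h,i},a_{k,h,i}) \le O(\sqrt{d_{\mathbb{P}}\,HK\,\beta_{\mathbb{P}}})$, and substituting $\beta_{\mathbb{P}} = O(\log(K\,\mathcal{N}(\caF_{\mathbb{T}}, 1/K, \|\cdot\|_\infty)/\delta))$ gives the claimed bound, with the extra $\sqrt{H}$ factor relative to the preference bonus arising entirely from the $H$-fold increase in the number of points per episode.

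The main obstacle I anticipate is the mismatch between the point-level granularity needed for the Eluder argument and the episode-level granularity at which the confidence set is actually refreshed: within a fixed episode $k$, all $2H$ widths are measured against the single set $\caB_{\mathbb{P},k}$, whose defining loss $L_k$ only incorporates data from the complete episodes $1,\dots,k-1$ and not the earlier points of episode $k$ itself. I would need to verify that Lemma~\ref{lemma: auxiliary lemma for regret} is stated for exactly this \emph{batched} situation, i.e.\ that measuring widths against a confidence set built from all strictly-earlier complete episodes (rather than from every strictly-earlier point) still enjoys the Eluder-dimension bound, with the omitted intra-episode points absorbed into the overall count $N = 2HK$. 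Assuming the auxiliary lemma accommodates this batched structure, as it must for the analogous per-episode application in value-targeted model-based RL, the remaining steps are the routine substitutions above.
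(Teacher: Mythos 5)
Your proposal is correct and follows essentially the same route as the paper's proof: unfold the trajectory bonuses into the $2HK$ per-step widths $b_{\mathbb{P},k}(s_{k,h,i},a_{k,h,i})$, bound the diameter of $\caB_{\mathbb{P},k}$ on past data by $2\sqrt{\beta_{\mathbb{P}}}$ via Lemma~\ref{lemma: high confidence bound for hatP}, and invoke the Eluder width-sum bound of Lemma~\ref{lemma: auxiliary lemma for regret} with range constant $1$ and $\beta_{\mathbb{P}} = O(\log(K\,\mathcal{N}(\caF_{\mathbb{T}},1/K,\|\cdot\|_\infty)/\delta))$. The batched-update subtlety you flag (widths within episode $k$ are all measured against a confidence set built only from episodes $1,\dots,k-1$) is genuine, but the paper's own proof glosses over it in exactly the same way—it simply asserts a ``direct application'' of the auxiliary lemma, even writing the in-episode sum up to $t=k$—so your treatment is, if anything, the more careful of the two.
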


\begin{proof}
This lemma follows from the direct application of Lemma~\ref{lemma: auxiliary lemma for regret}. Note that under event $\caE$, we have $\sum_{i=1}^2\sum_{t=1}^{k} \sum_{h=1}^{H}\left(\left\langle\mathbb{P}\left(\cdot \mid s_{t, h,i}, a_{t, h,i}\right)-\hat{\mathbb{P}}_{k}\left(\cdot \mid s_{t, h,i}, a_{t, h,i}\right), V_{k,h,i}\right\rangle\right)^{2} \leq \beta_{\mathbb{P}}$ by Lemma~\ref{lemma: high confidence bound for hatP}, and $f(s,a, V) \in [0,1], \forall f \in \caF_{\mathbb{P}}$.  Therefore, 
\begin{align}
    \sum_{k=1}^{K} b_{\mathbb{P},k} (\tau_{k,1})+ b_{\mathbb{P},k} (\tau_{k,2}) = & \sum_{i=1}^2 \sum_{k=1}^{K} \sum_{h=1}^{H} b_{\mathbb{P},k} (s_{k,h,i},a_{k,h,i}) \leq O(\sqrt{d_{\mathbb{P}}HK \log(K \mathcal{N}\left(\mathcal{F}_{\mathbb{T}}, 1 / K,\|\cdot\|_{\infty}\right) / \delta)}).
\end{align}
\end{proof}

\begin{proof}[Proof of Theorem~\ref{theorem: dueling RL}] 
By the definition of regret, we have
\begin{align}
\label{inq: regret decompostion}
    Reg(K)  =& \sum_{k=1}^{K} \left(\mathbb{T}(\pi^*,\pi_{k,1}) + \mathbb{T}(\pi^*,\pi_{k,2}) -1\right) \\
    = & \sum_{k=1}^{K} \left(\mathbb{E}_{\tau^*\sim (\hat{\mathbb{P}}_k,\pi^*),\tau_1\sim (\hat{\mathbb{P}}_k,\pi_{k,1})}\hat{\mathbb{T}}_k(\tau^*,\tau_{1}) + \mathbb{E}_{\tau^*\sim (\hat{\mathbb{P}}_k,\pi^*),\tau_2\sim (\hat{\mathbb{P}}_k,\pi_{k,2})}\hat{\mathbb{T}}_k(\tau^*,\tau_{2}) -1 \right)\\
    &+ \sum_{k=1}^{K}  \left(\mathbb{E}_{\tau^*\sim ({\mathbb{P}},\pi^*),\tau_1\sim ({\mathbb{P}},\pi_{k,1})}{\mathbb{T}}(\tau^*,\tau_{1})  - \mathbb{E}_{\tau^*\sim (\hat{\mathbb{P}}_k,\pi^*),\tau_1\sim (\hat{\mathbb{P}}_k,\pi_{k,1})}{\mathbb{T}}(\tau^*,\tau_{1}) \right) \\
    &+ \sum_{k=1}^{K}  \left(\mathbb{E}_{\tau^*\sim ({\mathbb{P}},\pi^*),\tau_2\sim ({\mathbb{P}},\pi_{k,2})}{\mathbb{T}}(\tau^*,\tau_{2})  - \mathbb{E}_{\tau^*\sim (\hat{\mathbb{P}}_k,\pi^*),\tau_2\sim (\hat{\mathbb{P}}_k,\pi_{k,2})}{\mathbb{T}}(\tau^*,\tau_{2}) \right)\\
    &+ \sum_{k=1}^{K}   \mathbb{E}_{\tau^*\sim (\hat{\mathbb{P}}_k,\pi^*),\tau_1\sim (\hat{\mathbb{P}}_k,\pi_{k,1})}\left({\mathbb{T}}(\tau^*,\tau_{1}) - \hat{\mathbb{T}}_k(\tau^*,\tau_{1}) \right)\\
    &+ \sum_{k=1}^{K}   \mathbb{E}_{\tau^*\sim (\hat{\mathbb{P}}_k,\pi^*),\tau_2\sim (\hat{\mathbb{P}}_k,\pi_{k,2})}\left({\mathbb{T}}(\tau^*,\tau_{2}) - \hat{\mathbb{T}}_k(\tau^*,\tau_{2}) \right).
\end{align}
Since $\pi_{k,1} \in \caS_k$, we have 
\begin{align}
    \mathbb{E}_{\tau^*\sim (\hat{\mathbb{P}}_k,\pi^*),\tau_1\sim (\hat{\mathbb{P}}_k,\pi_{k,1})}\hat{\mathbb{T}}_k(\tau^*,\tau_{1}) - \frac{1}{2} \leq &\mathbb{E}_{\tau_1 \sim (\hat{\mathbb{P}}_k,\pi_{k,1}), \tau^* \sim (\hat{\mathbb{P}}_k,\pi^*)}\left( b_{\mathbb{T},k}(\tau_1,\tau^*) + b_{\mathbb{P},k}(\tau_1) + b_{\mathbb{P},k}(\tau^*) \right),
\end{align}
where the inequality is due to the definition of $\caS_k$.

By Lemma~\ref{lemma: error due to transition estimation}, we know that
\begin{align}
    \mathbb{E}_{\tau^*\sim ({\mathbb{P}},\pi^*),\tau_1\sim ({\mathbb{P}},\pi_{k,1})}{\mathbb{T}}(\tau^*,\tau_{1})  - \mathbb{E}_{\tau^*\sim (\hat{\mathbb{P}}_k,\pi^*),\tau_1\sim (\hat{\mathbb{P}}_k,\pi_{k,1})}{\mathbb{T}}(\tau^*,\tau_{1})  \leq \mathbb{E}_{\tau^* \sim (\hat{\mathbb{P}}_{k}, \pi^*),\tau_1 \sim (\hat{\mathbb{P}}_{k}, \pi_{k,1})}[b_{\mathbb{P},k}(\tau^*) + b_{\mathbb{P},k}(\tau_{1})].
\end{align}
From the definition of $b_{\mathbb{T},k}(\tau_1,\tau_2)$, we also know that
\begin{align}
    \mathbb{E}_{\tau^*\sim (\hat{\mathbb{P}}_k,\pi^*),\tau_1\sim (\hat{\mathbb{P}}_k,\pi_{k,1})}\left({\mathbb{T}}(\tau^*,\tau_{1}) - \hat{\mathbb{T}}_k(\tau^*,\tau_{1}) \right) \leq \mathbb{E}_{\tau^*\sim (\hat{\mathbb{P}}_k,\pi^*),\tau_2\sim (\hat{\mathbb{P}}_k,\pi_{k,2})} b_{\mathbb{T},k}(\tau^*,\tau_1)
\end{align}
Similarly, for the policy $\pi_{k,2}$, we also have
\begin{align}
    &\mathbb{E}_{\tau^*\sim (\hat{\mathbb{P}}_k,\pi^*),\tau_2\sim (\hat{\mathbb{P}}_k,\pi_{k,2})}\hat{\mathbb{T}}_k(\tau^*,\tau_{1}) - \frac{1}{2} \leq \mathbb{E}_{\tau_2 \sim (\hat{\mathbb{P}}_k,\pi_{k,2}), \tau^* \sim (\hat{\mathbb{P}}_k,\pi^*)}\left( b_{\mathbb{T},k}(\tau_2,\tau^*) + b_{\mathbb{P},k}(\tau_2) + b_{\mathbb{P},k}(\tau^*) \right) \\
    &\mathbb{E}_{\tau^*\sim ({\mathbb{P}},\pi^*),\tau_2\sim ({\mathbb{P}},\pi_{k,2})}{\mathbb{T}}(\tau^*,\tau_{2})  - \mathbb{E}_{\tau^*\sim (\hat{\mathbb{P}}_k,\pi^*),\tau_2\sim (\hat{\mathbb{P}}_k,\pi_{k,2})}{\mathbb{T}}(\tau^*,\tau_{2})  \leq  \mathbb{E}_{\tau^* \sim (\hat{\mathbb{P}}_{k}, \pi^*),\tau_2 \sim (\hat{\mathbb{P}}_{k}, \pi_{k,2})}[b_{\mathbb{P},k}(\tau^*) + b_{\mathbb{P},k}(\tau_{2})] \\
    &\mathbb{E}_{\tau^*\sim (\hat{\mathbb{P}}_k,\pi^*),\tau_2\sim (\hat{\mathbb{P}}_k,\pi_{k,2})}\left({\mathbb{T}}(\tau^*,\tau_{2}) - \hat{\mathbb{T}}_k(\tau^*,\tau_{2}) \right) \leq \mathbb{E}_{\tau^*\sim (\hat{\mathbb{P}}_k,\pi^*),\tau_2\sim (\hat{\mathbb{P}}_k,\pi_{k,2})} b_{\mathbb{T},k}(\tau^*,\tau_2).
\end{align}

Plugging the above inequalities back to Inq.~\eqref{inq: regret decompostion}, we have 
\begin{align}
    Reg(K) \leq & \sum_{k=1}^{K} \mathbb{E}_{\tau_1 \sim (\hat{\mathbb{P}}_k,\pi_{k,1}), \tau^* \sim (\hat{\mathbb{P}}_k, \pi^*)} \left(b_{\mathbb{T},k}(\tau_1,\tau^*) + b_{\mathbb{P},k}(\tau_1) + b_{\mathbb{P},k}(\tau^*)\right) \\
    & + \sum_{k=1}^{K} \mathbb{E}_{\tau_1 \sim (\hat{\mathbb{P}}_k,\pi_{k,1}), \tau^* \sim (\hat{\mathbb{P}}_k, \pi^*)} \left(b_{\mathbb{T},k}(\tau^*,\tau_1) + b_{\mathbb{P},k}(\tau_1) + b_{\mathbb{P},k}(\tau^*)\right) \\
    & + \sum_{k=1}^{K} \mathbb{E}_{\tau_2 \sim (\hat{\mathbb{P}}_k,\pi_{k,2}), \tau^* \sim (\hat{\mathbb{P}}_k, \pi^*)} \left(b_{\mathbb{T},k}(\tau_2,\tau^*) + b_{\mathbb{P},k}(\tau_2) + b_{\mathbb{P},k}(\tau^*)\right) \\
    & + \sum_{k=1}^{K} \mathbb{E}_{\tau_2 \sim (\hat{\mathbb{P}}_k,\pi_{k,2}), \tau^* \sim (\hat{\mathbb{P}}_k, \pi^*)} \left(b_{\mathbb{T},k}(\tau^*,\tau_2) + b_{\mathbb{P},k}(\tau_2) + b_{\mathbb{P},k}(\tau^*)\right) \\
    \leq & \sum_{k=1}^{K}2\mathbb{E}_{\tau_1 \sim (\hat{\mathbb{P}}_k,\pi_{k,1}), \tau_2 \sim (\hat{\mathbb{P}}_k,\pi_{k,2})}\left( b_{\mathbb{T},k}(\tau_1,\tau_2) + b_{\mathbb{P},k}(\tau_1) + b_{\mathbb{P},k}(\tau_2) \right),
\end{align}
where the second inequality follows the fact that $\pi_{k,1}$ and $\pi_{k,2}$ are the maximizer of $$\mathbb{E}_{ \tau_1 \sim (\hat{\mathbb{P}}_k,\pi_1), \tau_2 \sim (\hat{\mathbb{P}}_k,\pi_2)}\left(b_{\mathbb{T},k}(\tau_1,\tau_2) + b_{\mathbb{P},k}(\tau_1) + b_{\mathbb{P},k}(\tau_2)\right).$$

By definition, we have $0 \leq b_{\mathbb{T}, k}(\tau_1,\tau_2) \leq 1$ and $0 \leq b_{\mathbb{P},k}(\tau) \leq 1$. By Azuma's inequality, the following inequality holds with probability at least $1-\delta/2$,
\begin{align}
    &\sum_{k=1}^{K}2\mathbb{E}_{\tau_1 \sim (\hat{\PP}_k,\pi_{k,1}), \tau_2 \sim (\hat{\PP}_k,\pi_{k,2})}\left( b_{\mathbb{T},k}(\tau_1,\tau_2) + b_{\mathbb{P},k}(\tau_1) + b_{\mathbb{P},k}(\tau_2) \right) \\
    \leq & \sum_{k=1}^{K}2\left( b_{\mathbb{T},k}(\tau_{k,1},\tau_{k,2}) + b_{\mathbb{P},k}(\tau_{k,1}) + b_{\mathbb{P},k}(\tau_{k,2}) \right) + 4\sqrt{K \log(4/\delta)}.
\end{align}

By Lemma~\ref{lemma: summation of preference bonus} and Lemma~\ref{lemma: summation of transition bonus}, we can finally upper bound the total regret:
\begin{align}
    Reg(K) \leq O\left(\sqrt{d_{\mathbb{P}}HK \log(\mathcal{N}\left(\mathcal{F}_{\mathbb{P}}, 1/K,\|\cdot\|_{\infty}\right) / \delta)} + \sqrt{d_{\mathbb{T}}K \log(\mathcal{N}\left(\mathcal{F}_{\mathbb{T}}, 1/K,\|\cdot\|_{\infty}\right) / \delta)}\right).
\end{align}
\end{proof}

\section{RL with Once-per-episode Feedback}
\label{appendix: once-per-episode feedback}

\subsection{Algorithm}

We estimate $g^*$ by solving the following least-squares regression problem:
\begin{align} \label{eqn:estimate:g}
    \hat{g}_k = \argmin_{g \in \caF_{\mathbb{G}}} \sum_{t=1}^{k-1}[g(\tau_t) - y_t]^2 .
\end{align}
Then we construct the high-probability set for $g^*$:
\begin{align}
\label{eqn: confidence set for G}
    \caB_{\mathbb{G},k} = \left\{ g \mid  \sum_{t=1}^{k-1}\left(\hat{g}_{k}(\tau_t) - y_t\right)^2 \leq \beta_{\mathbb{G}}\right\}.
\end{align}

The transition estimation $\hat{\mathbb{P}}_{k}$ is the minimizer of the least-square loss:
\begin{align}
    \label{eqn: definition of hatP, trajectory feedback}
    \hat{\mathbb{P}}_{k}=\operatorname{argmin}_{\mathbb{P}^{\prime} \in \mathcal{P}} \sum_{t=1}^{k-1} \sum_{h=1}^{H}\left(\left\langle \mathbb{P}^{\prime}\left(\cdot \mid s_{t,h}, a_{t,h}\right), V_{k,h}\right\rangle-V_{k,h}(s_{k,h+1})\right)^{2},
\end{align}
where the value target $V_{k,h}$ defined in Line 8 of Algorithm~\ref{alg: RL with Trajectory Feedback} is the value function that maximizes the uncertainty in state-action pair $(s_{k,h},a_{k,h})$.

We define $L_{k}(\mathbb{P}_1,\mathbb{P}_2)$ as
\begin{align}
    L_{k}\left(\mathbb{P}_1, {\mathbb{P}}_{2}\right)=\sum_{t=1}^{k-1} \sum_{h=1}^{H}\left(\left\langle \mathbb{P}_1\left(\cdot \mid s_{t,h}, a_{t,h}\right)-{\mathbb{P}}_{2}\left(\cdot \mid s_{t,h}, a_{t,h}\right), V_{t,h}\right\rangle\right)^{2}.
\end{align}
We construct the high confidence set for transition $\mathbb{P}$, which is defined as
\begin{align}
\label{eqn: confidence set for P, trajectory feedback}
    \caB_{\mathbb{P},k} = \left\{ \tilde{\mathbb{P}} \mid L_k(\tilde{\mathbb{P}}, \hat{\mathbb{P}}_k) \leq \beta_{\mathbb{P}}\right\}.
\end{align}

Similar with Algorithm~\ref{alg: PbRL}, we calculate the policy set $\caS_{k}$, which contains near-optimal policies with minor sub-optimality gap. Finally, we execute the most exploratory policy in $\caS_k$.

\begin{algorithm}
  \caption{RL with Trajectory Feedback}
  \label{alg: RL with Trajectory Feedback}
    \begin{algorithmic}[1]
        \STATE Set $\beta_{\mathbb{G}} = \beta_{\mathbb{P}} =  8 \log(2K \mathcal{N}\left(\mathcal{F}_{\mathbb{T}}, 1/K,\|\cdot\|_{\infty}\right)/ \delta)$
        \FOR{episode $k = 1,\cdots, K$}
              \STATE Calculate the estimation $g_k$ using least-squares regression (Eqn.~\eqref{eqn:estimate:g}) 
              \STATE Construct the high-confidence set $\caB_{\mathbb{G},k}$ for the feedback function $g^{*}$ (Eqn.~\eqref{eqn: confidence set for G})
              \STATE Calculate the estimation $\hat{\mathbb{P}}_k$ using least-square regression (Eqn.~\ref{eqn: definition of hatP, trajectory feedback}).
              \STATE Construct the high-confidence set $\caB_{\mathbb{P},k}$ for transition $\mathbb{P}$ (Eqn.~\ref{eqn: confidence set for P, trajectory feedback})
              \STATE Define the bonus term $b_{\mathbb{P},k}(s, a) = \max_{\mathbb{P}_1,\mathbb{P}_2 \in \caB_{\mathbb{P},k}} \max_{V \in \caV} (\mathbb{P}_1-\mathbb{P}_2)V(s,a)$, and $b_{\mathbb{P},k}(\tau) = \sum_{(s,a) \in \tau} b_{\mathbb{P},k}(s, a)$.
              \STATE Define $V_{max,k,s,a} = \argmax_{V \in \caV} \max_{\mathbb{P}_1,\mathbb{P}_2 \in \caB_{\mathbb{P},k}} (\mathbb{P}_1-\mathbb{P}_2)V(s,a)$
              \STATE Define the bonus term $b_{\mathbb{G},k}(\tau) = \max_{g_1, g_2 \in \caB_{\mathbb{G},k}}\left|g_1(\tau) - g_2(\tau)\right|$
              \STATE Set $\caS_k = \left\{\pi \mid \mathbb{E}_{\tau \sim (\hat{\mathbb{P}}_k,\pi), \tau_0 \sim (\hat{\mathbb{P}}_k,\pi_0)}\left(\hat{g}_k(\tau)- \hat{g}_k(\tau_0) + b_{\mathbb{G},k}(\tau) + b_{\mathbb{G},k}(\tau_0) + b_{\mathbb{P},k}(\tau) + b_{\mathbb{P},k}(\tau_0) \right) \geq 0, \forall \pi_0 \in \Pi \right\}$
              \STATE Compute policy $\pi_{k} = \argmax_{\pi \in \caS_k} \mathbb{E}_{ \tau \sim (\hat{\mathbb{P}}_k,\pi)}\left(b_{\mathbb{G},k}(\tau) + b_{\mathbb{P},k}(\tau)\right) $
              \STATE Execute the policy $\pi_{k}$ for one episode, then observe the trajectory $\tau_{k}$ and the feedback $y_k$
      \ENDFOR
    \end{algorithmic}
  \end{algorithm}

\subsection{Theoretical Results}

\begin{theorem} [Restatement of Theorem~\ref{theorem: trajectory feedback, main text}]
\label{theorem: trajectory feedback}
With probability at least $1-\delta$, the regret of Algorithm~\ref{alg: RL with Trajectory Feedback} is upper bounded by $$Reg(K) \leq \tilde{O}( \sqrt{d_{\mathbb{P}}HK \log(\caN\left(\caF_{\mathbb{P}}, 1/K, \|\cdot\|_{\infty}\right)/\delta)} + \sqrt{d_{\mathbb{G}}K \log(\caN\left(\caF_{\mathbb{G}}, 1/K, \|\cdot\|_{\infty}\right)/\delta)}).$$
\end{theorem}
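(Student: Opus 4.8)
The plan is to mirror the proof of Theorem~\ref{theorem: dueling RL}, replacing the pairwise preference function $\mathbb{T}$ by the once-per-episode feedback $g^*$ and collapsing every two-trajectory object into the single-trajectory comparisons used in Algorithm~\ref{alg: RL with Trajectory Feedback}. First I would establish validity of the two confidence sets. Since $y_t$ is Bernoulli with mean $g^*(\tau_t)$, applying Lemma~\ref{lemma: auxiliary Lemma for confidence set} to the regression \eqref{eqn:estimate:g} (with $\caF_{\mathbb{G}}$ in place of $\caF_{\mathbb{T}}$) gives $g^* \in \caB_{\mathbb{G},k}$ for all $k$ with probability $1-\delta/2$, and applying it to \eqref{eqn: definition of hatP, trajectory feedback} exactly as in Lemma~\ref{lemma: high confidence bound for hatP} gives $\mathbb{P}\in\caB_{\mathbb{P},k}$ with probability $1-\delta/2$; call the intersection $\caE$. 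On $\caE$ the bonus $b_{\mathbb{G},k}(\tau)$ dominates $|g^*(\tau)-\hat g_k(\tau)|$. I would then prove a single-trajectory analogue of Lemma~\ref{lemma: error due to transition estimation}: for any policy $\pi$ and any $f:\operatorname{Traj}\to[0,1]$, $|\mathbb{E}_{\tau\sim(\mathbb{P},\pi)}[f(\tau)]-\mathbb{E}_{\tau\sim(\hat{\mathbb{P}}_k,\pi)}[f(\tau)]|\le \mathbb{E}_{\tau\sim(\hat{\mathbb{P}}_k,\pi)}[b_{\mathbb{P},k}(\tau)]$, by the same step-by-step telescoping of $\hat{\mathbb{P}}_k$ into $\mathbb{P}$; because $b_{\mathbb{P},k}$ is a symmetric width and both $\mathbb{P},\hat{\mathbb{P}}_k\in\caB_{\mathbb{P},k}$, this holds in both directions, which is what the decomposition below needs.

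Next I would show $\pi^*\in\caS_k$ on $\caE$ (the analogue of Lemma~\ref{lemma: pi* in caS}). Since $\pi^*$ maximizes $V_1^\pi(s_1)=g^*(\pi)$ we have $\mathbb{E}_{\tau\sim(\mathbb{P},\pi^*)}[g^*]-\mathbb{E}_{\tau_0\sim(\mathbb{P},\pi_0)}[g^*]\ge 0$ for every $\pi_0$; inserting $\hat g_k$ and $\hat{\mathbb{P}}_k$ and controlling the four resulting gaps by $b_{\mathbb{G},k}$ and $b_{\mathbb{P},k}$ (via $g^*\in\caB_{\mathbb{G},k}$ and the transition-error lemma) reproduces exactly the defining inequality of $\caS_k$ in Algorithm~\ref{alg: RL with Trajectory Feedback}. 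With this in hand I would decompose the per-episode regret $V_1^*(s_1)-V_1^{\pi_k}(s_1)=g^*(\pi^*)-g^*(\pi_k)$ into five terms by interpolating between $(\mathbb{P},\pi^*)$, $(\hat{\mathbb{P}}_k,\pi^*)$, $(\hat{\mathbb{P}}_k,\pi_k)$, $(\mathbb{P},\pi_k)$ and swapping $g^*$ for $\hat g_k$: two transition-error terms, two feedback-estimation terms, and the middle term $\mathbb{E}_{\tau\sim(\hat{\mathbb{P}}_k,\pi^*)}[\hat g_k]-\mathbb{E}_{\tau\sim(\hat{\mathbb{P}}_k,\pi_k)}[\hat g_k]$. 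The middle term is bounded by plugging $\pi_0=\pi^*$ into the membership condition $\pi_k\in\caS_k$, and the remaining four terms are each bounded by the matching $b_{\mathbb{G},k}$ or $b_{\mathbb{P},k}$ bonus under $\pi^*$ or $\pi_k$.

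The crucial step, and the place where the single-policy algorithm departs from the two-policy PbRL argument, is that the resulting bound still contains bonus terms evaluated under the unknown $\pi^*$. Here I would use $\pi^*\in\caS_k$ together with the fact that $\pi_k$ is the $\operatorname{argmax}$ over $\caS_k$ of $\mathbb{E}_{\tau\sim(\hat{\mathbb{P}}_k,\pi)}[b_{\mathbb{G},k}(\tau)+b_{\mathbb{P},k}(\tau)]$, so that $\mathbb{E}_{\tau\sim(\hat{\mathbb{P}}_k,\pi^*)}[b_{\mathbb{G},k}+b_{\mathbb{P},k}]\le \mathbb{E}_{\tau\sim(\hat{\mathbb{P}}_k,\pi_k)}[b_{\mathbb{G},k}+b_{\mathbb{P},k}]$; this collapses all $\pi^*$-bonuses onto $\pi_k$-bonuses and bounds the per-episode regret by $O\big(\mathbb{E}_{\tau\sim(\hat{\mathbb{P}}_k,\pi_k)}[b_{\mathbb{G},k}(\tau)+b_{\mathbb{P},k}(\tau)]\big)$. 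Finally I would apply Azuma's inequality to replace the expectation by the realized $\tau_k$ (costing $O(\sqrt{K\log(1/\delta)})$) and invoke the Eluder-dimension summation bound Lemma~\ref{lemma: auxiliary lemma for regret}, exactly as in Lemmas~\ref{lemma: summation of preference bonus} and~\ref{lemma: summation of transition bonus}, to get $\sum_k b_{\mathbb{G},k}(\tau_k)\le \tilde O(\sqrt{d_{\mathbb{G}}K\log(\caN(\caF_{\mathbb{G}},1/K,\|\cdot\|_\infty)/\delta)})$ from the $K$ trajectory samples and $\sum_k b_{\mathbb{P},k}(\tau_k)\le\tilde O(\sqrt{d_{\mathbb{P}}HK\log(\caN(\caF_{\mathbb{P}},1/K,\|\cdot\|_\infty)/\delta)})$ from the $KH$ state-action samples, which combine to the claimed bound. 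I expect the main obstacle to be precisely the $\pi^*$-to-$\pi_k$ bonus transfer: it is what makes a single exploratory policy sufficient, and it relies delicately on both $\pi^*\in\caS_k$ and the optimism built into the $\operatorname{argmax}$ objective.
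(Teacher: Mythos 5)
Your proposal is correct and follows essentially the same route as the paper's own proof: the same confidence sets via Lemma~\ref{lemma: auxiliary Lemma for confidence set}, the same regret decomposition through $(\hat{\mathbb{P}}_k,\hat g_k)$, the membership $\pi_k,\pi^*\in\caS_k$ to control the middle term, the transfer of $\pi^*$-bonuses to $\pi_k$-bonuses via the exploratory $\operatorname{argmax}$, and finally Azuma plus the Eluder-dimension summation bound. The step you flag as the crucial obstacle is indeed exactly how the paper closes the argument, so there is nothing to add.
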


\begin{proof}
The proof shares almost the same idea with the analysis for PbRL. Therefore, we only explain the differences. Similar to Lemmas~\ref{lemma: high confidence bound for hatT} and \ref{lemma: high confidence bound for hatP}, we can show the following events happen with probability at least $1-\delta$,
\begin{align}
    \sum_{t=1}^{k-1}\left(\hat{g}_{k}-g^*\right)^{2}\left(\tau_{t}\right) \leq \beta_{\mathbb{T}}, \quad \sum_{t=1}^{k-1} \sum_{h=1}^{H}\left(\left\langle\mathbb{P}\left(\cdot \mid s_{t, h}, a_{t, h}\right)-\hat{\mathbb{P}}_{k}\left(\cdot \mid s_{t, h}, a_{t, h}\right), V_{t, h}\right\rangle\right)^{2} \leq  \beta_{\mathbb{P}}.
\end{align}

Denote the above event as $\caE$. Under event $\caE$, we also know that $\pi^* \in \caS_k$. Therefore, we upper bound the regret in the following way:
\begin{align}
\label{inq: regret decompostion, trajectory feedback}
    Reg(K)  =& \sum_{k=1}^{K}V^*(s_1) - V^{\pi_k}(s_1) \\
    = & \sum_{k=1}^{K}\mathbb{E}_{\tau^*\sim (\hat{\mathbb{P}}_k,\pi^*),\tau\sim (\hat{\mathbb{P}}_k,\pi_{k})} \left(\hat{g}_k(\tau^*)- \hat{g}_k(\tau)\right)  \\
    &+ \sum_{k=1}^{K}\mathbb{E}_{\tau^*\sim ({\PP},\pi^*),\tau\sim ({\PP},\pi_{k})}\left({g}^*(\tau^*)- {g}^*(\tau)\right)  - \mathbb{E}_{\tau^*\sim (\hat{\PP}_k,\pi^*),\tau\sim (\hat{\PP}_k,\pi_{k})}\left({g}^*(\tau^*)- {g}^*(\tau)\right) \\
    &+ \sum_{k=1}^{K}   \mathbb{E}_{\tau^*\sim (\hat{\PP}_k,\pi^*),\tau\sim (\hat{\PP}_k,\pi_{k})}\left(\left({g}^*(\tau^*)- {g}^*(\tau)\right) - \left(\hat{g}_k(\tau^*)- \hat{g}_k(\tau)\right) \right).
\end{align}
Since $\pi_{k} \in \caS_k$, we have 
\begin{align}
    \mathbb{E}_{\tau \sim (\hat{\mathbb{P}}_k,\pi_k), \tau^* \sim (\hat{\mathbb{P}}_k,\pi^*)}\left(\hat{g}_k(\tau)- \hat{g}_k(\tau^*) \right) \leq \mathbb{E}_{\tau \sim (\hat{\mathbb{P}}_k,\pi), \tau^* \sim (\hat{\mathbb{P}}_k,\pi^*)}\left( b_{\mathbb{G},k}(\tau) + b_{\mathbb{G},k}(\tau^*) + b_{\mathbb{P},k}(\tau) + b_{\mathbb{P},k}(\tau^*) \right) 
\end{align}

Similarly, by Lemma~\ref{lemma: error due to transition estimation}, we know that
\begin{align}
     &\sum_{k=1}^{K}\mathbb{E}_{\tau^*\sim ({\PP},\pi^*),\tau\sim ({\PP},\pi_{k})}\left({g}^*(\tau^*)- {g}^*(\tau)\right)  - \mathbb{E}_{\tau^*\sim (\hat{\PP}_k,\pi^*),\tau\sim (\hat{\PP}_k,\pi_{k})}\left({g}^*(\tau^*)- {g}^*(\tau)\right) \\
     \leq &\mathbb{E}_{\tau^* \sim (\hat{\mathbb{P}}_{k}, \pi^*),\tau \sim (\hat{\mathbb{P}}_{k}, \pi_{k})}[b_{\mathbb{P},k}(\tau^*) + b_{\mathbb{P},k}(\tau)].
\end{align}
From the definition of $b_{\mathbb{G},k}(\tau)$, we also know that
\begin{align}
    \mathbb{E}_{\tau\sim (\hat{\PP}_k,\pi_{k})}\left({g}^*(\tau) - \hat{g}(\tau) \right) \leq \mathbb{E}_{\tau\sim (\hat{\PP}_k,\pi_{k})} b_{\mathbb{G},k}(\tau), \\
    \mathbb{E}_{\tau^*\sim (\hat{\PP}_k,\pi^*)}\left({g}^*(\tau^*) - \hat{g}(\tau^*) \right) \leq \mathbb{E}_{\tau\sim (\hat{\PP}_k,\pi_{k})} b_{\mathbb{G},k}(\tau^*).
\end{align}

Plugging the above inequalities back to Inq.~\eqref{inq: regret decompostion, trajectory feedback}, we have 
\begin{align}
    Reg(K) \leq & \sum_{k=1}^{K} \mathbb{E}_{\tau \sim (\hat{\PP}_k,\pi_{k}), \tau^* \sim (\hat{\PP}_k, \pi^*)} \left(b_{\mathbb{G}, k}(\tau)+b_{\mathbb{G}, k}\left(\tau^{*}\right) + b_{\mathbb{\PP},k}(\tau) + b_{\mathbb{\PP},k}(\tau^*)\right) \\
    & + \sum_{k=1}^{K} \mathbb{E}_{\tau \sim (\hat{\PP}_k,\pi_{k}), \tau^* \sim (\hat{\PP}_k, \pi^*)} \left(b_{\mathbb{G}, k}(\tau)+b_{\mathbb{G}, k}\left(\tau^{*}\right) + b_{\mathbb{P},k}(\tau) + b_{\mathbb{P},k}(\tau^*)\right) \\
    \leq & \sum_{k=1}^{K}4\mathbb{E}_{\tau \sim (\hat{\PP}_k,\pi_{k})}\left( b_{\mathbb{G},k}(\tau) + b_{\mathbb{P},k}(\tau) \right),
\end{align}
where the second inequality follows the fact that $\pi_{k}$ is the maximizer of $\mathbb{E}_{ \tau \sim (\hat{\PP}_k,\pi)}\left( b_{\mathbb{G},k}(\tau) + b_{\mathbb{P},k}(\tau) \right).$ By definition, we have $0 \leq b_{\mathbb{G}, k}(\tau) \leq 1$ and $0 \leq b_{\mathbb{P},k}(\tau) \leq 1$. By Azuma's inequality, the following inequality holds with probability at least $1-\delta/2$,
\begin{align}
    &\sum_{k=1}^{K}4\mathbb{E}_{\tau \sim (\hat{\PP}_k,\pi_{k})}\left( b_{\mathbb{G},k}(\tau) + b_{\mathbb{P},k}(\tau) \right) \\
    \leq & \sum_{k=1}^{K}4\left( b_{\mathbb{G},k}(\tau_k) + b_{\mathbb{P},k}(\tau_k) \right) + 8\sqrt{K \log(4/\delta)}.
\end{align}

We upper bound the summation of bonus with the help of Lemma~\ref{lemma: auxiliary lemma for regret}. Finally, we have
\begin{align}
    Reg(K) \leq O\left(\sqrt{d_{\mathbb{P}}HK \log(\mathcal{N}\left(\mathcal{F}_{\mathbb{P}}, 1/K,\|\cdot\|_{\infty}\right) / \delta)} + \sqrt{d_{\mathbb{G}}K \log(\mathcal{N}\left(\mathcal{F}_{\mathbb{T}}, 1/K,\|\cdot\|_{\infty}\right) / \delta)}\right).
\end{align}
\end{proof}

\section{Proof of Theorem~\ref{theorem:lb:once:per:episode}}
\label{appendix: lower bound once-per-episode feedback}
\begin{proof}
    For any $\tau = (s_1, a_1, \cdots, s_H, a_H)$, let $g^*(\tau) = \sum_{h=1}^H r(s_h, a_h)$. Then the RL with once-per-episode feedback problem reduces to the traditional RL (RL with reward signals). Thus, the lower bound $\Omega(\tilde{d}_{\mathbb{P}}\sqrt{K})$\footnote{Their lower bound is $\Omega(\tilde{d}_{\mathbb{P}}\sqrt{H^3K})$ because they consider the setting that the total reward is bounded by $H$ and the transition kernel is time-inhomogeneous.} established in \citet{zhou2021nearly} immediately implies the same regret lower bound for the RL with once-per-episode feedback problem. Meanwhile, by regarding the trajectory as an ``arm'', the lower bound $\Omega(\tilde{d}_{\mathbb{T}} \sqrt{K})$ for linear bandits implies the lower bound $\Omega(\tilde{d}_{\mathbb{T}}\sqrt{K)}$ for our setting. Putting these two lower bound together, we obtain $Reg(K) \ge \Omega(\max\{\tilde{d}_{\mathbb{P}} \sqrt{K}, \tilde{d}_{\mathbb{T}} \sqrt{K}\})$, which equivalents to $Reg(K) \ge \Omega(\tilde{d}_{\mathbb{P}}\sqrt{K} + \tilde{d}_{\mathbb{T}}\sqrt{K})$. Therefore, we finish the proof.
\end{proof}

\section{Proof of Theorem~\ref{theorem: pairwise}} \label{appendix:pairwise:comparison}

We also need the following concentration lemmas to guarantee that the true preference $\mathbb{T}(\cdot, \cdot) \in \caB_{\mathbb{T},k}$ with high probability and the true transition kernel $\mathbb{P}(s'|s,a) \in \caB_{\mathbb{P},k}$ with high probability, respectively.

\begin{lemma}
    \label{lemma: high confidence bound for hatT 2}
    Fix $\delta \in (0,1)$, with probability at least $1-\delta$, for all $k \in [K]$,
    \begin{align}
        \sum_{t=1}^{k-1}\sum_{i = 1}^n \sum_{j = i + 1}^n \left(\hat{\mathbb{T}}_{k} - \mathbb{T}\right)^2(\tau_{t,i}, \tau_{t,j}) \leq  \beta_{\mathbb{T}}.
    \end{align}
    \end{lemma}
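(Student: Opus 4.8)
The plan is to recast the pairwise preference data as a single regression sample stream and then invoke the general least-squares concentration bound, Lemma~\ref{lemma: auxiliary Lemma for confidence set}, exactly as in the proof of Lemma~\ref{lemma: high confidence bound for hatT}; the only genuinely new ingredient is the bookkeeping for the $\binom{n}{2}$ correlated comparisons produced in each episode. First I would index the data by triples $(t,i,j)$ with $t\in[k-1]$ and $1\le i<j\le n$, regarding each triple as a regression sample with covariate $x_{t,i,j}=(\tau_{t,i},\tau_{t,j})\in\operatorname{Traj}\times\operatorname{Traj}$, response $o_{t,i,j}\in\{0,1\}$, hypothesis class $\mathcal{F}_{\mathbb{T}}$, and ground truth $\mathbb{T}\in\mathcal{F}_{\mathbb{T}}$. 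By Eqn.~\eqref{eqn: definition of hatT 2}, $\hat{\mathbb{T}}_k$ is precisely the least-squares minimizer over this stream, and every $f\in\mathcal{F}_{\mathbb{T}}$ is $[0,1]$-valued, so the boundedness hypothesis of Lemma~\ref{lemma: auxiliary Lemma for confidence set} holds automatically.

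Next I would construct the filtration that turns the responses into the martingale-type sequence required by the auxiliary lemma. I order the samples by concatenating episodes and, within episode $t$, listing the pairs $(i,j)$ lexicographically; let $\mathcal{F}_{t,i,j}$ be generated by all data of episodes $1,\dots,t-1$, by the $n$ trajectories $\{\tau_{t,\ell}\}_{\ell\in[n]}$ sampled in episode $t$, and by the outcomes $o_{t,i',j'}$ preceding $(i,j)$ in this order. Then the covariate $x_{t,i,j}$ is $\mathcal{F}_{t,i,j}$-measurable, and because the comparison outcomes are conditionally independent given the sampled trajectories with $\Pr(o_{t,i,j}=1)=\mathbb{T}(\tau_{t,i},\tau_{t,j})$, one gets $\mathbb{E}[o_{t,i,j}\mid\mathcal{F}_{t,i,j}]=\mathbb{T}(\tau_{t,i},\tau_{t,j})=\mathbb{T}(x_{t,i,j})$, which is exactly the noise condition of Lemma~\ref{lemma: auxiliary Lemma for confidence set}.

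Applying Lemma~\ref{lemma: auxiliary Lemma for confidence set} to this stream then gives, for a fixed $k$, a bound on $\sum_{t=1}^{k-1}\sum_{i<j}(\hat{\mathbb{T}}_k-\mathbb{T})^2(\tau_{t,i},\tau_{t,j})$ of order $\log(\mathcal{N}(\mathcal{F}_{\mathbb{T}},\varepsilon,\|\cdot\|_\infty)/\delta')$ plus a discretization term. Choosing the covering resolution $\varepsilon=1/(Kn^2)$, i.e.\ the inverse of the crude bound $Kn^2$ on the total sample count $(k-1)\binom{n}{2}$, makes the discretization term $O(1)$, and a union bound over $k\in[K]$ with $\delta'=\delta/K$ supplies the factor $2K$. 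Both contributions are absorbed into $\beta_{\mathbb{T}}=8\log(2K\,\mathcal{N}(\mathcal{F}_{\mathbb{T}},1/(Kn^2),\|\cdot\|_\infty)/\delta)$ as set in Algorithm~\ref{alg:pairwise}, giving the claim simultaneously for all $k\in[K]$.

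The main obstacle is the filtration construction rather than any computation: the $\binom{n}{2}$ comparisons in a single episode all depend on the shared trajectories $\{\tau_{t,\ell}\}$, so they must be ordered carefully so that each response retains conditional mean $\mathbb{T}(\tau_{t,i},\tau_{t,j})$ given the past, and one must invoke the conditional independence of the comparisons given the trajectories to license this. Once that ordering is in place, the argument is verbatim the $n=2$ proof, with the only quantitative change being the refined covering resolution $1/(Kn^2)$ that reflects the larger per-episode sample count.
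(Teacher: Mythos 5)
Your proposal is correct and follows essentially the same route as the paper, whose proof is simply a direct application of Lemma~\ref{lemma: auxiliary Lemma for confidence set} to the stream of $\binom{n}{2}$ comparisons per episode with the covering resolution $1/(Kn^2)$. The details you supply — the lexicographic ordering of the within-episode samples, the conditional independence of the outcomes given the shared trajectories needed for the martingale condition, and the absorption of the discretization term into $\beta_{\mathbb{T}}$ — are exactly the bookkeeping the paper leaves implicit, and they are handled correctly.
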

    
    \begin{proof}
    This lemma can be proved by the direct application of Lemma~\ref{lemma: auxiliary Lemma for confidence set}.
    \end{proof}
    
    
    \begin{lemma}
    \label{lemma: high confidence bound for hatP 2}
    Fix $\delta \in (0,1)$, with probability at least $1-\delta$, for all $k \in [K]$,
    \begin{align}
        L_{k}\left(\mathbb{P}, \hat{\mathbb{P}}_{k}\right)=\sum_{i=1}^n\sum_{t=1}^{k-1} \sum_{h=1}^{H}\left(\left\langle \mathbb{P}\left(\cdot \mid s_{t,h,i}, a_{t,h,i}\right)-\hat{\mathbb{P}}_{k}\left(\cdot \mid s_{t,h,i}, a_{t,h,i}\right), V_{t,h,i}\right\rangle\right)^{2} \leq \beta_{\mathbb{P}}.
    \end{align}
    \end{lemma}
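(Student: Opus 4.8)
The plan is to obtain this bound as an immediate instance of the generic least-squares concentration result, Lemma~\ref{lemma: auxiliary Lemma for confidence set}, in exactly the same way that Lemma~\ref{lemma: high confidence bound for hatP} was established for the $n=2$ case. The only structural change is that the trajectory index $i$ now ranges over $[n]$ rather than $\{1,2\}$, so each episode contributes $nH$ regression samples instead of $2H$; this extra factor of $n$ is precisely what is absorbed by taking the covering radius to be $1/(Kn)$ in the definition $\beta_{\mathbb{P}} = 8\log(2K\,\mathcal{N}(\caF_{\mathbb{P}}, 1/(Kn), \|\cdot\|_{\infty})/\delta)$ used in Algorithm~\ref{alg:pairwise}.

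First, I would recast the value-targeted regression~\eqref{eqn: definition of hatP 2} as a regression problem over the class $\caF_{\mathbb{P}}$. For each triple $(t,h,i)$ with $t\in[k-1]$, $h\in[H]$, $i\in[n]$, the input is $(s_{t,h,i},a_{t,h,i},V_{t,h,i})$ and the response is $V_{t,h,i}(s_{t,h+1,i})$. Writing $f_{\mathbb{P}}(s,a,V)=\langle\mathbb{P}(\cdot\mid s,a),V\rangle$ for the element of $\caF_{\mathbb{P}}$ induced by the true transition, the conditional mean of the response equals $f_{\mathbb{P}}(s_{t,h,i},a_{t,h,i},V_{t,h,i})$, and the regression noise $V_{t,h,i}(s_{t,h+1,i})-f_{\mathbb{P}}(s_{t,h,i},a_{t,h,i},V_{t,h,i})$ lies in $[-1,1]$ because every $V\in\mathcal{V}$ maps into $[0,1]$.

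Second, I would verify the adaptivity hypothesis underlying the martingale concentration. The target $V_{t,h,i}=V_{\max,t,s_{t,h,i},a_{t,h,i}}$ is fixed by the confidence set $\caB_{\mathbb{P},t}$, which depends only on the data gathered in episodes $1,\dots,t-1$ together with the already-revealed pair $(s_{t,h,i},a_{t,h,i})$; hence $V_{t,h,i}$ is measurable with respect to the filtration up to the observation of $(s_{t,h,i},a_{t,h,i})$, and the noises form a martingale difference sequence. This predictability is exactly what Lemma~\ref{lemma: auxiliary Lemma for confidence set} requires.

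Finally, I would apply Lemma~\ref{lemma: auxiliary Lemma for confidence set} to $\caF_{\mathbb{P}}$ at covering scale $1/(Kn)$ and take a union bound over $k\in[K]$. Its conclusion bounds $\sum_{i=1}^{n}\sum_{t=1}^{k-1}\sum_{h=1}^{H}(f_{\mathbb{P}}-\hat{f}_k)^2(s_{t,h,i},a_{t,h,i},V_{t,h,i})$ by $\beta_{\mathbb{P}}$, where $\hat{f}_k$ is the element of $\caF_{\mathbb{P}}$ induced by $\hat{\mathbb{P}}_k$; recognizing the left-hand side as $L_k(\mathbb{P},\hat{\mathbb{P}}_k)$ gives the claim, so that $\mathbb{P}\in\caB_{\mathbb{P},k}$ for all $k$. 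The one point that needs care — and the closest thing to an obstacle — is the discretization bookkeeping in the covering argument: replacing an $f\in\caF_{\mathbb{P}}$ by its nearest net element incurs an accumulated error proportional to (covering radius)$\times$(number of samples)$=\tfrac{1}{Kn}\cdot nHK = H$, so the net radius $1/(Kn)$ keeps this error at the same $O(H)$ order as in the $n=2$ analysis rather than letting it blow up by the extra factor of $n$. Everything else is verbatim the proof of Lemma~\ref{lemma: high confidence bound for hatP}.
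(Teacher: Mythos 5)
Your proposal is correct and follows exactly the same route as the paper: the paper's entire proof is the one-line statement that the lemma is a direct application of Lemma~\ref{lemma: auxiliary Lemma for confidence set}, and your argument simply spells out that application (casting the $nH(k-1)$ value-targeted samples as a regression over $\caF_{\mathbb{P}}$, checking predictability of the targets $V_{t,h,i}$, and invoking the concentration bound at covering scale $1/(Kn)$ with a union bound over $k$). The additional bookkeeping you provide — the martingale measurability of $V_{\max,t,s_{t,h,i},a_{t,h,i}}$ and the $\alpha \cdot nHK = H$ discretization accounting — is exactly the content the paper leaves implicit, so there is nothing to correct.
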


    \begin{proof}
        This lemma can be proved by the direct application of Lemma~\ref{lemma: auxiliary Lemma for confidence set}.
    \end{proof}
        
        
    With slight abuse of notation, we denote the high-probability event in Lemmas~\ref{lemma: high confidence bound for hatT 2} and \ref{lemma: high confidence bound for hatP 2} as $\mathcal{E}$.

    \begin{lemma}
    \label{lemma: error due to transition estimation 2}
    Under event $\caE$, for any two policies $\pi_1, \pi_2$ and scalar function $f: \operatorname{Traj} \times  \operatorname{Traj} \rightarrow [0,1]$, we have
    \begin{align}
        \mathbb{E}_{\tau_1 \sim (\mathbb{P}, \pi_1),\tau_2 \sim (\mathbb{P}, \pi_2)}[f(\tau_1,\tau_2)] - \mathbb{E}_{\tau_1 \sim (\hat{\mathbb{P}}_{k}, \pi_1),\tau_2 \sim (\hat{\mathbb{P}}_{k}, \pi_2)}[f(\tau_1,\tau_2)] \leq \mathbb{E}_{\tau_1 \sim (\hat{\mathbb{P}}_{k}, \pi_1),\tau_2 \sim (\hat{\mathbb{P}}_{k}, \pi_2)}[b_{\mathbb{P},k}(\tau_1) + b_{\mathbb{P},k}(\tau_2)] .
    \end{align}
    \end{lemma}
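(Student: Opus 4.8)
The statement is identical in form to Lemma~\ref{lemma: error due to transition estimation}, and the plan is to reuse the same one-step hybrid (telescoping) argument, now invoking the $n$-wise concentration result Lemma~\ref{lemma: high confidence bound for hatP 2} in place of Lemma~\ref{lemma: high confidence bound for hatP}. As before, for a policy $\pi$ write $\tau \sim (\hat{\mathbb{P}}_k,\pi)_h$ to mean that the first $h$ transitions of $\tau$ are generated under $\hat{\mathbb{P}}_k$ and the remaining transitions under the true kernel $\mathbb{P}$, so that $(\hat{\mathbb{P}}_k,\pi)_0 = (\mathbb{P},\pi)$ and $(\hat{\mathbb{P}}_k,\pi)_H = (\hat{\mathbb{P}}_k,\pi)$. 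Applying this simultaneously to both trajectories and telescoping over $h$, I would write
\begin{align*}
&\mathbb{E}_{\tau_1 \sim (\mathbb{P}, \pi_1),\tau_2 \sim (\mathbb{P}, \pi_2)}[f(\tau_1,\tau_2)] - \mathbb{E}_{\tau_1 \sim (\hat{\mathbb{P}}_{k}, \pi_1),\tau_2 \sim (\hat{\mathbb{P}}_{k}, \pi_2)}[f(\tau_1,\tau_2)] \\
&\quad = \sum_{h=1}^{H}\Big( \mathbb{E}_{\tau_1 \sim (\hat{\mathbb{P}}_k,\pi_1)_{h-1},\tau_2 \sim (\hat{\mathbb{P}}_k,\pi_2)_{h-1}}[f(\tau_1,\tau_2)] - \mathbb{E}_{\tau_1 \sim (\hat{\mathbb{P}}_k,\pi_1)_{h},\tau_2 \sim (\hat{\mathbb{P}}_k,\pi_2)_{h}}[f(\tau_1,\tau_2)] \Big).
\end{align*}

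The key step is to observe that the two expectations inside each summand differ only in whether the $h$-th transition of $\tau_1$ (and of $\tau_2$) is drawn from $\mathbb{P}$ or from $\hat{\mathbb{P}}_k$, while everything else is held fixed. Conditioning on the state-action pairs $(s_{h,1},a_{h,1})$ and $(s_{h,2},a_{h,2})$ reached at step $h$, the expected value of $f$ over the remaining steps defines a conditional value function; because $f \in [0,1]$, this induced function lies in $\mathcal{V} = \{f : \mathcal{S} \to [0,1]\}$. Hence the single-step difference is bounded by $\max_{V \in \mathcal{V}} (\mathbb{P}-\hat{\mathbb{P}}_k)V(s_{h,i},a_{h,i})$ for $i \in \{1,2\}$, and since $\mathbb{P} \in \caB_{\mathbb{P},k}$ under event $\caE$ (Lemma~\ref{lemma: high confidence bound for hatP 2}), this is in turn at most $b_{\mathbb{P},k}(s_{h,i},a_{h,i})$. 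This yields
\begin{align*}
&\mathbb{E}_{\tau_1 \sim (\hat{\mathbb{P}}_k,\pi_1)_{h-1},\tau_2 \sim (\hat{\mathbb{P}}_k,\pi_2)_{h-1}}[f(\tau_1,\tau_2)] - \mathbb{E}_{\tau_1 \sim (\hat{\mathbb{P}}_k,\pi_1)_{h},\tau_2 \sim (\hat{\mathbb{P}}_k,\pi_2)_{h}}[f(\tau_1,\tau_2)] \\
&\quad \leq \mathbb{E}_{(s_{h,1},a_{h,1}) \sim d_h(\hat{\mathbb{P}}_k,\pi_1),\, (s_{h,2},a_{h,2}) \sim d_h(\hat{\mathbb{P}}_k,\pi_2)}\big[ b_{\mathbb{P},k}(s_{h,1},a_{h,1}) + b_{\mathbb{P},k}(s_{h,2},a_{h,2}) \big],
\end{align*}
where $d_h(\hat{\mathbb{P}}_k,\pi)$ denotes the step-$h$ state-action distribution under $\hat{\mathbb{P}}_k$ and $\pi$.

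Finally I would sum over $h \in [H]$ and recall $b_{\mathbb{P},k}(\tau) = \sum_{(s,a)\in\tau} b_{\mathbb{P},k}(s,a)$ to conclude the claimed bound. The only step requiring care is the third one: verifying that the conditional expectation of $f$ over the remaining (jointly sampled) trajectory steps genuinely reduces, at step $h$, to an expectation of some $V \in \mathcal{V}$ against the transition kernel, so that the $\max_{V\in\mathcal{V}}$ appearing in the definition of $b_{\mathbb{P},k}$ dominates it; this is where the range condition $f \in [0,1]$ is used and where the independence of the two sampled trajectories lets the two per-trajectory bonuses appear additively. The argument is otherwise verbatim the proof of Lemma~\ref{lemma: error due to transition estimation}, with the summations over $i$ now ranging over $[n]$ rather than $\{1,2\}$ only at the concentration stage.
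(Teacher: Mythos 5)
Your proposal is correct and matches the paper's approach: the paper proves this lemma by stating it is identical to the proof of Lemma~\ref{lemma: error due to transition estimation}, which is exactly the one-step hybrid telescoping argument you reproduce, with the $n$-wise concentration result (Lemma~\ref{lemma: high confidence bound for hatP 2}) guaranteeing $\mathbb{P} \in \caB_{\mathbb{P},k}$ under $\caE$. Your added remark on why the conditional expectation of $f$ over the remaining steps lies in $\mathcal{V}$, and why independence of the two trajectories makes the bonuses additive, is a point the paper glosses over, and is a welcome clarification rather than a deviation.
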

    \begin{proof}
        The proof is the same as that of Lemma~\ref{lemma: error due to transition estimation} and we omit it here to avoid repetition.
    \end{proof}
    
    \begin{lemma} \label{lemma: pi* in caS 2}
    Under event $\mathcal{E}$, we have $\pi^* \in \caS_k$.
    \end{lemma}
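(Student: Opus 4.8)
The plan is to mirror the proof of Lemma~\ref{lemma: pi* in caS} almost verbatim, since the definition of the near-optimal set $\caS_k$ in Eqn.~\eqref{eqn: definition of S_k 2} is syntactically identical to the one in Eqn.~\eqref{eqn: definition of S_k}, and the only changes in the $n$-wise setting are the internal summations in the confidence sets and the value of $\beta_{\mathbb{T}}$. These changes are already absorbed into the concentration guarantees of Lemmas~\ref{lemma: high confidence bound for hatT 2} and~\ref{lemma: high confidence bound for hatP 2}, which on event $\caE$ ensure $\mathbb{T} \in \caB_{\mathbb{T},k}$ and $\mathbb{P} \in \caB_{\mathbb{P},k}$. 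Thus the structural argument carries over unchanged.

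First, I would start from Assumption~\ref{assumption: definition of pistar}, which gives for every $\pi_0 \in \Pi$ that $\mathbb{E}_{\tau^* \sim (\mathbb{P},\pi^*), \tau_0 \sim (\mathbb{P},\pi_0)} \mathbb{T}(\tau^*,\tau_0) \geq \tfrac{1}{2}$. I would then decompose this left-hand side by adding and subtracting the two intermediate quantities, writing it as
\begin{align*}
    &\big(\mathbb{E}_{(\mathbb{P},\pi^*),(\mathbb{P},\pi_0)} \mathbb{T}(\tau^*,\tau_0) - \mathbb{E}_{(\hat{\mathbb{P}}_k,\pi^*),(\hat{\mathbb{P}}_k,\pi_0)} \mathbb{T}(\tau^*,\tau_0)\big) \\
    &\quad + \big(\mathbb{E}_{(\hat{\mathbb{P}}_k,\pi^*),(\hat{\mathbb{P}}_k,\pi_0)} \mathbb{T}(\tau^*,\tau_0) - \mathbb{E}_{(\hat{\mathbb{P}}_k,\pi^*),(\hat{\mathbb{P}}_k,\pi_0)} \hat{\mathbb{T}}_k(\tau^*,\tau_0)\big) \\
    &\quad + \mathbb{E}_{(\hat{\mathbb{P}}_k,\pi^*),(\hat{\mathbb{P}}_k,\pi_0)} \hat{\mathbb{T}}_k(\tau^*,\tau_0),
\end{align*}
so that the gap between the true and estimated transition is isolated in the first bracket and the gap between the true and estimated preference is isolated in the second.

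Next, I would bound each bracket. For the transition gap, I would apply Lemma~\ref{lemma: error due to transition estimation 2} with $f = \mathbb{T}$ (a valid scalar function into $[0,1]$), obtaining an upper bound of $\mathbb{E}_{(\hat{\mathbb{P}}_k,\pi^*),(\hat{\mathbb{P}}_k,\pi_0)}[b_{\mathbb{P},k}(\tau^*) + b_{\mathbb{P},k}(\tau_0)]$. For the preference gap, I would invoke $\mathbb{T} \in \caB_{\mathbb{T},k}$ from Lemma~\ref{lemma: high confidence bound for hatT 2}, together with $\hat{\mathbb{T}}_k \in \caB_{\mathbb{T},k}$, so that pointwise $\mathbb{T}(\tau^*,\tau_0) - \hat{\mathbb{T}}_k(\tau^*,\tau_0) \leq \max_{f_1,f_2 \in \caB_{\mathbb{T},k}} |f_1(\tau^*,\tau_0) - f_2(\tau^*,\tau_0)| = b_{\mathbb{T},k}(\tau^*,\tau_0)$. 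Substituting both bounds back and using the assumption yields, for all $\pi_0$,
\begin{align*}
    \mathbb{E}_{(\hat{\mathbb{P}}_k,\pi^*),(\hat{\mathbb{P}}_k,\pi_0)} \big(\hat{\mathbb{T}}_k(\tau^*,\tau_0) + b_{\mathbb{T},k}(\tau^*,\tau_0) + b_{\mathbb{P},k}(\tau^*) + b_{\mathbb{P},k}(\tau_0)\big) \geq \tfrac{1}{2},
\end{align*}
which is exactly the defining condition of $\caS_k$ in Eqn.~\eqref{eqn: definition of S_k 2}, so $\pi^* \in \caS_k$.

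There is essentially no hard step here: the entire content is contained in the two concentration lemmas and in Lemma~\ref{lemma: error due to transition estimation 2}, all of which are already available on event $\caE$. The only point requiring minor care is confirming that the $n$-wise modifications (the double sums $\sum_{i}\sum_{j>i}$ and the adjusted covering-number radii in $\beta_{\mathbb{T}}$) do not alter the decomposition, but since $\caS_k$ is defined through a single pair $(\tau,\tau_0)$ exactly as before, this is immediate. Consequently I expect the proof to reduce to citing Lemma~\ref{lemma: error due to transition estimation 2}, Lemma~\ref{lemma: high confidence bound for hatT 2}, and Assumption~\ref{assumption: definition of pistar}, with no additional obstacle.
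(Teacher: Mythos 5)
Your proposal is correct and follows exactly the paper's approach: the paper proves this lemma by noting it is identical to the proof of Lemma~\ref{lemma: pi* in caS}, which uses precisely your three-term decomposition of Assumption~\ref{assumption: definition of pistar}, bounds the transition gap via Lemma~\ref{lemma: error due to transition estimation 2} and the preference gap via membership of $\mathbb{T}$ and $\hat{\mathbb{T}}_k$ in $\caB_{\mathbb{T},k}$. Your observation that the $n$-wise modifications are fully absorbed into the concentration lemmas and leave the argument unchanged is exactly the reason the paper omits the repeated proof.
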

    \begin{proof}
    The proof is the same as that of Lemma~\ref{lemma: pi* in caS} and we omit it here to avoid repetition.
    \end{proof}

 With these lemmas, we are ready to provide the proof of Theorem~\ref{theorem: pairwise}.

\begin{proof}[Proof of Theorem~\ref{theorem: pairwise}] 
    By the definition of regret, we have
    \begin{align}
    \label{inq: regret decompostion 2}
         Reg(K)  =& \sum_{k=1}^{K} \sum_{i = 1}^n\left(\mathbb{T}(\pi^*,\pi_{k,i}) - \frac{1}{2}\right) \\
        = & \sum_{k=1}^{K}\sum_{i = 1}^n \left(\mathbb{E}_{\tau^*\sim (\hat{\PP}_k,\pi^*),\tau_i\sim (\hat{\PP}_k,\pi_{k,i})}\hat{\mathbb{T}}(\tau^*,\tau_{i}) - \frac{1}{2} \right)\\
        &+ \sum_{k=1}^{K}  \sum_{i = 1}^n \left(\mathbb{E}_{\tau^*\sim ({\PP},\pi^*),\tau_i \sim ({\PP},\pi_{k,i})}{\mathbb{T}}(\tau^*,\tau_{i})  - \mathbb{E}_{\tau^*\sim (\hat{\PP}_k,\pi^*),\tau_i\sim (\hat{\PP}_k,\pi_{k,i})}{\mathbb{T}}(\tau^*,\tau_{i}) \right)\\
        &+ \sum_{k=1}^{K}  \sum_{i = 1}^n \mathbb{E}_{\tau^*\sim (\hat{\PP}_k,\pi^*),\tau_i \sim (\hat{\PP}_k,\pi_{k,i})}\left({\mathbb{T}}(\tau^*,\tau_{i}) - \hat{\mathbb{T}}(\tau^*,\tau_{i}) \right).
    \end{align}
    Note that $\pi_{k,i} \in \caS_k$ for all $i \in [n]$, we have 
    \begin{align}
        \mathbb{E}_{\tau^*\sim (\hat{\PP}_k,\pi^*),\tau_i\sim (\hat{\PP}_k,\pi_{k,i})}\hat{\mathbb{T}}(\tau^*,\tau_{i}) - \frac{1}{2} \leq &\mathbb{E}_{\tau_i \sim (\hat{\PP}_k,\pi_{k,1}), \tau^* \sim (\hat{\PP}_k,\pi^*)}\left( b_{\mathbb{T},k}(\tau_i,\tau^*) + b_{\mathbb{P},k}(\tau_i) + b_{\mathbb{P},k}(\tau^*) \right),
    \end{align}
    By Lemma~\ref{lemma: error due to transition estimation 2}, we have that
    \begin{align}
        \mathbb{E}_{\tau^*\sim ({\PP},\pi^*),\tau_i\sim ({\PP},\pi_{k,i})}{\mathbb{T}}(\tau^*,\tau_{i})  - \mathbb{E}_{\tau^*\sim (\hat{\PP}_k,\pi^*),\tau_i\sim (\hat{\PP}_k,\pi_{k,i})}{\mathbb{T}}(\tau^*,\tau_{i})  \leq \mathbb{E}_{\tau^* \sim (\hat{\mathbb{P}}_{k}, \pi^*),\tau_i \sim (\hat{\mathbb{P}}_{k}, \pi_{k,i})}[b_{\mathbb{P},k}(\tau^*) + b_{\mathbb{P},k}(\tau_{i})].
    \end{align}
    By the definition of $b_{\mathbb{T},k}$, we also know that
    \begin{align}
        \mathbb{E}_{\tau^*\sim (\hat{\PP}_k,\pi^*),\tau_i \sim (\hat{\PP}_k,\pi_{k,i})}\left({\mathbb{T}}(\tau^*,\tau_{i}) - \hat{\mathbb{T}}(\tau^*,\tau_{i}) \right) \leq b_{\mathbb{T},k}(\tau^*,\tau_i).
    \end{align}
    Plugging the above inequalities back to Inq.~\eqref{inq: regret decompostion 2}, we have 
    \begin{align}
         Reg(K) \leq & \sum_{k=1}^{K}\sum_{i = 1}^n \mathbb{E}_{\tau_i \sim (\hat{\PP}_k,\pi_{k,i}), \tau^* \sim (\hat{\PP}_k, \pi^*)} \left(b_{\mathbb{T},k}(\tau_i,\tau^*) + b_{\mathbb{P},k}(\tau_i) + b_{\mathbb{P},k}(\tau^*)\right) \\
        & + \sum_{k=1}^{K}\sum_{i = 1}^n \mathbb{E}_{\tau_i \sim (\hat{\PP}_k,\pi_{k,i}), \tau^* \sim (\hat{\PP}_k, \pi^*)} \left(b_{\mathbb{T},k}(\tau^*,\tau_i) + b_{\mathbb{P},k}(\tau_i) + b_{\mathbb{P},k}(\tau^*)\right) \\
        = & 2\sum_{k=1}^{K}\sum_{i = 1}^n \mathbb{E}_{\tau_i \sim (\hat{\PP}_k,\pi_{k,i}), \tau^* \sim (\hat{\PP}_k, \pi^*)} \left(b_{\mathbb{T},k}(\tau^*,\tau_i) + b_{\mathbb{P},k}(\tau_i) + b_{\mathbb{P},k}(\tau^*)\right),
    \end{align}
    For any $(k, i) \in [K] \times [n]$, we have 
    \begin{align}
        &\sum_{i = 1}^n \mathbb{E}_{\tau_i \sim (\hat{\PP}_k,\pi_{k,i}), \tau^* \sim (\hat{\PP}_k, \pi^*)} \left(b_{\mathbb{T},k}(\tau^*,\tau_i) + b_{\mathbb{P},k}(\tau_i) + b_{\mathbb{P},k}(\tau^*)\right) \\
        =& \frac{1}{n-1} \cdot \sum_{i = 1}^n \sum_{j \neq i} \mathbb{E}_{\tau_j \sim (\hat{\PP}_k,\pi_{k,j}), \tau^* \sim (\hat{\PP}_k, \pi^*)} \left(b_{\mathbb{T},k}(\tau^*,\tau_j) + b_{\mathbb{P},k}(\tau_j) + b_{\mathbb{P},k}(\tau^*)\right).
    \end{align}
    Note that
    \begin{align}
    (\pi_{k,1},\pi_{k,2}, \cdots, \pi_{k, n}) &= \argmax_{\pi_1,\pi_2, \cdots, \pi_n \in \caS_k} \sum_{i = 1}^n \sum_{j = i+1}^{n} \mathbb{E}_{ \tau_i \sim (\hat{\mathbb{P}}_k,\pi_i), \tau_j \sim (\hat{\mathbb{P}}_k,\pi_j)} \left(b_{\mathbb{T},k}(\tau_i,\tau_j) + b_{\mathbb{P},k}(\tau_i) + b_{\mathbb{P},k}(\tau_j)\right) \\
    & = \argmax_{\pi_1,\pi_2, \cdots, \pi_n \in \caS_k} \frac{1}{2} \sum_{i = 1}^n \sum_{j \neq i} \mathbb{E}_{ \tau_i \sim (\hat{\mathbb{P}}_k,\pi_i), \tau_j \sim (\hat{\mathbb{P}}_k,\pi_j)} \left(b_{\mathbb{T},k}(\tau_i,\tau_j) + b_{\mathbb{P},k}(\tau_i) + b_{\mathbb{P},k}(\tau_j)\right) \\
    & =  \argmax_{\pi_1,\pi_2, \cdots, \pi_n \in \caS_k} \sum_{i = 1}^n \sum_{j \neq i} \mathbb{E}_{ \tau_i \sim (\hat{\mathbb{P}}_k,\pi_i), \tau_j \sim (\hat{\mathbb{P}}_k,\pi_j)} \left(b_{\mathbb{T},k}(\tau_i,\tau_j) + b_{\mathbb{P},k}(\tau_i) + b_{\mathbb{P},k}(\tau_j)\right),
    \end{align}
    together with the fact that $\pi^* \in \caS_k$ (Lemma~\ref{lemma: pi* in caS 2}), we have for any $i \in [n]$,
    \begin{align}
        &\sum_{j \neq i} \mathbb{E}_{\tau_j \sim (\hat{\PP}_k,\pi_{k,j}), \tau^* \sim (\hat{\PP}_k, \pi^*)} \left(b_{\mathbb{T},k}(\tau^*,\tau_j) + b_{\mathbb{P},k}(\tau_j) + b_{\mathbb{P},k}(\tau^*)\right) \\
        &\qquad + \sum_{j, l \neq i} \mathbb{E}_{\tau_j \sim (\hat{\PP}_k,\pi_{k,j}), \tau_l \sim (\hat{\PP}_k, \pi_{k, l})} \left(b_{\mathbb{T},k}(\tau_j,\tau_l) + b_{\mathbb{P},k}(\tau_j) + b_{\mathbb{P},k}(\tau_l)\right) \\
        \le& \sum_{i = 1}^n \sum_{j \neq i} \mathbb{E}_{\tau_i \sim (\hat{\PP}_k,\pi_{k,i}), \tau_j \sim (\hat{\PP}_k, \pi_j)} \left(b_{\mathbb{T},k}(\tau_i,\tau_j) + b_{\mathbb{P},k}(\tau_i) + b_{\mathbb{P},k}(\tau_j)\right),
    \end{align}
    which equivalents to 
    \begin{align}
        &\sum_{j \neq i} \mathbb{E}_{\tau_j \sim (\hat{\PP}_k,\pi_{k,j}), \tau^* \sim (\hat{\PP}_k, \pi^*)} \left(b_{\mathbb{T},k}(\tau^*,\tau_j) + b_{\mathbb{P},k}(\tau_j) + b_{\mathbb{P},k}(\tau^*)\right) \\
        \le & \sum_{j \neq i} \mathbb{E}_{\tau_j \sim (\hat{\PP}_k,\pi_{k,j}), \tau_i \sim (\hat{\PP}_k, \pi_i)} \left(b_{\mathbb{T},k}(\tau_i,\tau_j) + b_{\mathbb{P},k}(\tau_i) + b_{\mathbb{P},k}(\tau_i)\right).
    \end{align}
    Taking summation over $i \in [n]$ gives that 
    \begin{align}
        &\sum_{i = 1}^n \sum_{j \neq i} \mathbb{E}_{\tau_i \sim (\hat{\PP}_k,\pi_{k,j}), \tau^* \sim (\hat{\PP}_k, \pi^*)} \left(b_{\mathbb{T},k}(\tau^*,\tau_j) + b_{\mathbb{P},k}(\tau_j) + b_{\mathbb{P},k}(\tau^*)\right) \\
        \le & \sum_{i = 1}^n \sum_{j \neq i} \mathbb{E}_{\tau_i \sim (\hat{\PP}_k,\pi_{k,j}), \tau_j \sim (\hat{\PP}_k, \pi_j)} \left(b_{\mathbb{T},k}(\tau_i,\tau_j) + b_{\mathbb{P},k}(\tau_j) + b_{\mathbb{P},k}(\tau_i)\right).
    \end{align}

    By definition, we have $0 \leq b_{\mathbb{T}, k}(\tau_1,\tau_2) \leq H$ and $0 \leq b_{\mathbb{P},k}(\tau) \leq 1$. By Azuma's inequality, the following inequality holds with probability at least $1-\delta/2$,
    \begin{align}
        &\sum_{k = 1}^K\sum_{i = 1}^n \sum_{j \neq i} \mathbb{E}_{\tau_i \sim (\hat{\PP}_k,\pi_{k,j}), \tau_j \sim (\hat{\PP}_k, \pi_j)} \left(b_{\mathbb{T},k}(\tau_i,\tau_j) + b_{\mathbb{P},k}(\tau_j) + b_{\mathbb{P},k}(\tau_i)\right) \\
        \le& \sum_{k = 1}^K\sum_{i = 1}^n \sum_{j \neq i}  \left(b_{\mathbb{T},k}(\tau_{k, i},\tau_{k, j}) + b_{\mathbb{P},k}(\tau_{k, j}) + b_{\mathbb{P},k}(\tau_{k, i})\right) + 2Hn\sqrt{K\log(4/\delta)} \\
        =& 2 \sum_{k = 1}^K\sum_{i = 1}^n \sum_{j = i+1}^n  \left(b_{\mathbb{T},k}(\tau_{k, i},\tau_{k, j}) + b_{\mathbb{P},k}(\tau_{k, j}) + b_{\mathbb{P},k}(\tau_{k, i})\right) + 2Hn\sqrt{K\log(4/\delta)}
    \end{align}

   \begin{lemma} \label{lemma: summation of preference bonus 2}
   Under event $\caE$, it holds that
   \begin{align}
     \sum_{k=1}^{K}\sum_{i = 1}^n \sum_{j = i + 1}^n b_{\mathbb{T},k} (\tau_{k,i}, \tau_{k,j}) \leq  O(\sqrt{dKn^2 \log(K \mathcal{N}\left(\mathcal{F}_{\mathbb{T}}, 1 / (Kn^2),\|\cdot\|_{\infty}\right) / \delta)}).
   \end{align}
   \end{lemma}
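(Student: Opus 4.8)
The plan is to mirror the proof of Lemma~\ref{lemma: summation of preference bonus} almost verbatim, the only change being that the effective number of preference queries is now $K\binom{n}{2}$ rather than $K$. The whole argument is a direct invocation of the auxiliary regret lemma (Lemma~\ref{lemma: auxiliary lemma for regret}), once the two hypotheses it needs—a uniform diameter bound on the confidence sets and boundedness of the function class—are verified in the $n$-wise setting.

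First I would record the sample count. In each episode $k$ the agent observes the $\binom{n}{2}=n(n-1)/2$ pairwise comparisons $\{(\tau_{k,i},\tau_{k,j})\}_{1\le i<j\le n}$, so across the $K$ episodes there are $N:=K\binom{n}{2}=O(Kn^2)$ preference queries in total. Next, under event $\caE$ the true preference function lies in $\caB_{\mathbb{T},k}$ for all $k$ (Lemma~\ref{lemma: high confidence bound for hatT 2}), and from the definition of $\caB_{\mathbb{T},k}$ in Eqn.~\eqref{eqn: confidence set for T 2} together with the triangle inequality, any $f_1,f_2\in\caB_{\mathbb{T},k}$ satisfy
\begin{align*}
\sqrt{\sum_{t=1}^{k-1}\sum_{i<j}(f_1-f_2)^2(\tau_{t,i},\tau_{t,j})}
&\le \sqrt{\sum_{t=1}^{k-1}\sum_{i<j}(f_1-\hat{\mathbb{T}}_k)^2(\tau_{t,i},\tau_{t,j})}
+ \sqrt{\sum_{t=1}^{k-1}\sum_{i<j}(\hat{\mathbb{T}}_k-f_2)^2(\tau_{t,i},\tau_{t,j})}\\
&\le 2\sqrt{\beta_{\mathbb{T}}},
\end{align*}
so that $\max_{1\le k\le K}\operatorname{diam}(\caB_{\mathbb{T},k}|_{\mathrm{hist}})\le 2\sqrt{\beta_{\mathbb{T}}}$ on the accumulated query history, and every $f\in\caF_{\mathbb{T}}$ takes values in $[0,1]$.

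Finally, since $b_{\mathbb{T},k}(\tau_i,\tau_j)=\max_{f_1,f_2\in\caB_{\mathbb{T},k}}|f_1(\tau_i,\tau_j)-f_2(\tau_i,\tau_j)|$ is exactly the pointwise width of $\caB_{\mathbb{T},k}$ at the query $(\tau_i,\tau_j)$, the quantity $\sum_{k}\sum_{i<j}b_{\mathbb{T},k}(\tau_{k,i},\tau_{k,j})$ is precisely a sum of confidence-set widths over the flattened sequence of $N=O(Kn^2)$ queries. Applying Lemma~\ref{lemma: auxiliary lemma for regret} with Eluder dimension $d_{\mathbb{T}}$, confidence radius $\beta_{\mathbb{T}}$, and $N=O(Kn^2)$ queries then gives a bound of order $\sqrt{d_{\mathbb{T}}\,Kn^2\,\beta_{\mathbb{T}}}$; substituting $\beta_{\mathbb{T}}=8\log(2K\mathcal{N}(\mathcal{F}_{\mathbb{T}},1/(Kn^2),\|\cdot\|_{\infty})/\delta)$ yields the claimed $O(\sqrt{d_{\mathbb{T}}Kn^2\log(K\mathcal{N}(\mathcal{F}_{\mathbb{T}},1/(Kn^2),\|\cdot\|_{\infty})/\delta)})$.

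The one point I would take care to justify is the \emph{batching} of queries within an episode: all $\binom{n}{2}$ pairs in episode $k$ are scored against the \emph{same} confidence set $\caB_{\mathbb{T},k}$, which is built only from episodes $1,\dots,k-1$ and is not refreshed between the within-episode comparisons. I expect this to be the main (though mild) obstacle, since Lemma~\ref{lemma: auxiliary lemma for regret} is naturally phrased for a sequentially updated confidence sequence. The resolution is that the Eluder-dimension potential argument underlying that lemma depends only on the total number of queries and the uniform width bound $2\sqrt{\beta_{\mathbb{T}}}$, not on how the queries are grouped: delaying the confidence-set update only makes the set larger and hence the widths no smaller, so the pigeonhole bound on how often a large width can occur still applies to the flattened length-$N$ sequence. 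I would therefore state explicitly that the lemma is applied to this flattened sequence of all $Kn^2$ preference pairs, which suffices.
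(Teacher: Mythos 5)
Your proof follows the paper's own argument essentially verbatim: both verify the two hypotheses of Lemma~\ref{lemma: auxiliary lemma for regret} (the uniform diameter bound $2\sqrt{\beta_{\mathbb{T}}}$ under event $\mathcal{E}$ via Lemma~\ref{lemma: high confidence bound for hatT 2}, and boundedness of $\mathcal{F}_{\mathbb{T}}$ in $[0,1]$) and then invoke that lemma on the flattened sequence of all $K\binom{n}{2}$ preference queries with radius $\beta_{\mathbb{T}}$, recovering the stated bound. The within-episode batching subtlety you flag is genuine, but the paper glosses over it entirely and applies the lemma directly with the prefix-diameter bound, so your (admittedly informal) resolution is, if anything, more careful than the paper's own treatment.
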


   \begin{proof}
   This lemma follows from the direct application of Lemma~\ref{lemma: auxiliary lemma for regret}. Note that under event $\caE$, we have $\max_{1\leq k \leq K} \operatorname{diam}(\caB_{\mathbb{T},k}|_{(\tau_1,\tau_2)_{1:k}}) \leq 2\sqrt{\beta_{\mathbb{T}}}$ by Lemma~\ref{lemma: high confidence bound for hatT 2}, and $f(\tau_1,\tau_2) \in [0,1], \forall f \in \caF_{\mathbb{T}}$. Therefore, 
   \begin{align}
     \sum_{k=1}^{K}\sum_{i = 1}^n \sum_{j = i + 1}^n b_{\mathbb{T},k} (\tau_{k,i}, \tau_{k,j}) = & \sum_{k=1}^{K}\sum_{i = 1}^n \sum_{j \neq i} \operatorname{diam} \left(\caB_{\mathbb{T}, k}|_{(\tau_{k,1}, \tau_{k,2})} \right) \leq O(\sqrt{dKn^2 \log(K \mathcal{N}\left(\mathcal{F}_{\mathbb{T}}, 1 / (Kn^2),\|\cdot\|_{\infty}\right) / \delta)}).
   \end{align}
   \end{proof}

   \begin{lemma} \label{lemma: summation of transition bonus 2}
   Under event $\caE$, it holds that
   \begin{align}
     \sum_{k=1}^{K}\sum_{i = 1}^n b_{\mathbb{P},k} (\tau_{k,i})  \leq  O(\sqrt{d_{\mathbb{P}}HKn \log(K \mathcal{N}\left(\mathcal{F}_{\mathbb{P}}, 1 / (Kn),\|\cdot\|_{\infty}\right) / \delta)}).
   \end{align}
   \end{lemma}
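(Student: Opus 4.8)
The plan is to mirror the proof of Lemma~\ref{lemma: summation of transition bonus} for the $n = 2$ case, adjusting the counting to the $n$-wise setting and then invoking the generic Eluder-dimension width bound (Lemma~\ref{lemma: auxiliary lemma for regret}). First I would unfold the trajectory bonus using its definition $b_{\mathbb{P},k}(\tau) = \sum_{(s,a) \in \tau} b_{\mathbb{P},k}(s,a)$, so that
\begin{align*}
    \sum_{k=1}^{K}\sum_{i=1}^{n} b_{\mathbb{P},k}(\tau_{k,i}) = \sum_{k=1}^{K}\sum_{i=1}^{n}\sum_{h=1}^{H} b_{\mathbb{P},k}(s_{k,h,i}, a_{k,h,i}).
\end{align*}
This rewrites the quantity of interest as a sum over the $nKH$ state-action pairs visited across all episodes, trajectories, and steps.

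Next I would observe that each summand $b_{\mathbb{P},k}(s_{k,h,i}, a_{k,h,i}) = \max_{\mathbb{P}_1,\mathbb{P}_2 \in \caB_{\mathbb{P},k}}\max_{V \in \caV}(\mathbb{P}_1 - \mathbb{P}_2)V(s_{k,h,i}, a_{k,h,i})$ is exactly the diameter of the confidence set $\caB_{\mathbb{P},k}$, viewed as a subset of $\caF_{\mathbb{P}}$, evaluated at the point $(s_{k,h,i}, a_{k,h,i}, V_{max,k,s,a})$. Under event $\caE$, Lemma~\ref{lemma: high confidence bound for hatP 2} guarantees $L_k(\mathbb{P}, \hat{\mathbb{P}}_k) \le \beta_{\mathbb{P}}$ for all $k$, and hence, by the triangle inequality, any two members of $\caB_{\mathbb{P},k}$ deviate by at most $4\beta_{\mathbb{P}}$ (in squared cumulative sense) on the past $nH(k-1)$ points; moreover $f(s,a,V) \in [0,1]$ for all $f \in \caF_{\mathbb{P}}$. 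These are exactly the hypotheses of Lemma~\ref{lemma: auxiliary lemma for regret}. Applying that lemma to the $nKH$-length sequence of points yields
\begin{align*}
    \sum_{k=1}^{K}\sum_{i=1}^{n}\sum_{h=1}^{H} b_{\mathbb{P},k}(s_{k,h,i}, a_{k,h,i}) \le O\!\left(\sqrt{d_{\mathbb{P}}\, nKH\, \beta_{\mathbb{P}}}\right),
\end{align*}
and substituting $\beta_{\mathbb{P}} = 8\log(2K\mathcal{N}(\mathcal{F}_{\mathbb{P}}, 1/(Kn), \|\cdot\|_{\infty})/\delta)$ gives the claimed bound $O(\sqrt{d_{\mathbb{P}}HKn\log(K\mathcal{N}(\mathcal{F}_{\mathbb{P}}, 1/(Kn), \|\cdot\|_{\infty})/\delta)})$.

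The step I expect to require the most care is verifying the hypotheses of Lemma~\ref{lemma: auxiliary lemma for regret} in the batched regime: within a single episode $k$, all $nH$ points $(s_{k,h,i}, a_{k,h,i})$ share the same confidence set $\caB_{\mathbb{P},k}$, which is built only from data collected strictly before episode $k$ rather than being refreshed point-by-point. I would need to confirm that the generic Eluder-dimension argument tolerates these $nH$-sized batches, so that the contribution is retained as $\sqrt{nH}$ rather than $nH$, and that the discretization scale $1/(Kn)$ entering $\beta_{\mathbb{P}}$ correctly controls the covering of $\caF_{\mathbb{P}}$ across all $nKH$ evaluation points. Aside from this bookkeeping, which is where the extra factor of $n$ under the square root originates, the argument is identical to the $n=2$ case of Lemma~\ref{lemma: summation of transition bonus}.
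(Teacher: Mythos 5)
Your proposal is correct and is essentially the paper's own proof: unfold $b_{\mathbb{P},k}(\tau_{k,i})$ into the per-step widths $b_{\mathbb{P},k}(s_{k,h,i},a_{k,h,i})$, note that under $\caE$ Lemma~\ref{lemma: high confidence bound for hatP 2} gives the diameter control on $\caB_{\mathbb{P},k}$ (your triangle-inequality step, yielding $\delta \le 2\sqrt{\beta_{\mathbb{P}}}$) while every $f \in \caF_{\mathbb{P}}$ is $[0,1]$-bounded, and then apply Lemma~\ref{lemma: auxiliary lemma for regret} to the flattened sequence of $nKH$ evaluation points. The batching subtlety you flag at the end is genuine, but note the paper does not resolve it either: its proof asserts the concentration bound for a sum running up to episode $k$ (i.e., including the current batch), whereas Lemma~\ref{lemma: high confidence bound for hatP 2} only controls episodes $1,\dots,k-1$, so the quantity $\delta_T$ in Lemma~\ref{lemma: auxiliary lemma for regret} is not literally bounded by $2\sqrt{\beta_{\mathbb{P}}}$. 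The standard remedy (a per-episode-batched width-summation lemma in the style of \citet{ayoub2020model}) shows that batches of size $nH$ cost only an additive $O(d_{\mathbb{P}}\, nH)$ term, independent of $K$, rather than inflating the $\sqrt{\cdot}$ term, so the advertised $\tilde{O}(\sqrt{d_{\mathbb{P}} H K n \log(K\mathcal{N}(\mathcal{F}_{\mathbb{P}},1/(Kn),\|\cdot\|_{\infty})/\delta)})$ rate stands.
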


   \begin{proof}
    This lemma follows from the direct application of Lemma~\ref{lemma: auxiliary lemma for regret}. Note that under event $\caE$, we have $\sum_{i=1}^n\sum_{t=1}^{k} \sum_{h=1}^{H}\left(\left\langle\mathbb{P}\left(\cdot \mid s_{t, h,i}, a_{t, h,i}\right)-\hat{\mathbb{P}}_{k}\left(\cdot \mid s_{t, h,i}, a_{t, h,i}\right), V_{k,h,i}\right\rangle\right)^{2} \leq \beta_{\mathbb{P}}$ by Lemma~\ref{lemma: high confidence bound for hatP 2}, and $f(s,a, V) \in [0,1], \forall f \in \caF_{\mathbb{P}}$. Therefore, 
   \begin{align}
    \sum_{k=1}^{K}\sum_{i = 1}^n b_{\mathbb{P},k}(\tau_{k,i}) = & \sum_{i=1}^n \sum_{k=1}^{K} \sum_{h=1}^{H} b_{\mathbb{P},k} (s_{k,h,i},a_{k,h,i}) \leq O(\sqrt{d_{\mathbb{P}}HKn \log(K \mathcal{N}\left(\mathcal{F}_{\mathbb{P}}, 1 / (Kn),\|\cdot\|_{\infty}\right) / \delta)}).
   \end{align}
   \end{proof}

    By Lemmas~\ref{lemma: summation of preference bonus 2} and \ref{lemma: summation of transition bonus 2}, we can finally upper bound the total regret:
    \begin{align}
        Reg(K) \leq O\left(\sqrt{d_{\mathbb{P}}HnK \cdot  \log(\mathcal{N}\left(\mathcal{F}_{\mathbb{P}}, 1/(Kn),\|\cdot\|_{\infty}\right) / \delta)} + \sqrt{d_{\mathbb{T}}K \cdot  \log(\mathcal{N}\left(\mathcal{F}_{\mathbb{T}}, 1/(Kn^2),\|\cdot\|_{\infty}\right) / \delta)}\right).
    \end{align}
    \end{proof}

\section{Auxiliary Lemmas}
Let $\left(X_{p}, Y_{p}\right)_{p=1,2, \ldots}$ be a sequence of random elements, $X_p \in \caX$ for some measurable set $\caX$ and $Y_p \in \mathbb{R}$. Let $\caF$ be a subset of the set of real-valued measurable functions with domain $\caX$. Let $\mathbb{F} = (\mathbb{F}_p)_{p=0,1,\cdots}$ be a filtration such that for all $p \geq 1$, $(X_1,Y_1,\cdots, X_{p-1},Y_{p-1}, X_{p})$ is $\mathbb{F}_{p-1}$ measurable and such that there exists some function $f_{\star} \in \caF$ such that $\mathbb{E}\left[Y_{p} \mid \mathbb{F}_{p-1}\right]=f_{*}\left(X_{p}\right)$ holds for all $p \geq 1$. The (nonlinear) least square predictor given $(X_1,Y_1,\cdots,X_{t}, Y_t)$ is $\hat{f}_{t}=\operatorname{argmin}_{f \in \mathcal{F}} \sum_{p=1}^{t}\left(f\left(X_{p}\right)-Y_{p}\right)^{2}$. We say that $Z$ is conditionally $\rho$-subgaussion given the $\sigma$-algebra $\mathbb{F}$ is for all $\lambda \in \mathbb{R}$, $\log \mathbb{E}[\exp (\lambda Z) \mid \mathbb{F}] \leq \frac{1}{2} \lambda^{2} \rho^{2}$. For $\alpha > 0$, let $N_{\alpha}$ be the $\| \cdot\|_{\infty}$-covering number of $\caF$ at scale $\alpha$. For $\beta > 0$, define
\begin{align}
    \mathcal{F}_{t}(\beta)=\left\{f \in \mathcal{F}: \sum_{p=1}^{t}\left(f\left(X_{p}\right)-\hat{f}_{t}\left(X_{p}\right)\right)^{2} \leq \beta\right\}.
\end{align}
\begin{lemma}
\label{lemma: auxiliary Lemma for confidence set}
(Theorem 5 of \cite{ayoub2020model}). Let $\mathbb{F}$ be the filtration defined above and assume that the functions in $\caF$ are bounded by the positive constant $C > 0$. Assume that for each $s \geq 1$, $\left(Y_{p}-f_{*}\left(X_{p}\right)\right)$ is conditionally $\sigma$-subgaussian given $\mathbb{F}_{p-1}$. Then, for any $\alpha > 0$, with probability $1-\delta$, for all $t \geq 1$, $f_{*} \in \mathcal{F}_{t}\left(\beta_{t}(\delta, \alpha)\right)$, where 
\begin{align}
    \beta_{t}(\delta, \alpha)=8 \sigma^{2} \log \left(2 N_{\alpha} / \delta\right)+4 t \alpha\left(C+\sqrt{\sigma^{2} \log (4 t(t+1) / \delta)}\right).
\end{align}
\end{lemma}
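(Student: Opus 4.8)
The plan is to prove the stated containment $f_* \in \caF_t(\beta_t(\delta,\alpha))$, i.e.\ $\sum_{p=1}^t (f_*(X_p)-\hat f_t(X_p))^2 \le \beta_t(\delta,\alpha)$ simultaneously for all $t$, by combining the optimality of the least-squares fit with a self-bounding martingale concentration and a covering argument over $\caF$. Write $\eta_p = Y_p - f_*(X_p)$, so $\mathbb{E}[\eta_p \mid \mathbb{F}_{p-1}]=0$ and $\eta_p$ is conditionally $\sigma$-subgaussian. Since $X_p$ is $\mathbb{F}_{p-1}$-measurable, for any fixed $f$ with $g := f - f_*$ the increment $Z_p(f) := (f(X_p)-Y_p)^2 - (f_*(X_p)-Y_p)^2 = g(X_p)^2 - 2\eta_p g(X_p)$ has conditional mean $\mathbb{E}[Z_p(f)\mid\mathbb{F}_{p-1}] = g(X_p)^2$. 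The first, easy ingredient is optimality of $\hat f_t$, which gives $\sum_{p=1}^t Z_p(\hat f_t)\le 0$ and hence $\sum_{p=1}^t(f_*-\hat f_t)^2(X_p) \le 2\sum_{p=1}^t \eta_p(\hat f_t - f_*)(X_p)$; everything reduces to controlling this data-dependent cross term.

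Second, I would establish a fixed-function, uniform-in-$t$ concentration. For a fixed $f$ the centered process $-2\sum_{p\le t}\eta_p g(X_p)$ is a martingale whose increments are conditionally $2\sigma|g(X_p)|$-subgaussian, so its variance proxy is tied to the conditional mean $g(X_p)^2$. Thus $\exp\bigl(\lambda\sum_{p\le t}(-2\eta_p g(X_p)) - 2\lambda^2\sigma^2\sum_{p\le t}g(X_p)^2\bigr)$ is a supermartingale, and Ville's maximal inequality gives, with probability at least $1-\delta'$ and for all $t$, that $2\sum_{p\le t}\eta_p g(X_p) \le 2\lambda\sigma^2\sum_{p\le t}g(X_p)^2 + \lambda^{-1}\log(1/\delta')$. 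The key move is to fix $\lambda = 1/(4\sigma^2)$, which yields the self-bounding form $2\sum_{p\le t}\eta_p g(X_p) \le \tfrac12\sum_{p\le t}g(X_p)^2 + 4\sigma^2\log(1/\delta')$, so that half of $\sum g(X_p)^2$ can be absorbed to the left.

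Third, I would make this uniform over the data-dependent predictor by covering. Take an $\alpha$-cover $\caF_\alpha$ of $(\caF,\|\cdot\|_\infty)$ of size $N_\alpha$, apply the previous step with $\delta'=\delta/(2N_\alpha)$, and union bound over $\caF_\alpha$; this gives $\sum_{p\le t}g_{f'}(X_p)^2 \le 2\sum_{p\le t}Z_p(f') + 8\sigma^2\log(2N_\alpha/\delta)$ for every cover element $f'$ and all $t$. For $\hat f_t$ pick the nearest $f'$, so $\|\hat f_t - f'\|_\infty \le \alpha$; the discretization error in both $\sum_p Z_p(\cdot)$ and $\sum_p g(X_p)^2$ is controlled termwise by $\alpha\,|f(X_p)+f'(X_p)-2Y_p| \le \alpha(4C + 2|\eta_p|)$, where a subgaussian tail bound with a union bound over $p\le t$ (and over the time levels, accounting for the $t(t+1)$ factor) gives $\max_p|\eta_p| \le \sqrt{2\sigma^2\log(4t(t+1)/\delta)}$ on the remaining $\delta/2$ failure event. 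Summing the $t$ terms produces a discretization error of order $t\alpha\bigl(C + \sqrt{\sigma^2\log(4t(t+1)/\delta)}\bigr)$ with the stated constant $4$.

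Combining on the good event, and using $\sum_{p\le t}Z_p(\hat f_t)\le 0$ to kill the leading term, gives $\sum_{p\le t}(f_*-\hat f_t)^2(X_p) \le 8\sigma^2\log(2N_\alpha/\delta) + 4t\alpha\bigl(C+\sqrt{\sigma^2\log(4t(t+1)/\delta)}\bigr) = \beta_t(\delta,\alpha)$, which is the claim. I expect the second step to be the main obstacle: one must obtain a \emph{self-bounding} martingale bound that trades the cross term against a fraction of the predictable quadratic variation $\sum g(X_p)^2$ (here achieved cleanly by the single choice $\lambda=1/(4\sigma^2)$), rather than a looser bound controlling it only by $\sqrt{\sum g(X_p)^2}$, which would destroy the additive $\log N_\alpha$ form of $\beta_t$. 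The remaining pieces—the optimality inequality, the covering union bound, and the discretization estimate—are routine.
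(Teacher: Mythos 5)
The paper does not actually prove this lemma---it is quoted verbatim as Theorem 5 of \citet{ayoub2020model}---and your argument is a faithful reconstruction of that original proof: least-squares optimality to reduce everything to the cross term $2\sum_{p\le t}\eta_p(\hat f_t-f_*)(X_p)$, the self-bounding supermartingale/Ville bound with the fixed choice $\lambda = 1/(4\sigma^2)$, a union bound over an $\alpha$-cover of size $N_\alpha$, and a discretization error controlled by a uniform subgaussian bound on $\max_{p\le t}|\eta_p|$, which is exactly where the $\log(4t(t+1)/\delta)$ factor enters. The structure is correct and matches the cited source; the only caveat is that the final constant bookkeeping (landing exactly on the factor $4$ and absorbing the $O(t\alpha^2)$ discretization remainder) is asserted rather than carried out, but this is routine and does not affect the argument.
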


\begin{lemma}
\label{lemma: auxiliary lemma for regret}
(Lemma 5 of \cite{russo2014learning}). Let $\caF \in B_{\infty} (\caX, C)$ be a set of functions bounded by $C > 0$, $(\caF_t)_{t \geq 1}$ and $(x_t)_{t \geq 1}$ be sequences such that $\mathcal{F}_{t} \subset \mathcal{F}$ and $x_t \in \caX$ hold for $t \geq 1$. Let $\left.\mathcal{F}\right|_{x_{1: t}}=\left\{\left(f\left(x_{1}\right), \ldots, f\left(x_{t}\right)\right): f \in \mathcal{F}\right\}\left(\subset \mathbb{R}^{t}\right)$ and for $S \subset \mathbb{R}^{t}$, let $\operatorname{diam}(S)=\sup _{u, v \in S}\|u-v\|_{2}$ be the diameter of $S$. Then, for any $T \geq 1$ and $\alpha > 0$ it holds that 
\begin{align}
    \sum_{t=1}^{T} \operatorname{diam}\left(\left.\mathcal{F}_{t}\right|_{x_{t}}\right) \leq \alpha+C(d \wedge T)+2 \delta_{T} \sqrt{d T},
\end{align}
where $\delta_{T} = \max_{1 \leq t \leq T} \operatorname{diam}\left(\left.\mathcal{F}_{t}\right|_{x_{1: t}}\right)$ and $d=\operatorname{dim}_{\mathcal{E}}(\mathcal{F}, \alpha)$.
\end{lemma}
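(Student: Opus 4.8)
The plan is to follow the classical Eluder-dimension potential argument. Write $w_t := \operatorname{diam}(\mathcal{F}_t|_{x_t}) = \sup_{f,f'\in\mathcal{F}_t}|f(x_t)-f'(x_t)|$ for the per-round width, so that the quantity to bound is $\sum_{t=1}^T w_t$. I would proceed in two stages: first bound, for every threshold $\epsilon \ge \alpha$, the number of rounds whose width exceeds $\epsilon$ in terms of the Eluder dimension $d=\operatorname{dim}_{\mathcal{E}}(\mathcal{F},\alpha)$ and the cumulative spread $\delta_T$; then convert this count into a bound on the sorted widths and sum them.

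For the counting stage, fix $\epsilon\ge\alpha$ and collect the rounds $t\le T$ with $w_t>\epsilon$. I would partition these rounds greedily into \emph{bins} $B_1,B_2,\dots$, each a subsequence of the $x_t$'s, maintaining the invariant that every bin is an $\epsilon$-independent sequence in the sense of the $\alpha$-independence definition. Processing the large-width rounds in their natural order, I append the current $x_t$ to the first existing bin of which it is $\epsilon$-independent, and open a new bin only when $x_t$ is $\epsilon$-dependent on all existing bins. The invariant guarantees each bin has length at most $d$, since an $\epsilon$-independent sequence with $\epsilon\ge\alpha$ is admissible in the definition of $\operatorname{dim}_{\mathcal{E}}(\mathcal{F},\alpha)$. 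The number of bins $K$ is controlled as follows: when the $K$-th bin is opened for some $x_t$, pick a witness pair $f,f'\in\mathcal{F}_t$ with $f(x_t)-f'(x_t)>\epsilon$ (which exists because $w_t>\epsilon$); $\epsilon$-dependence on each of the $K-1$ existing bins forces $\sum_{x\in B_j}(f(x)-f'(x))^2>\epsilon^2$, and summing over the disjoint bins gives $(K-1)\epsilon^2 < \sum_{s\le t}(f(x_s)-f'(x_s))^2 \le \delta_T^2$, since $f,f'\in\mathcal{F}_t$ and $\delta_T$ is by definition the largest value of $\operatorname{diam}(\mathcal{F}_t|_{x_{1:t}})$. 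Hence $K \le \delta_T^2/\epsilon^2+1$, and the number of large-width rounds is at most $Kd \le (\delta_T^2/\epsilon^2+1)d$.

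For the summation stage, sort the widths decreasingly as $w_{(1)}\ge\cdots\ge w_{(T)}$. The counting bound, read at rank $t$, says $w_{(t)}>\epsilon\Rightarrow t\le(\delta_T^2/\epsilon^2+1)d$; letting $\epsilon\uparrow w_{(t)}$ (valid whenever $w_{(t)}>\alpha$) inverts this into $w_{(t)}\le \delta_T\sqrt{d/(t-d)}$ for every rank $t>d$. I then split the sum: the top $d\wedge T$ widths are each bounded by $C$, contributing $C(d\wedge T)$; the remaining ranks contribute $\sum_{t=d+1}^T \delta_T\sqrt{d/(t-d)} \le 2\delta_T\sqrt{dT}$ via the elementary estimate $\sum_{s=1}^{m}s^{-1/2}\le 2\sqrt{m}$; and the rounds whose width never exceeds $\alpha$ contribute the lower-order additive term $\alpha$, the resolution floor of the Eluder dimension. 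Collecting the three pieces gives exactly $\alpha+C(d\wedge T)+2\delta_T\sqrt{dT}$.

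The main obstacle is the counting claim, and specifically getting the greedy binning to serve two purposes at once: the independence invariant must be strong enough that each bin is a legal $\epsilon$-independent sequence, so that the Eluder bound caps its length at $d$, while simultaneously the rule for opening a new bin must force $\epsilon$-dependence on all previous bins, so that the witness-pair argument caps the number of bins by $\delta_T^2/\epsilon^2+1$. Care is also needed that the witness pair $(f,f')$ is chosen from $\mathcal{F}_t$, so that its cumulative spread is controlled by $\delta_T$, and that the inversion step only asserts the per-rank width bound $w_{(t)}\le\delta_T\sqrt{d/(t-d)}$ for ranks $t>d$, with the small ranks handled separately through the trivial bound $C$.
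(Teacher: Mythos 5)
The paper never proves this lemma: it is imported verbatim as Lemma 5 of Russo and Van Roy (2014), so the only meaningful comparison is against the argument behind that citation, which is exactly what you have reconstructed. Your two stages are the standard ones and, up to the last step, they are carried out correctly: the greedy binning keeps every bin an $\epsilon$-independent sequence (hence of length at most $d$ for $\epsilon \ge \alpha$), the witness pair $f, f' \in \mathcal{F}_t$ with $f(x_t) - f'(x_t) > \epsilon$ combined with $\epsilon$-dependence on each of the $K-1$ existing bins correctly forces $(K-1)\epsilon^2 < \sum_{s \le t}(f(x_s) - f'(x_s))^2 \le \delta_T^2$, giving $\#\{t : w_t > \epsilon\} \le (\delta_T^2/\epsilon^2 + 1)d$; and the inversion $w_{(t)} \le \delta_T\sqrt{d/(t-d)}$ for ranks $t > d$ together with $\sum_{s=1}^m s^{-1/2} \le 2\sqrt{m}$ is sound. (A cosmetic point you inherit from the statement: with $\|f\|_\infty \le C$ the per-round width can be $2C$, so $C(d \wedge T)$ implicitly takes $C$ as a bound on the width rather than on the functions.)

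The genuine gap is your final accounting: you assert that the rounds whose width never exceeds $\alpha$ "contribute the lower-order additive term $\alpha$." There can be up to $T$ such rounds, each contributing up to $\alpha$, so your own split yields $\alpha T + C(d\wedge T) + 2\delta_T\sqrt{dT}$, not $\alpha + C(d\wedge T) + 2\delta_T\sqrt{dT}$, and no rearrangement recovers the bare $\alpha$: the counting machinery is silent below the scale $\alpha$, because bins at scale $\epsilon < \alpha$ are no longer capped by $\operatorname{dim}_{\mathcal{E}}(\mathcal{F},\alpha)$. Indeed the statement with a bare $\alpha$ is false as written. Take $\mathcal{X} = \{1,\dots,T\}$, $\mathcal{F} = \{0\} \cup \{0.9\alpha \cdot \mathbbm{1}\{x = i\} : i \in [T]\}$, $\mathcal{F}_t = \mathcal{F}$, and $x_t = t$: every width equals $0.9\alpha$, so the left side is $0.9\alpha T$, yet no pair of functions ever attains a gap of $\alpha$ at any point, so $\operatorname{dim}_{\mathcal{E}}(\mathcal{F}, \alpha) = 0$ and the right side collapses to $\alpha$. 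What Russo and Van Roy actually prove is the $\alpha T$ version, stated with $\alpha = T^{-1}$ so that the first term is $1$; the transcription in this paper drops the factor $T$, which is harmless downstream (the paper takes $\alpha = 1/K$, so $\alpha T = O(H)$ is lower order in every application), but your proof should derive the $\alpha T$ form — which it already does implicitly — and flag the discrepancy rather than reproduce it.
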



\end{document}